\newtheorem{thm}{Theorem}
\newtheorem{asmp}{Assumption}
\definecolor{DarkRed}{rgb}{0.5,0.1,0.1}
\definecolor{DarkBlue}{rgb}{0.1,0.1,0.5}
\let\@fnsymbol\@arabic
\begin{document}

\title{Quality Not Quantity: On the Interaction between\\ Dataset Design and Robustness of CLIP}
\date{}
\author{
    \hspace{1cm}
    Thao Nguyen\thanks{University of Washington \texttt{\{thaottn, gamaga, mitchnw, sewoong, schmidt\}@cs.washington.edu}}
    \and
    \hspace{0.4cm}
    Gabriel Ilharco\footnotemark[1]
    \and
    \hspace{0.4cm}
    Mitchell Wortsman\footnotemark[1]
    \hspace{0.4cm}
    \and
    Sewoong Oh\footnotemark[1]
    \and
    Ludwig Schmidt\footnotemark[1] \textsuperscript{,}\thanks{Allen Institute for Artificial Intelligence} \\
}
\maketitle
\vspace{-.2cm}

\begin{abstract}
Web-crawled datasets have enabled remarkable generalization capabilities in recent image-text models such as CLIP (Contrastive Language-Image pre-training) or Flamingo, but little is known about the dataset creation processes.
In this work, we introduce a testbed of six publicly available data sources---YFCC, LAION, Conceptual Captions, WIT, RedCaps, Shutterstock---to investigate how pre-training distributions induce robustness in CLIP. We find that the performance of the pre-training data varies substantially across distribution shifts, with no single data source dominating.
Moreover, we systematically study the interactions between these data sources and find that combining multiple sources does not necessarily yield better models, but rather dilutes the robustness of the best individual data source.
We complement our empirical findings with theoretical insights from a simple setting, where combining the training data also results in diluted robustness.
In addition, our theoretical model provides a candidate explanation for the success of the CLIP-based data filtering technique recently employed in the LAION dataset.
Overall our results demonstrate that simply gathering a large amount of data from the web is not the most effective way to build a pre-training dataset for robust generalization, necessitating further study into dataset design. Code is available at \href{https://github.com/mlfoundations/clip_quality_not_quantity}{ https://github.com/mlfoundations/clip\_quality\_not\_quantity}.
\end{abstract}

\etocdepthtag.toc{mtsection}
\section{Introduction}
Large models pre-trained on web-scale datasets are becoming a cornerstone of machine learning.
For instance, the past two years have witnessed the arrival of several new models such as GPT-3 \cite{brown2020language}, Chinchilla \cite{hoffmann2022training}, and PaLM \cite{chowdhery2022palm} for natural language processing, or CLIP \cite{radford2021learning}, BASIC \cite{pham2021combined}, and Flamingo \cite{alayrac2022flamingo} for computer vision.
These models exhibit unprecedented generalization capabilities in zero-shot inference, in-context learning, and robustness to distribution shift. A key ingredient enabling their generalization performance are the large and diverse pre-training corpora that exceed previous datasets by multiple orders of magnitude. For instance, the training set of BASIC \cite{pham2021combined} contains 6.6 billion images, which is more than 1,000 times larger than the widely used ImageNet \cite{deng2009imagenet} training set from the 2012 competition containing 1.2 million images. 

Despite the central role datasets play for pre-trained models, little is known about them, especially for image-text models. The aforementioned CLIP \cite{radford2021learning}, BASIC \cite{pham2021combined}, and Flamingo \cite{alayrac2022flamingo} all rely on 
datasets internal to the respective organizations, 
which is also the case for other models such as DALL-E \cite{ramesh2021zero}, Florence \cite{yuan2021florence}, and ALIGN \cite{jia2021scaling}. In addition, research publications often provide little details on the data collection processes, e.g., the data sources or data filtering mechanisms.
Beyond clear issues such as reproducibility and the potential presence of harmful content, the opaque dataset creation practices also make it hard to identify effective methods for assembling pre-training datasets. As a result, researchers cannot build on each other’s dataset innovations, which obstructs the incremental research process that has successfully accumulated algorithm and architecture improvements in machine learning models. A more principled understanding of dataset creation will likely enable further progress in the generalization capabilities of pre-trained models.

A basic approach to dataset creation would be to simply train on \emph{all} available data of a given type.
While scaling up training sets has indeed been integral to the recent progress in large models, advances in weak and self-supervision \cite{chen2020simple, he2020momentum, he2022masked, brown2020language, devlin2018bert} have led to an abundance of potential training data.
For instance, the large LAION-5B dataset \cite{laion5b} of 5 billion image-text pairs is itself a subset of about 50 billion images from Common Crawl.
This abundance is already exceeding the amount of data models can currently be trained on within a reasonable time,\footnote{Recall that the training set for Google's largest image-text model ALIGN contains ``only'' 6.6 billion images, making it about eight times smaller than the source dataset for LAION-5B.} making the aforementioned baseline of using all available data infeasible.
Consequently, deciding \emph{what} data to train on is becoming increasingly important.




In this paper, we take a step towards a better understanding of pre-training data and investigate the impact of dataset design on the generalization capabilities of image-text models. 
Specifically, we focus on CLIP, the first large model of this kind that demonstrated remarkable robustness to multiple challenging distribution shifts. The main questions we ask are:
($i$) How much do different web data sources vary in their induced robustness?
($ii$) Do dataset combinations lead to better robustness? 
($iii$) Can filtering with an existing image-text model improve data quality?
We address these questions from both an experimental and theoretical perspective.

On the experimental side, we begin by assembling a corpus of six different datasets from the web, spanning a variety of sources including Flickr, Shutterstock, Wikipedia, Common Crawl, and Reddit.
We then measure the robustness of CLIP models trained on each dataset. In particular, we compare the zero-shot accuracy of these models on ImageNet and a set of canonical ImageNet distribution shifts. 
We find that the robustness induced by each pre-training dataset varies widely, and that sources with careful curation such as Wikipedia do not necessarily outperform those with minimal filtering.

In addition, the datasets cannot be compared along a single dimension: different datasets help with robustness to different distribution shifts.
This in turn motivates studying how combining datasets affects robustness.
We find that a model trained on two datasets does not inherit the robustness properties of both. Rather, while the model is exposed to both distributions, its robustness interpolates between that of the individual datasets.
This indicates that dataset designers must carefully combine the pre-training data in order to preserve and enhance the robustness of the resulting models.

Building on our empirical results, we introduce a theoretical model to better understand our experimental findings.
Our model is simple enough to allow for mathematical analysis and can still capture some phenomena of real-world, web-crawled datasets.
Specifically, our theoretical model also shows that combining multiple datasets dilutes the robustness of the better data distribution.
Moreover, our model provides a candidate explanation for another interesting phenomenon in the curation of pre-training data: the LAION-400M experiment \cite{schuhmann2021laion} and follow-up CLIP reproductions \cite{laion5b} demonstrated that filtering a noisy source (Common Crawl) with a CLIP model results in a dataset on which newly trained CLIP models exhibit \textit{higher} robustness to some distribution shifts.

\paragraph{Paper outline.} We first briefly review related work and the relevant background on robustness to distribution shift (Section \ref{sec:related_work}), before discussing our experimental setup in Section \ref{sec:setup}. Sections \ref{sec:exp_individual} and \ref{sec:exp_combined} then measure the robustness of CLIP induced by individual data sources and their combinations. To support these empirical results, Section \ref{sec:theoretical_analysis} presents our theoretical analysis. We conclude with future research directions in Section \ref{sec:conclusions}.

\section{Background \& Related Work} \label{sec:related_work}
\paragraph{Vision-language models.} Large vision-language models like CLIP and ALIGN have become an active area of research owing to their success on various computer vision tasks \cite{radford2021learning,jia2021scaling}.
Existing work expands their capabilities by either increasing model and dataset size \cite{ pham2021combined}, or by using additional supervision as in DeCLIP \cite{li2021supervision}, SLIP \cite{mu2021slip}, and FILIP \cite{yao2021filip}.
In contrast, we study the effect of \emph{pre-training data composition} on task performance with a focus on robustness.

In a recent work, \citet{fang2022data} showed that CLIP's robustness primarily stems from its diverse pre-training distribution and not training set size, language supervision, or contrastive loss functions.
However, \citet{fang2022data} only conducted experiments with two datasets (YFCC-15M and ImageNet-Captions), one of which contains less than one million images.
We take the insights of \citet{fang2022data} as our starting point and expand the range of pre-training datasets to six different sources, each containing at least five million images.
This enables us to study the robustness induced by various pre-training sources, and how dataset combinations affect robustness.

\begin{wrapfigure}{r}{0.4\textwidth}
\vspace{-2em}
  \begin{center}
    \includegraphics[width=0.39\textwidth]{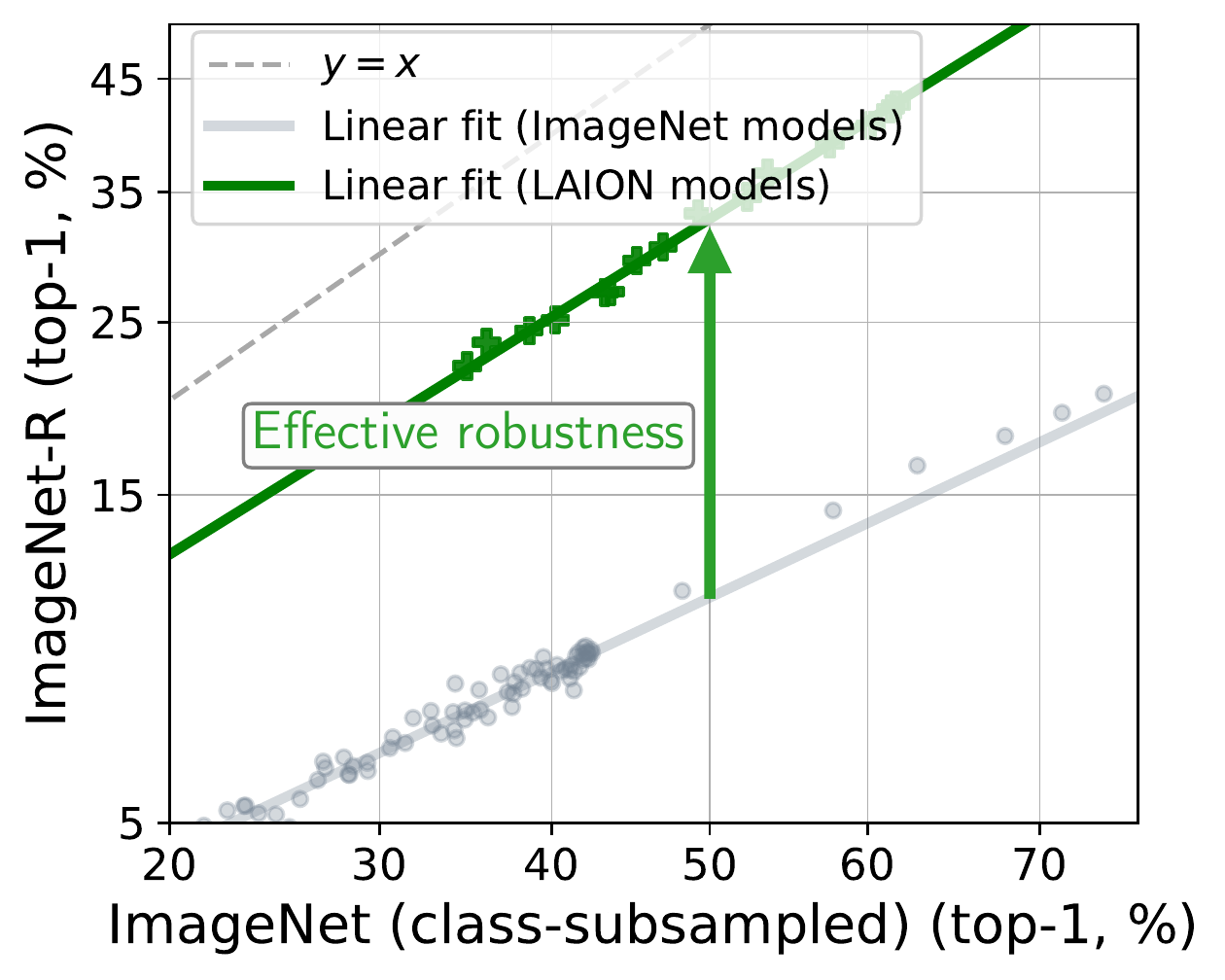}
  \end{center}
  \caption{Models pre-trained on LAION exhibit \emph{effective robustness}~\cite{taori2020measuring} compared to standard models trained on ImageNet.
Effective robustness is defined as movement towards a classifier which is robust to distribution shift.
A classifier is more robust the closer it is to the $y=x$ line.
A classifier on the $y=x$ line is not affected by the distribution shift.
  }
  \label{fig:effrobust}
  \vspace{-3.5em}
\end{wrapfigure}

\paragraph{Distribution shift.} Robustness to distribution shift is a long-standing issue in machine learning \cite{torralba2011unbiased, quinonero2008dataset} and has recently received renewed attention as researchers scrutinize the generalization performance of neural networks in greater detail \cite{szegedy2013intriguing, biggio2018wild, biggio2013evasion,arjovsky2019invariant,sagawa2019distributionally,gulrajani2020search,d2020underspecification,kruegernr021out}.
Similar to CLIP \citep{radford2021learning}, we focus on robustness to natural distribution shifts, where the corresponding test sets contain only unmodified images and are not intentionally perturbed by adversarial examples or synthetic corruptions (such as Gaussian noise or blur patterns)~\cite{imagenetc}.
Specifically, we test robustness on ImageNetV2 \cite{recht2019imagenet}, ImageNet-R \cite{hendrycks2021many}, ImageNet-Sketch \cite{wang2019learning}, and ObjectNet \cite{barbu2019objectnet} because prior work has established many baselines for these distribution shifts~\cite{taori2020measuring}.
Moreover, models robust to these shifts also show improved robustness on other out-of-distribution benchmarks such as WILDS \cite{koh2021wilds, wortsman2021robust}.

The robustness literature usually discusses distribution shift in terms of ``in-distribution'' and ``out-of-distribution'' data.
This terminology is natural when data from the same distribution as the ``in-distribution'' test set is used for training or fine-tuning.
However, it is unclear what counts as ``in-distribution'' when models are trained on large-scale, generic pre-training datasets that aim to improve performance on a wide variety of tasks.
To address this issue, we follow the more flexible definition of distribution shift employed in prior work~\cite{taori2020measuring,miller2021accuracy} and measure robustness as accuracy difference between two related but distinct test distributions (e.g., ImageNet and ImageNet-Sketch).
The expectation is that the shift between the two test distributions should not affect an ideal robust model, for instance because the shift does not affect the accuracy of humans labelers~\cite{shankar2020evaluating}.
\citet{taori2020measuring} defined this notion of robustness as \emph{effective robustness}.
\citet{radford2021learning} adopted the effective robustness framework in the evaluations of their CLIP model.
Effective robustness is illustrated in Figure~\ref{fig:effrobust} and measures movement towards a perfectly robust classifier which is not affected by the shift between two test distributions.

Prior work \cite{miller2021accuracy} has evaluated several hundred models and demonstrated experimentally that changes to model architecture, training set size, training algorithm, and other model-related factors do \emph{not} change effective robustness in most cases.
In contrast, changes to the pre-training \emph{distribution} can improve the effective robustness of a model.
This makes effective robustness a useful metric for evaluating the influence of pre-training data sources on robust generalization, since it removes confounders stemming from model hyperparameters and the number of training samples. 
The effect of the pre-training distribution on effective robustness is particularly pronounced when models are evaluated in a zero-shot setting: while training on a target distribution can improve accuracy on that distribution \cite{wortsman2021robust,gao2021clip,zhang2021tip,zhou2022cocoop,wortsman2022model}, this process can deteriorate robustness to distribution shift \cite{radford2021learning,wortsman2021robust,andreassen2021evolution,pham2021combined}.

The existence of a universal linear trend for accuracies on a pair of test sets has been analytically studied in \cite{miller2021accuracy}. 
For a simple binary classification model similar to our Assumption~\ref{asmp:line1} (see Section \ref{sec:theoretical_analysis}), convergence to a linear trend is shown with the deviation from the line scaling as $O(1/\sqrt{d})$.
However, the analysis in \cite{miller2021accuracy} is restricted to a specific stochastic distribution shift for the  out-of-distribution test data and a fixed classifier independent of the training data.
Hence the dependence of the slope on the training set size, variations in the training methods, and properties of the training distribution cannot be described in their model.
We provide significantly more fine-grained analyses (Theorem~\ref{thm:line}) that captures all such tradeoffs, which allows us to draw novel insights into how data mixing  (Theorem~\ref{thm:mix}) and filtering (Theorem~\ref{thm:filter}) affect model robustness. 

\section{Experiment Setup}\label{sec:setup}
\textbf{Model.} We focus on the CLIP model \cite{radford2021learning} which has demonstrated unprecedented zero-shot performance on a wide range of downstream tasks, as well as robustness to various distribution shifts \cite{miller2021accuracy}.
Given an image-text pair, CLIP is trained to maximize the cosine similarity between the embedding of the text and that of the image, relative to the similarity of unconnected image-text pairs.
We use the CLIP implementation from the OpenCLIP GitHub repository \cite{ilharco_gabriel_2021_5143773}, with ResNet-50 \cite{he2016deep} as the image encoder architecture. We vary the pre-training set size and hyperparameters such as number of epochs to obtain different accuracies on each data distribution. Due to compute constraints, the total training set size is at most 15M samples for most of our experiments.
Appendix \ref{app:training_details} contains further training details.

\textbf{Data.} To study the effects of training distributions on robust generalization, we collect several datasets from publicly available sources.
Most of these have been studied in the context of various vision and language tasks in previous work:
\begin{itemize}[leftmargin=1em,itemsep=0em,topsep=0em]
\item YFCC: We experiment with the 15M subset of the YFCC100M dataset \cite{thomee2016yfcc100m} that the original CLIP paper \cite{radford2021learning} used for dataset ablation studies.
The images and captions are collected from Flickr.
\item LAION \cite{schuhmann2021laion}: The images and corresponding alt-texts come from web pages collected by Common Crawl \cite{commoncrawl} between 2014 and 2021.
We randomly select a subset of 15M samples to experiment with, and ensure that the accompanying NSFW tags of all chosen images are `UNLIKELY'.
\item Conceptual Captions \cite{changpinyo2021conceptual}: We use CC-12M for our experiments, which consists of images and HTML alt-text from an unspecified set of web pages. 
\item RedCaps \cite{desai2021redcaps}: This dataset contains 12M examples, obtained from 350 manually curated subreddits between 2008 and 2020.
The subreddits are selected to contain a large number of image posts that are mostly photographs and not images of people.
\item Shutterstock: 15M images and captions were crawled from the Shutterstock website in 2021. 
\item WIT \cite{srinivasan2021wit}: Image-text pairs come from Wikipedia pages.
We use reference description as the source of text data and obtain 5M examples in total after filtering to include only English language examples.
\end{itemize}
Appendix \ref{app:train_data_details} contains an analysis of image and text statistics, as well as randomly selected data samples from each source.

\textbf{Evaluation.} Similar to \citet{taori2020measuring} and \citet{radford2021learning}, we choose ImageNet as the reference distribution and evaluate CLIP on four natural distribution shifts derived from ImageNet: 
\begin{itemize}[leftmargin=1em,itemsep=0em,topsep=0em]
\item ImageNet-V2 \cite{recht2019imagenet}: A reproduction of the ImageNet validation set closely following the original dataset creation process.
\item ImageNet-R \cite{hendrycks2021many}: Renditions (e.g., sculptures, paintings, etc.) for 200 ImageNet classes.
\item ImageNet-Sketch \cite{wang2019learning}: Sketches of ImageNet class objects. 
\item ObjectNet \cite{barbu2019objectnet}: A test set of objects in novel backgrounds, rotations, and viewpoints with 113 classes overlapping with ImageNet
\end{itemize}
Visualizations of random samples from each distribution shift can be found in Appendix \ref{app:test_data_details}.

\section{Individual Pre-training Data Sources}
\label{sec:exp_individual}
\begin{figure}[h]
\begin{center}
\includegraphics[trim=0 0 0 0,clip,width=0.95\linewidth]{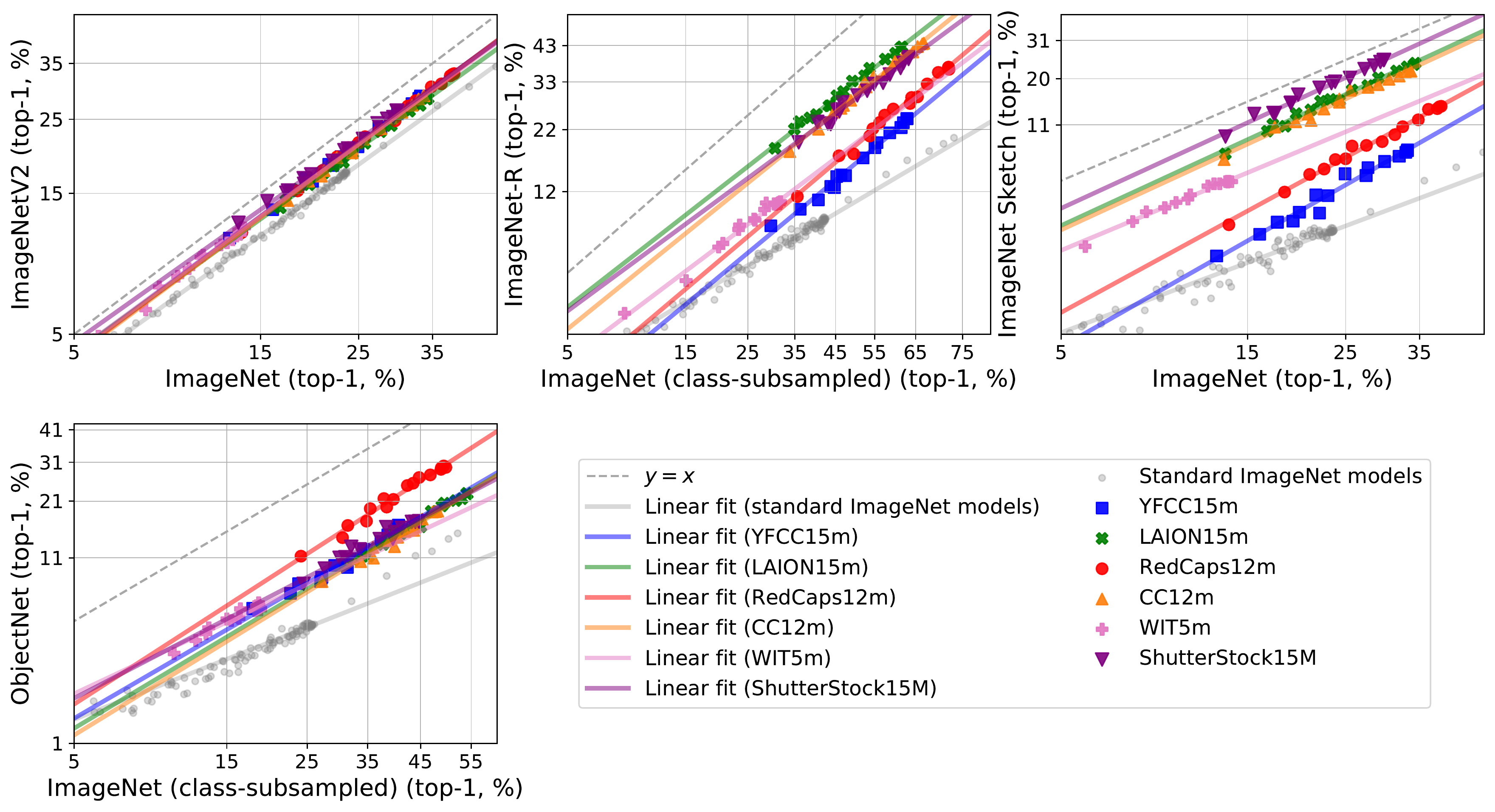}
\end{center}
\caption{\small \textbf{Performance of the six pre-training data sources under various distribution shifts.} We find that the behavior---both in terms of accuracy and the slope of the linear trend---of the pre-training data varies substantially across distribution shifts, with no single data source dominating. Most shifts help highlight the strengths and weaknesses of different data sources, except for ImageNet-V2, where the linear trends produced by individual sources are highly correlated with one another.
}
\label{fig:single_source}
\end{figure}
We first investigate how well a CLIP model trained on each data source would perform under different distribution shifts of interest. As seen from Figure \ref{fig:single_source}, while all sources yield the same linear trend on ImageNet-V2, some display clear advantages when evaluated on other test distributions. For example, Shutterstock offers the best out-of-distribution performance on ImageNet-Sketch, while RedCaps displays the most effective robustness on ObjectNet. On ImageNet-R, LAION, CC12M and Shutterstock seem to do much better than the rest. Overall no pre-training data distribution is consistently the most robust across all evaluation settings.

We also measure the data efficiency of each source, i.e., how much the performance would change with more samples from the same source, see Appendix \ref{app:single_source}. Similar to the previous observation, the six pre-training sources display vastly different data efficiency depending on the distribution shifts of interest. Although LAION and CC-12M exhibit similar effective robustness in all evaluation settings in Figure \ref{fig:single_source}, this analysis reveals subtle differences between these two training distributions in the low-data regimes.

\section{Combining Data Sources} \label{sec:exp_combined}
In the previous section, we demonstrate the variability in behavior of different data sources based on the distribution shift at test time. A natural question then arises from this observation: does combining multiple sources help improve robustness across some, if not all, test distributions of interest? We investigate two common approaches of aggregating information from different training sets---input mixing and output ensembling. In the subsequent discussion, we focus on distribution shifts that bring out significant differences in behavior across the pre-training data sources (e.g., ImageNet-R and ImageNet-Sketch); the full results on all distribution shifts can be found in Appendices \ref{app:input_mixing} and \ref{app:output_mixing}.

\subsection{Input Mixing} \label{sec:input_mixing}
In input mixing, we randomly select and combine samples from multiple data sources to build the pre-training dataset. Our findings indicate that this exposure to more training distributions doesn't help CLIP take advantage of the complimentary strengths of each source. Rather, the effective robustness of the resulting model is less than that of training on the best individual source for each distribution shift.
\begin{figure}[]
\begin{center}
\includegraphics[trim=0 0 0 0,clip,width=\linewidth]{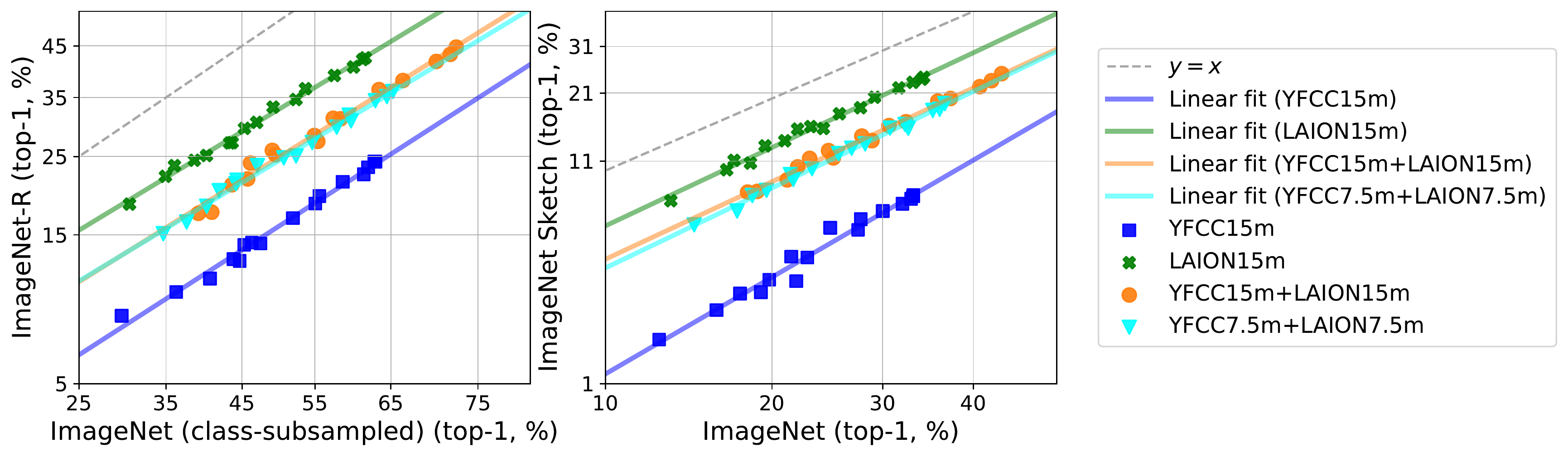}
\end{center}
\caption{\small \textbf{Combining YFCC and LAION training data in equal ratios produces models with intermediate robustness.} Given a fixed data budget of 15M samples, the linear trend produced by training CLIP on a YFCC-LAION data mixture, with 7.5M datapoints from each source (cyan line), lies between that of training CLIP on YFCC (blue line) and LAION (green line) entirely. Even when we increase the total training set size (30M) and use all data available from both sources (orange line), the same pattern persists.
}
\label{fig:yfcc_laion_input_mix}
\end{figure}
\begin{figure}[]
\begin{center}
\includegraphics[trim=0 0 0 0,clip,width=\linewidth]{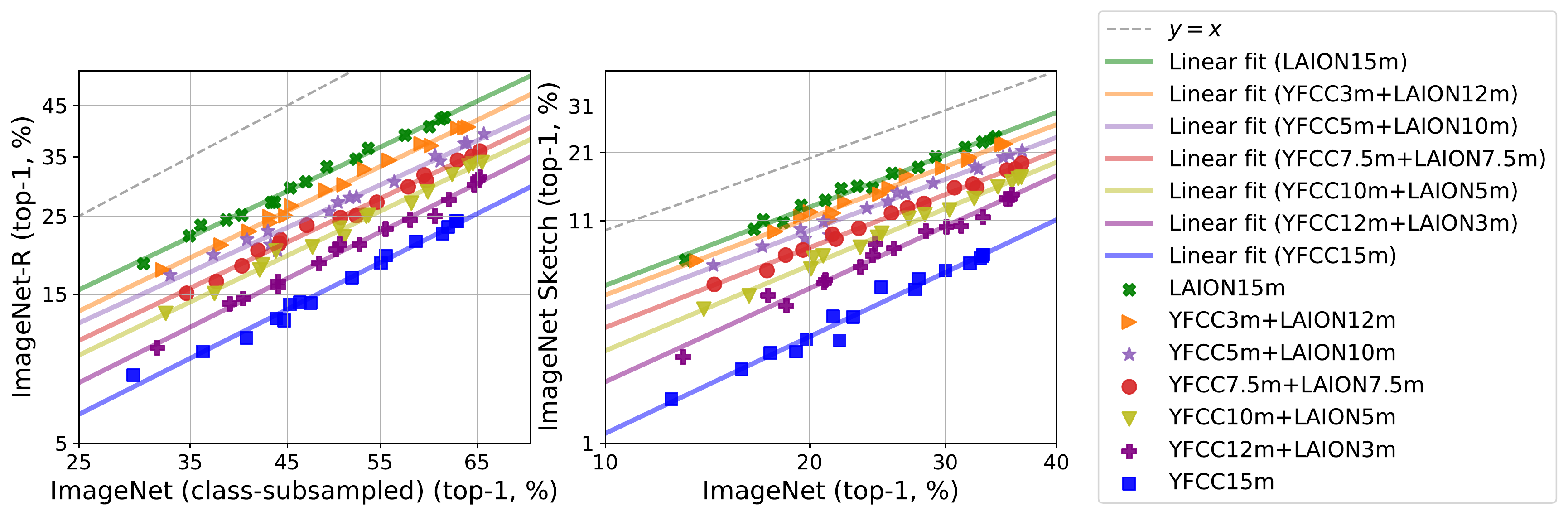}
\end{center}
\caption{\small \textbf{Varying the sample contributions of YFCC and LAION to the input data mixture produces a smooth interpolation of the linear trend between those of training on YFCC and LAION separately.} Keeping the total number of training samples fixed at 15M, as we vary the contribution of YFCC to the dataset mixture from 15M (i.e., only training on YFCC) to 0M (i.e., only training on LAION), the resulting linear trend gradually shifts from that of YFCC-15M (blue line) to that of LAION-15M (green line).
}
\label{fig:yfcc_laion_data_availability}
\end{figure}

We start with combining data from two largest data sources in the testbed---YFCC and LAION. To remove dataset size as a confounder, we fix the total amount of training data at 15M samples, and sample 7.5M image-text pairs from each source. This data mixture yields a linear trend that lies in between the trends obtained from training on 15M YFCC and 15M LAION datapoints separately (Figure \ref{fig:yfcc_laion_input_mix}). Even when we remove the constraint on the training set size and use all the data available from these two sources (i.e., 30M samples), the same observation on robustness holds, and the resulting linear trend is highly correlated with that of training on the YFCC-7.5M + LAION-7.5M mixture.

The previous set of experiments combines YFCC and LAION data with a 50:50 ratio. In Figure \ref{fig:yfcc_laion_data_availability}, we find that when this ratio is varied within a fixed budget of 15M datapoints, the robustness linear trends of the corresponding mixtures form a smooth interpolation between the linear trends of training on 15M YFCC and 15M LAION samples separately. Experiments with different combinations of sources, as well as mixtures of a larger number of sources, can be found in Appendix \ref{app:input_mixing}. There we also include results for combining data from CIFAR-10 and CINIC-10 distributions to train ResNets on image classification tasks, where we observe the same interpolation pattern. We provide a theoretical justification for this phenomenon in Section \ref{sec:mixing_theory}.

\subsection{Output Mixing} \label{sec:output_mixing}
\begin{figure}[]
\begin{center}
\includegraphics[trim=0 0 0 0,clip,width=\linewidth]{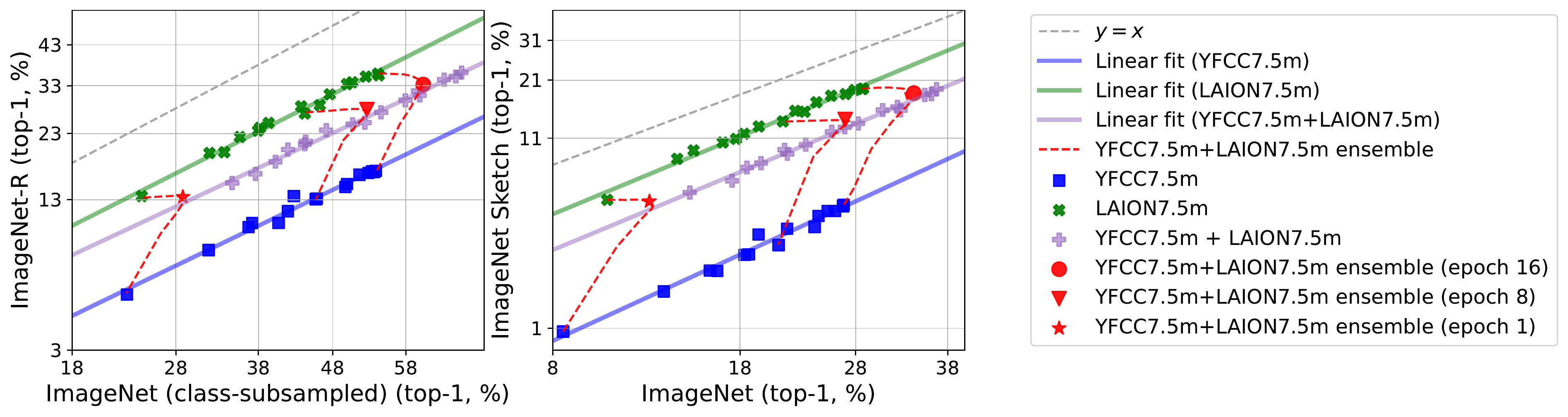}
\end{center}
\caption{\small \textbf{Ensemble outputs of CLIP models trained on YFCC and LAION separately share the same linear trend as a single model trained on the combined data mixture (where each source contributed equally).} We ensemble the logit predictions of YFCC-trained (blue line) and LAION-trained (green line) models taken from the same epoch, with varying ensemble weights between 0 and 1 (red dashed line). When the outputs are combined with equal weights (red markers), the resulting test accuracies closely track the linear trend produced by pre-training CLIP on a data mixture with equal number of samples from each source (purple line).
}
\label{fig:yfcc_laion_output_mix}
\end{figure}

\begin{figure}[]
\begin{center}
\includegraphics[trim=0 0 0 0,clip,width=\linewidth]{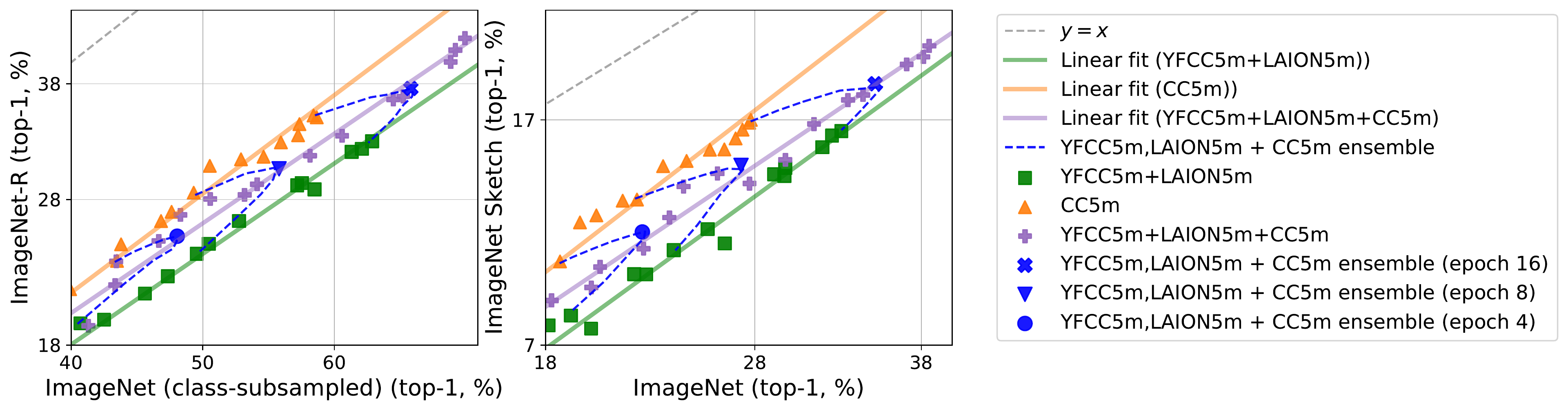}
\end{center}
\caption{\small \textbf{Using ensemble outputs to predict the linear trend of input mixing without retraining CLIP from scratch.} A generalization of the observation made in Figure \ref{fig:yfcc_laion_output_mix} is that given an existing pre-training dataset that could be a mixture (e.g., YFCC-5M + LAION-5M, green line) and a new data source (e.g., CC-5M, orange line), we could use the ensemble outputs (blue markers) of two CLIP models that have been trained separately on these two data distributions, to estimate where the linear trend for a CLIP model trained on \textit{all} the data would lie (purple line). This removes the need to actually train CLIP from scratch on the now bigger 3-source mixture.
}
\label{fig:yfcc_laion_cc_output_mix}
\end{figure}
Another common approach to take advantage of different training sets is to combine them at model output level (i.e., ensemble)
\cite{dietterich2000ensemble, deepensembles, gontijo2021no, fort2019deep, bauer1999empirical, breiman1996bagging, FREUND1997119, nixon2020why}.
Here, we train a CLIP model on each pre-training distribution of interest and combine the logit predictions of all the resulting models with equal weights. In experiments with mixing YFCC and LAION (Figure \ref{fig:yfcc_laion_output_mix}), the ensembles of 2 single-source-trained CLIP models taken from the same epoch of training, lie on the linear trend of a \textit{single} CLIP model trained on the combined data. This holds across different distribution shifts that we consider. The same observation also applies when we ensemble 6 CLIP models trained separately on the 6 data sources collected, with the contribution of each model being weighted equally. Refer to Appendix \ref{app:output_mixing} for more details.

We next show that this phenomenon extends to more complex mixtures.
The fact that output mixing (i.e., ensembling the predictions of CLIPs trained on individual sources) is predictive of the linear trend produced by input mixing (i.e., training on \textit{all} the data from these sources) presents an opportunity to estimate model robustness given new data sources, without having to train the model from scratch on the new, potentially much larger, combined dataset. For example, in Figure \ref{fig:yfcc_laion_cc_output_mix}, assuming access to a CLIP model trained on the YFCC-5M + LAION-5M mixture, and another one trained on CC-5M, the ensembles of these two models across different stages of training share the same linear trend as a CLIP model trained on the YFCC-5M + LAION-5M + CC-5M mixture. In Appendix \ref{app:output_mixing}, we show the results of ensembling 6 CLIP models trained with the 6 data sources we collected, as well as the CINIC-10 + CIFAR-10 ensemble, a uni-modal image classification setting where output mixing accuracies are also predictive of the linear trend of input mixing.
\section{Analysis under Simple Binary Classification Models}
\label{sec:theoretical_analysis}
Empirically we observe, for e.g., in Figure~\ref{fig:single_source}, that for a pair of test datasets $({\cal D}_1,{\cal D}_2)$, the corresponding test accuracies after probit transform achieved by various trained models (including different architectures and training algorithms) on the same training dataset ${\cal D}$ all lie on the same line. Furthermore, this line includes models trained on only a subset of ${\cal D}$ \cite{miller2021accuracy}. In other words, there is a universal line that is determined only by the training data distribution and the two test distributions, and is independent of which architecture we use, how we train the model, or how many samples we use from the training set. 
To explain this phenomenon, we study the following class of trained models parametrized by $(\theta\in{\mathbb R}^d,\rho\in{\mathbb R}_+)$ representing the training distribution, with $n$ representing the training data size, and $\xi\in{\mathbb R}_+$ representing the variations due to the training algorithm.

\subsection{Universality of Accuracy on the Line for Binary Classification}
\label{sec:universal_line}
We analyze a simple but canonical binary classification example where each training data is parametrized by its ground-truth linear classifier $\theta\in{\mathbb R}^d$ and its Signal-to-Noise Ratio (SNR) $\rho^2\in{\mathbb R}_+$. Concretely, a training dataset ${\cal D}_{n,\theta,\rho}=\{(x_i\in{\mathbb R}^d,y_i\in{\pm 1})\}_{i=1}^n$ is a set of $n$ i.i.d.~paired samples from a joint distribution $P_{\theta,\rho}$ defined as follows.

\begin{asmp}[Data distribution]
\label{asmp:line1}
We define a joint distribution $(x_i,y_i)\sim P_{\theta,\rho}$ as $(i)$ $y_i = \pm1$ uniformly at random, and $(ii)$ $x_i = y_i \theta + (\|\theta\|/\rho) z_i$ where the noise $z_i$ is zero-mean and has independent entries with variance one. For an  observation $(x_i,y_i)$, we refer to  $\|\theta\|^2$ as the signal power and  $\|\theta\|^2/\rho^2$ as the corresponding noise power with  SNR $\rho^2$. 
\end{asmp}
\begin{asmp}[Trained model distribution]
\label{asmp:line2}
We consider a (random) linear model parameter $\hat\theta_{n,\xi}\in{\mathbb R}^d$ that predicts a binary label ${\rm sign}(\langle x,\hat\theta_{n,\xi} \rangle)$ for a test example $x\in{\mathbb R}^d$. 
We assume that the random model 
$\hat \theta_{n,\xi} = \theta + (\xi\|\theta\|/(\rho \sqrt{n})) z\in{\mathbb R}^d$, trained on ${\cal D}_{n,\theta,\rho}$  is unbiased,  ${\mathbb E}[\hat\theta_{n,\xi}]=\theta$, and that $z$ has independent entries with variance one each. The randomness  comes from the training data as well as any internal randomness in the training algorithm. The  parameter $\xi\in{\mathbb R}_+$ captures the variations in the resulting model distribution due to changes the training algorithm. 
\end{asmp}
Concretely, one canonical example of a trained model is 
$\hat\theta_{n,\xi} =(1/n)\sum_{i=1}^n y_ix_i $. It follows that $\hat\theta_n = \theta+(\|\theta\|/(\rho\sqrt{n}))z$ with  $\xi=1$. Hence, $\xi$ measures the randomness of the trained model relative to this simple training algorithm. 

We analyze the resulting accuracy when evaluated on two test distributions $P_{\theta_1,\rho_1}$ and $P_{\theta_2,\rho_2}$ as defined above, with ${\rm Acc}_{\theta_1,\rho_1}:=   P_{\theta_1,\rho_1}  \{ {\rm sign}(\langle X,\hat \theta_{n,\xi} \rangle ) = Y \} $, and ${\rm Acc}_{\theta_2,\rho_2}$ defined similarly. 
In particular, we are interested in how the accuracy pair  $(\Phi^{-1}({\rm Acc}_{\theta_1,\rho_1}),\Phi^{-1}({\rm Acc}_{\theta_2,\rho_2}))$ behaves as we vary the sample size $n$ and as we vary  the training algorithm represented by $\xi$. 
Here, $\Phi^{-1}:[0,1]\to {\mathbb R}$ is the inverse of the CDF of a standard Gaussian distribution, defined as $\Phi(t)={\mathbb P} (z\leq t) $ where $z\sim {\cal N}(0,1)$. $\Phi^{-1}$ is also called the probit function. This choice of mapping the accuracy with the probit function is  critical in getting the linear relation, which we will explain in Remark~1. 

\begin{thm}[Universality of accuracy on the line] Under Assumptions~\ref{asmp:line1} and \ref{asmp:line2}, 
asymptotically as the dimension $d$ grows linearly in the sample size $n$ such that $\lim_{d\to \infty} n/d = \alpha^2$, we have 
    \begin{eqnarray} 
    \label{eq:lim}
     \lim_{d\to\infty} {\rm Acc}_{\theta_1,\rho_1}  = \Phi\Big(\, \cos(\theta_1,\theta)\frac{\rho_1\rho \alpha}{\xi}  \,\Big)  \;,  \text{ and } \lim_{d\to\infty}
     {\rm Acc}_{\theta_2,\rho_2} = \Phi\Big( 
     \cos(\theta_2,\theta)\frac{\rho_2\rho\alpha}{\xi}    \,\Big)  \;.
     \end{eqnarray}
Further,
under Assumption~\ref{asmp:moment} in the Appendix, for some universal constant $c>0$,
\begin{eqnarray}
    \label{eq:line}
    \Phi^{-1}({\rm Acc}_{\theta_2,\rho_2}) = \frac{\cos(\theta_2,\theta)\rho_2}{\cos(\theta_1,\theta)\rho_1} \Phi^{-1}({\rm Acc}_{\theta_1,\rho_1}) + O\Big( \frac{e^{\frac{c n}{d}}}{\sqrt{n}} \Big)\;.
\end{eqnarray}\label{thm:line} 
\end{thm}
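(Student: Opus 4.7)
The plan is to condition on the trained parameter $\hat\theta_{n,\xi}$ and analyze the accuracy in two stages: a central limit theorem on the test noise at fixed $\hat\theta_{n,\xi}$, followed by concentration of the relevant inner products and norms of $\hat\theta_{n,\xi}$ in high dimension. For the first stage, a test sample $x = y\theta_i + (\|\theta_i\|/\rho_i)\,z_*$ with $z_*$ independent of $\hat\theta_{n,\xi}$ is correctly classified iff $\langle\theta_i,\hat\theta_{n,\xi}\rangle + y(\|\theta_i\|/\rho_i)\langle z_*,\hat\theta_{n,\xi}\rangle > 0$. Conditioned on $\hat\theta_{n,\xi}$, the variable $\langle z_*,\hat\theta_{n,\xi}\rangle/\|\hat\theta_{n,\xi}\|$ is a normalized sum of $d$ independent mean-zero unit-variance coordinates, so a Berry--Esseen bound (for which Assumption~\ref{asmp:moment} should supply the required moment control on $z_*$) gives $\mathbb{P}(\text{correct}\mid\hat\theta_{n,\xi}) = \Phi(\rho_i\cos(\theta_i,\hat\theta_{n,\xi})) + O(1/\sqrt{d})$, reducing the asymptotic question to that of the scalar $\cos(\theta_i,\hat\theta_{n,\xi})$.

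For the second stage, I would write $\hat\theta_{n,\xi} = \theta + (\xi\|\theta\|/(\rho\sqrt{n}))\,z$ per Assumption~\ref{asmp:line2}. Then $\langle\theta_i,\hat\theta_{n,\xi}\rangle = \langle\theta_i,\theta\rangle + O_P(\xi\|\theta\|\|\theta_i\|/(\rho\sqrt{n}))$ since $\langle\theta_i,z\rangle$ has variance $\|\theta_i\|^2$, while $\|\hat\theta_{n,\xi}\|^2 = \|\theta\|^2 + (\xi^2\|\theta\|^2/(\rho^2 n))\|z\|^2$ plus a negligible cross-term, and $\|z\|^2/d \to 1$ in the joint limit $n/d \to \alpha^2$. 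Substituting gives $\cos(\theta_i,\hat\theta_{n,\xi}) \to \cos(\theta_i,\theta)/\sqrt{1 + \xi^2/(\rho^2\alpha^2)}$, which in the noise-dominated regime $\xi^2 \gg \rho^2\alpha^2$ matches the claimed $\cos(\theta_i,\theta)\rho\alpha/\xi$ and yields \eqref{eq:lim}. For the linear relation \eqref{eq:line}, the key observation is that both limiting accuracies have the common form $\Phi(c_i K)$ with $c_i := \rho_i\cos(\theta_i,\theta)$ and a scalar $K$ depending only on $\xi,\rho,\alpha$ but not on $i$; applying $\Phi^{-1}$ kills $K$ and produces the slope $c_2/c_1$ asserted in the theorem.

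The main obstacle is the sharp non-asymptotic error $O(e^{cn/d}/\sqrt{n})$. The concentration inputs above yield additive errors of order $1/\sqrt{n}$ inside the argument of $\Phi(\cdot)$; transferring them through $\Phi^{-1}$ introduces a multiplicative factor $1/\phi(t)$ evaluated near the typical argument $t \approx c_i K$. Since $K \sim \rho\alpha/\xi$ we have $t^2 = O(n/d)$, and hence $1/\phi(t) = \sqrt{2\pi}\,e^{t^2/2} = e^{O(n/d)}$, producing precisely the $e^{cn/d}$ blow-up. Making this estimate uniform is the delicate part: it requires sub-Gaussian or sub-exponential tails for $\|z\|^2 - d$ and $\langle\theta_i,z\rangle$ (the role of Assumption~\ref{asmp:moment}), a quantitative Taylor remainder for $\Phi^{-1}$ valid over the range of accuracies occurring with high probability, and a careful truncation argument so that the rare events on which the Gaussian approximation or the concentration of $\|\hat\theta_{n,\xi}\|$ degrades do not spoil the expectation defining ${\rm Acc}_{\theta_i,\rho_i}$.
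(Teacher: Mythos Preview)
Your proposal is correct and arrives at the same conclusion, but the decomposition differs from the paper's. The paper applies a \emph{single} Berry--Esseen bound directly to the joint inner product
\[
\langle \hat\theta_{n,\xi}, XY\rangle \;=\; \sum_{j=1}^d \big(\theta_j + \tfrac{\xi\|\theta\|}{\rho\sqrt{n}}\,z_j\big)\big(\theta_{1,j} + \tfrac{\|\theta_1\|}{\rho_1}\,z_{1,j}\big),
\]
treating training noise $z$ and test noise $z_1$ together as one $d$-term sum of independent coordinate-wise summands; Assumption~\ref{asmp:moment} supplies the third-moment control and the $O(1/\sqrt{d})$ Berry--Esseen error, after which applying $\Phi^{-1}$ produces the $e^{cn/d}/\sqrt{n}$ remainder exactly as you anticipated via the $1/\phi$ derivative. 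Your two-stage route (condition on $\hat\theta_{n,\xi}$, then CLT on test noise, then concentrate $\cos(\theta_i,\hat\theta_{n,\xi})$) is more modular and makes the separate roles of training and test randomness explicit, but it requires an additional concentration argument for $\|\hat\theta_{n,\xi}\|^2$ and $\langle\theta_i,z\rangle$ that the paper's one-shot CLT sidesteps entirely---hence the tail and truncation issues you flag in your last paragraph simply do not arise in the paper's argument, and only third moments are needed. You also correctly observe that the exact limit is $\rho_i\cos(\theta_i,\theta)/\sqrt{1+\xi^2/(\rho^2\alpha^2)}$ rather than the stated $\cos(\theta_i,\theta)\rho_i\rho\alpha/\xi$; the paper's proof silently keeps only the dominant variance term $\xi^2 d/(\rho^2\rho_1^2 n)$, but since the common factor cancels in the ratio, the slope in \eqref{eq:line} is unaffected either way.
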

We provide a proof in Appendix~\ref{sec:line_proof}.  
This analysis implies that for any training sample size $n$ and any (variation due to the) training algorithm $\xi$, the resulting accuracy pair after probit transform lies on a {\em universal} line determined by Eq.~\eqref{eq:line}, and the slope 
only depends on the two test distributions $(P_{\theta_1,\rho_1},P_{\theta_2,\rho_2})$ and the training distribution $P_{\theta,\rho}$.  
The similarity between the training data and each test dataset is captured by the angles: $\cos(\theta_1,\theta)$ and $\cos(\theta_2,\theta)$. More similar training data achieves a larger test accuracy. The hardness of the test distribution is captured by the SNR of each test dataset: $\rho_1$ and $\rho_2$. We emphasize that this line is universal in the sense that the slope does not depend on the sample size $n$ and training algorithm parameter $\xi$. 
We are interested in the regime where $n$ scales linearly in $d$ and both are large such that the second term in the above equation is negligible. 

\noindent{\em Remark 1. Why do we get the accuracy-on-the-line phenomenon?} 
The prediction of the  trained (random) linear model on a random test data point $X$ involves the inner product, which performs a natural spatial averaging over the $d$ coordinates. Since the noise across the coordinates is  independent and has bounded variance, 
 the central limit theorem applies. The resulting error has a Gaussian tail. 
 Hence, the probit mapping, $\Phi^{-1}({\rm Acc}_{\theta_1\rho_1})$, is critical in translating accuracy into the relevant mean to standard deviation ratio: $\cos(\theta_1,\theta)\rho_1\rho\sqrt{n}/(\xi\sqrt{d})$. This is consequently important for getting the universal linear relation, because irrelevant parameters, $n$ and $\xi$, cancel out in the slope of $\Phi^{-1}({\rm Acc}_{\theta_2\rho_2})/\Phi^{-1}({\rm Acc}_{\theta_1\rho_1})$. 
 Refer to the proof of Theorem~\ref{thm:line} (Appendix \ref{sec:line_proof}) for more details.

\noindent{\em Remark 2. Intersection at the random guess:} \label{remark:random_guess} 
Previous work \cite{miller2021accuracy} that considered models with a wide range of accuracies has found that all lines intersect at a point corresponding to ``random guess", which in this binary example is $(\Phi^{-1}(1/2),\Phi^{-1}(1/2))=(0,0)$. Our theoretical analysis is consistent with this empirical observation: all universal lines intersect at $(0,0)$ up to a small additive error scaling as  $O(1/\sqrt{n})$.

\subsection{Input Mixing Yields an Intermediate Slope}
\label{sec:mixing_theory} 
Figures~\ref{fig:yfcc_laion_input_mix} and \ref{fig:yfcc_laion_data_availability} show that when a model is trained on samples combined from two datasets, the resulting robustness trend lies in between the trends achieved by individual datasets. We show that this finding is universally true under our current setup in Theorem~\ref{thm:mix}. 
Note that for the linear models we consider, input mixing (combining training sets) and output mixing (combining model outputs) are equivalent. 

\begin{asmp}
\label{asmp:mix}
Consider two training datasets ${\cal D}_{n_1,\tilde\theta_1,\tilde\rho_1}$ and ${\cal D}_{n_2,\tilde\theta_2,\tilde\rho_2}$ of sizes $n_1$ and $n_2$ from distributions $P_{\tilde\theta_1,\tilde\rho_1}$ and $P_{\tilde\theta_2,\tilde\rho_2}$ as defined in Assumption~\ref{asmp:line1}. 
Separately training on individual datasets gives two models $\hat\theta_{n_1,\xi_1}({\cal D}_{n_1,\tilde\theta_1,\tilde\rho_1})$ and $\hat\theta_{n_2,\xi_2}({\cal D}_{n_2,\tilde\theta_2,\tilde\rho_2})$ as defined in Assumption~\ref{asmp:line2}.  
A model trained on a combined (and possibly subsampled) training dataset ${\cal D}_{n_1',\tilde\theta_1,\tilde\rho_1}\cup {\cal D}_{n_2',\tilde\theta_2,\tilde\rho_2}$ is represented by $\hat\theta_{n_1'+n_2',\xi}({\cal D}_{n_1',\tilde\theta_2,\tilde\rho_2}\cup {\cal D}_{n_2'\tilde\theta_2,\tilde\rho_2})=\bar\theta+(\xi\|\bar \theta\|/(\bar\rho \sqrt{n_1'+n_2'}) )z$ where $\bar\theta=(n_1'\tilde\theta_1+n_2'\tilde\theta_2) /(n_1'+n_2')$, $\bar\rho=\|\bar\theta\|/\sqrt{(n_1'\|\tilde\theta_1\|^2/\rho_1^2)+(n_2'\|\tilde \theta_2\|^2/\rho_2^2)} $, and $z\in{\mathbb R}^d$ is a zero-mean random vector with independent entries each with variance one.
\end{asmp}
Again, a canonical  example of a trained model is 
$\hat\theta_{n_1'+n_2',\xi} =(1/(n_1'+n_2'))\big( \sum_{(x_i,y_i)\in{\cal D}_{n_1',\tilde\theta_1,\tilde\rho_1}} y_ix_i + \sum_{(x_i,y_i)\in{\cal D}_{n_2',\tilde\theta_2,\tilde\rho_2}} y_ix_i    \big)$. We then have $\hat\theta_{n_1'+n_2',\xi} = \bar\theta+(\|\bar\theta\|/(\bar\rho\sqrt{n_1'+n_2'}))z$ with  $\xi=1$.\\
It follows from applying Theorem~\ref{thm:line} to the model trained on the combined data (Assumption~\ref{asmp:mix}) that the resulting linear trend achieves 
\begin{eqnarray} 
\label{eq:mix_slope}
{\rm Slope}(\hat\theta_{n_1'+n_2',\xi}({\cal D}_{n_1',\tilde\theta_1,\tilde\rho_1}\cup{\cal D}_{n_2',\tilde\theta_2,\tilde\rho_2})) \; \;:=\; \; \frac{\cos(\theta_2,\bar\theta)\rho_2 }{\cos(\theta_1,\bar\theta)\rho_1} \;, 
\end{eqnarray}
We show that the slope obtained from training on the combined dataset lies between those obtained from training on the two datasets separately. The proof could be found in Appendix~\ref{sec:mix_proof}. Let ${\rm Slope}_1:={\rm Slope}(\hat\theta_{n_1}({\cal D}_{n_1,\tilde\theta_1,\tilde\rho_1}))$ and ${\rm Slope}_2$ be defined similarly. 
\begin{thm}
Under Assumption~\ref{asmp:mix},  
${\rm Slope}_1 \leq 
        {\rm Slope}\big(\,\hat\theta_{n_1'+n_2'}({\cal D}_{n_1',\tilde\theta_1,\tilde\rho_1} \cup {\cal D}_{n_2',\tilde\theta_2,\tilde\rho_2})\,\big) \leq {\rm Slope}_2$.
    \label{thm:mix} 
\end{thm}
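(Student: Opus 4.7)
The plan is to reduce Theorem~\ref{thm:mix} to the classical mediant inequality. By Eq.~\eqref{eq:mix_slope}, applied to the combined model and (as special cases) to the two individual models, the three slopes are
\[
{\rm Slope}(\hat\theta_{n_1'+n_2'}) = \frac{\rho_2}{\rho_1}\cdot\frac{\cos(\theta_2,\bar\theta)}{\cos(\theta_1,\bar\theta)}, \qquad {\rm Slope}_i = \frac{\rho_2}{\rho_1}\cdot\frac{\cos(\theta_2,\tilde\theta_i)}{\cos(\theta_1,\tilde\theta_i)} \quad (i=1,2),
\]
where, by Assumption~\ref{asmp:mix}, $\bar\theta = \alpha\tilde\theta_1 + \beta\tilde\theta_2$ is the convex combination with weights $\alpha := n_1'/(n_1'+n_2') \geq 0$ and $\beta := n_2'/(n_1'+n_2') \geq 0$. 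Note in particular that the training SNRs $\tilde\rho_1,\tilde\rho_2$ and the trained-model parameters $\xi_1,\xi_2$ have already dropped out, as predicted by the universality statement of Theorem~\ref{thm:line}.

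The first step is to strip away the factors common to all three slopes. The ratio $\rho_2/\rho_1$ is shared; within each cosine ratio, the norms $\|\theta_1\|,\|\theta_2\|$ combine into a common positive multiplicative constant, and the norm of the middle argument ($\|\bar\theta\|$ or $\|\tilde\theta_i\|$) cancels between numerator and denominator. So the three slopes are proportional, via a single positive constant, to
\[
\frac{a_1}{b_1},\qquad \frac{\alpha a_1+\beta a_2}{\alpha b_1+\beta b_2},\qquad \frac{a_2}{b_2},
\]
with $a_i := \langle\theta_2,\tilde\theta_i\rangle$ and $b_i := \langle\theta_1,\tilde\theta_i\rangle$. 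Without loss of generality assume ${\rm Slope}_1 \leq {\rm Slope}_2$, i.e., $a_1/b_1 \leq a_2/b_2$. The conclusion then follows from the mediant inequality: under $b_1,b_2>0$, the identity $a_2 b_1 - a_1 b_2 \geq 0$ combined with the nonnegativity of $\alpha$ and $\beta$ yields, after one line of cross-multiplication, both $a_1/b_1 \leq (\alpha a_1+\beta a_2)/(\alpha b_1+\beta b_2)$ and $(\alpha a_1+\beta a_2)/(\alpha b_1+\beta b_2) \leq a_2/b_2$.

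The only mild obstacle is justifying the sign assumption $b_i=\langle\theta_1,\tilde\theta_i\rangle>0$: one must exclude degenerate configurations in which a training direction is orthogonal or anti-correlated with the reference test direction $\theta_1$. By Eq.~\eqref{eq:lim}, strict positivity of $\cos(\theta_1,\tilde\theta_i)$ is equivalent to the reference accuracy strictly exceeding $1/2$, so it is automatic in the regime of interest and can be added as a standing non-degeneracy hypothesis; an analogous comment applies to $\cos(\theta_2,\tilde\theta_i)$ for the numerators to have a consistent sign. Modulo this observation, everything else is elementary algebra, and the argument in fact extends verbatim to any finite nonnegative combination $\bar\theta = \sum_k \gamma_k \tilde\theta_k$: the combined slope lies between the minimum and maximum of the per-source slopes.
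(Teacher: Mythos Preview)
Your proof is correct and arrives at the same conclusion, but by a different elementary route than the paper. The paper argues via monotonicity: writing $\theta(\alpha)=\alpha\tilde\theta_1+(1-\alpha)\tilde\theta_2$, it observes that $f(\alpha)=c\,\langle\theta_2,\theta(\alpha)\rangle/\langle\theta_1,\theta(\alpha)\rangle$ is a linear fractional function of $\alpha$, rewrites it as $c_1+c_2/\langle\theta_1,\theta(\alpha)\rangle$, and checks that $f'(\alpha)$ has constant sign on $[0,1]$, so $f$ is monotone and the combined slope is sandwiched between the endpoints. You instead strip the common factors to reduce directly to the mediant inequality for $(\alpha a_1+\beta a_2)/(\alpha b_1+\beta b_2)$. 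Your argument avoids calculus entirely and, as you note, extends verbatim to arbitrary nonnegative combinations $\sum_k\gamma_k\tilde\theta_k$; the paper's derivative computation, on the other hand, delivers the slightly stronger statement that the slope varies \emph{monotonically} along the interpolation, which matches the empirical smooth-interpolation picture in Figure~\ref{fig:yfcc_laion_data_availability}. Both proofs rely on the same implicit sign hypothesis $\langle\theta_1,\tilde\theta_i\rangle>0$, which you make explicit and the paper leaves tacit.
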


\subsection{Filtering Data to Improve Robustness} \label{sec:data_filtering}
The input mixing analysis in Section~\ref{sec:mixing_theory} suggests a filtering strategy to improve robustness. 

\begin{asmp} [Gaussian distributions for filtering] 
    \label{asmp:filter} 
Consider a training dataset ${\cal D}_{n,\theta_{\rm train},\rho}$ of size $n$ draw i.i.d.~from $(X,Y)\sim P_{\theta_{\rm train} , \rho }$ where $Y=\pm1$ uniformly at random  and  $X|Y\sim{\cal N}(Y \theta_{\rm train},(\| \theta_{\rm train} \|/ \rho)^2{\bf I})$. 
A model trained on an unfiltered ${\cal D}_{n,\theta_{\rm train} , \rho }$ is denoted by  
$\hat\theta_{\rm unfiltered} = (1/n)\sum_{(x_i,y_i)\in {\cal D}_{n,\theta_{\rm train} , \rho } } x_iy_i$. Note that $ \hat\theta_{\rm unfiltered}\sim {\cal N}(\theta_{\rm train}, (\|\theta_{\rm train}\|/(\rho\sqrt{n}))^2{\bf I})$. 

The model is evaluated on two test datasets: in-distribution (ID) and out-of-distribution (OOD), each with an isotropic Gaussian noise: $(X,Y)\sim P_{\theta_{\rm ID},\rho_{\rm ID}}$, where $Y=\pm1$ uniformly at random  and  $X|Y\sim{\cal N}(Y\theta_{\rm ID},(\|\theta_{\rm ID}\|/\rho_{\rm ID})^2{\bf I})$, and 
    $P_{\theta_{\rm OOD},\rho_{\rm OOD}}$ is defined similarly. Let
    $
        {\rm Slope}(\theta) := \frac{\cos(\theta_{\rm OOD},\theta)\rho_{\rm OOD}}{\cos(\theta_{\rm ID},\theta)\rho_{\rm ID}}\;
    $
     and assume that all models achieve better ID accuracy than OOD accuracy, i.e., Slope$(\theta) < 1$. A model is more robust if the slope is closer to one. 
\end{asmp} 

Suppose we have access to a pre-trained model $\hat \theta_{\rm pretrained}$ that achieves  better robustness than the  model $\hat\theta_{\rm unfiltered}$ trained on the unfiltered data, i.e., ${\rm Slope}( \hat\theta_{\rm unfiltered}) < {\rm Slope}(\hat \theta_{\rm pretrained}) \leq 1$. 
We want to use the pre-trained model to filter the data ${\cal D}
_{n,\theta_{\rm train},\rho}$ and achieve better robustness. 
We consider a generic family of filtering strategies that subsamples each data point $(x_i,y_i)\in{\cal D}_{n,\theta_{\rm train},\rho}$ with probability that is monotonically non-decreasing with its correlation to the pre-trained model, i.e.,~${\mathbb P}((x_i,y_i) \text{ passes the filter})\propto h(\langle x_iy_i, \hat\theta_{\rm pretrained}\rangle)$ for some monotonic scalar function $h : {\mathbb R}\to{\mathbb R}_+$. 
We analyze the model $\hat\theta_{\rm filtered}=(1/m)\sum_{(x_i,y_i)\in\text{filtered dataset}}x_iy_i$. 
The next theorem shows that any such filtering strategy improves robustness, and the full proof can be found in Appendix~\ref{sec:filter_proof}.  
\begin{thm}
    \label{thm:filter} 
    Under Assumption~\ref{asmp:filter}, if ${\rm Slope}(\hat\theta_{\rm unfiltered}) < {\rm Slope}(\hat\theta_{\rm pretrained})  \leq 1$ then any  model $\hat\theta_{\rm filtered}$ that is trained on the above family of filtered datasets achieves better robustness than the model trained on unfiltered data: ${\rm Slope}(\hat\theta_{\rm unfiltered}) < {\rm Slope}({\mathbb E}[\hat\theta_{\rm filtered}]) \leq {\rm Slope}(\hat\theta_{\rm pretrained})$, where $ {\rm Slope}({\mathbb E}[\hat\theta_{\rm filtered}])$ denotes the slope of the linear trend of a model trained on the filtered data.
\end{thm}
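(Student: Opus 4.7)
The plan is to reduce the claim to a structural statement about the mean ${\mathbb E}[\hat\theta_{\rm filtered}]$ and then apply a standard mediant-of-ratios inequality to the explicit Slope formula. By Theorem~\ref{thm:line}, the slope of the linear accuracy trend produced by any trained model depends on that model only through its mean direction, so it suffices to compare the three numbers ${\rm Slope}(\theta_{\rm train})$, ${\rm Slope}({\mathbb E}[\hat\theta_{\rm filtered}])$, and ${\rm Slope}(\hat\theta_{\rm pretrained})$.

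First I would compute ${\mathbb E}[\hat\theta_{\rm filtered}]$ exactly. Under Assumption~\ref{asmp:filter}, $u_i := x_iy_i \sim {\cal N}(\theta_{\rm train}, \sigma^2 {\bf I})$ with $\sigma = \|\theta_{\rm train}\|/\rho$. Let $v := \hat\theta_{\rm pretrained}/\|\hat\theta_{\rm pretrained}\|$ and decompose $u_i = \langle u_i, v\rangle\, v + u_i^{\perp}$ with $u_i^{\perp}\in v^{\perp}$. By Gaussian independence of orthogonal projections, $\langle u_i, v\rangle$ and $u_i^{\perp}$ are independent. Since the acceptance weight $\propto h(\langle u_i, \hat\theta_{\rm pretrained}\rangle)$ depends on $u_i$ only through $\langle u_i, v\rangle$, the conditional distribution of $u_i^{\perp}$ given acceptance is unchanged, so ${\mathbb E}[u_i^{\perp}\mid\text{accepted}]$ equals the orthogonal projection of $\theta_{\rm train}$ onto $v^{\perp}$. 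Writing $\langle u_i, v\rangle = \langle \theta_{\rm train}, v\rangle + \sigma w_i$ with $w_i\sim{\cal N}(0,1)$, the acceptance weight is a monotone non-decreasing function of $w_i$, so by the FKG/Chebyshev correlation inequality ${\mathbb E}[w_i \mid \text{accepted}] \geq 0$, with strict inequality whenever $h$ is not almost-surely constant on the relevant support. Combining the parallel and orthogonal pieces gives ${\mathbb E}[\hat\theta_{\rm filtered}] = \theta_{\rm train} + c\, v$ for some scalar $c \geq 0$.

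Next I would apply a mediant inequality. Because ${\rm Slope}$ is invariant under positive rescaling of its argument and ${\mathbb E}[\hat\theta_{\rm filtered}] = \theta_{\rm train} + c\,v$, we have ${\rm Slope}({\mathbb E}[\hat\theta_{\rm filtered}]) = {\rm Slope}(a\,\theta_{\rm train} + b\,\hat\theta_{\rm pretrained})$ for some $a > 0$ and $b \geq 0$. Expanding the cosine-ratio definition,
\[
{\rm Slope}(a\,\theta_{\rm train} + b\,\hat\theta_{\rm pretrained})
\;=\; K \cdot \frac{a\,\langle\theta_{\rm OOD},\theta_{\rm train}\rangle + b\,\langle\theta_{\rm OOD},\hat\theta_{\rm pretrained}\rangle}{a\,\langle\theta_{\rm ID},\theta_{\rm train}\rangle + b\,\langle\theta_{\rm ID},\hat\theta_{\rm pretrained}\rangle},
\]
where $K := (\rho_{\rm OOD}\|\theta_{\rm ID}\|)/(\rho_{\rm ID}\|\theta_{\rm OOD}\|) > 0$. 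Both denominators are positive because Assumption~\ref{asmp:filter} posits that ${\rm Slope}(\theta_{\rm train})$ and ${\rm Slope}(\hat\theta_{\rm pretrained})$ are well-defined positive numbers in $(0,1]$. The elementary mediant lemma---for positive $B_1,B_2$ the fraction $(aA_1 + bA_2)/(aB_1 + bB_2)$ lies between $A_1/B_1$ and $A_2/B_2$---then places the above quantity between ${\rm Slope}(\theta_{\rm train})$ and ${\rm Slope}(\hat\theta_{\rm pretrained})$. Combined with ${\rm Slope}(\theta_{\rm train}) < {\rm Slope}(\hat\theta_{\rm pretrained}) \leq 1$, this yields the claimed chain, with strictness on the left whenever $h$ is genuinely increasing and $\hat\theta_{\rm pretrained}$ is not collinear with $\theta_{\rm train}$, so that $c > 0$.

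The main obstacle is the structural step: establishing that the filtering shifts ${\mathbb E}[\hat\theta_{\rm filtered}]$ purely along $\hat\theta_{\rm pretrained}$ and that this shift has nonnegative magnitude. Both properties rely on the isotropy of the Gaussian features, which decouples parallel and orthogonal projections onto any fixed direction, and on the monotonicity of $h$, which through a correlation inequality forces the parallel shift to have the correct sign rather than being arbitrarily signed. Once this structural result is in hand, the remainder is routine algebra with the Slope formula and the mediant identity.
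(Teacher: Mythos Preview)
Your proposal is correct and follows essentially the same approach as the paper: decompose $x_iy_i$ along and orthogonal to $\hat\theta_{\rm pretrained}$, use Gaussian isotropy to keep the orthogonal mean unchanged and monotonicity of $h$ to shift the parallel mean nonnegatively, concluding ${\mathbb E}[\hat\theta_{\rm filtered}] = \theta_{\rm train} + c\,\hat\theta_{\rm pretrained}$ with $c\geq 0$. The only cosmetic difference is that the paper then invokes Theorem~\ref{thm:mix} to sandwich the slope, whereas you reprove that content directly via the mediant-of-ratios inequality; the underlying algebra is identical.
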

\noindent{\em Remark 3. Relevance to empirical practice:} The data filtering approach currently employed for the LAION dataset \cite{schuhmann2021laion} motivates our theoretical setup.
The authors of LAION first retrieved a large number of image-text pairs from Common Crawl \cite{commoncrawl} and then computed the cosine similarity between the image and text embeddings using one of the CLIP models originally introduced by \citet{radford2021learning}.
Next, the authors removed all pairs with similarity less than 0.3 from the dataset.
While it was initially unclear if filtering the noisy Common Crawl source with an existing CLIP model would lead to a dataset with good generalization properties, experiments with medium-scale models on LAION so far demonstrate that the resulting models are comparable to the original CLIP models \cite{laion5b}.

Our theorem \ref{thm:filter} does not provide a full justification for the filtering process described above, but it offers theoretical evidence that filtering a noisy data source with an existing robust model can indeed improve the effective robustness of the resulting dataset.
It remains to be shown empirically that the slope of the linear trend exhibited by the CLIP model from \cite{radford2021learning} is larger than that of a model trained on the Common Crawl data pool without filtering.
If this assumption holds, we can expect that training on data with better text-image alignment according to the original CLIP model, will improve robustness compared to training on unfiltered data from Common Crawl.

\section{Conclusion}\label{sec:conclusions}
Motivated by CLIP's robustness to distribution shift, we introduced a testbed of six image-text datasets to study the influence of pre-training data on the robustness of multimodal models such as CLIP.
We analyzed the interactions of these datasets through both experiments and theoretical analyses, finding that simply combining multiple datasets dilutes the robustness of the best-performing one. 
Moreover, we offer an explanation for the potential benefits of the data filtering process currently used for LAION. 
Our findings suggest that training on a carefully selected subset of data may provide more robustness than training on simply more data, for example, obtained by combining multiple data sources.

To summarize, our results lead to the following recommendations for dataset design:
\begin{itemize}[leftmargin=1em,itemsep=0em,topsep=0em]
\item On the evaluation front: building a pre-training dataset requires multiple robustness test sets, as we have shown that different pre-training datasets exhibit substantially different behaviors across distribution shifts.
\item On the dataset design front: each candidate data source for the overall pre-training dataset should be evaluated separately for its generalization and robustness properties. The final pre-training dataset should then contain more samples from the higher-quality pre-training sources. Creating a more ``diverse'' pre-training dataset by randomly sampling from all candidate sources does not result in a better dataset.
\item Assuming access to models separately trained on each source of interest, output ensemble is a good predictor of the effective robustness obtained from input mixing, and hence can significantly reduce the time to search for the right mixing strategy.
\item Filtering a noisy data source with an existing robust model can improve the generalization properties of the resulting dataset.
\end{itemize}

The broad societal implications of image-text models and web-crawled multimodal datasets have been extensively discussed by \citet{radford2021learning} and \citet{birhane2021multimodal} respectively. 
As these models continue to grow in size, full-scale experiments are becoming prohibitively expensive.
This necessitates small, systematic experiments that allow reliable extrapolation to larger scales.
Since effective robustness is independent of model size and quantity of training data, we hope that our testbed and results can serve as a first step towards building multimodal pre-training datasets in a principled manner.

\subsection{Limitations}
The pre-training datasets we experiment with range from 5M to 15M samples. While this lags behind current state-of-the-art settings that large image-text models are often trained on, we believe our findings also hold for models with higher accuracy for the following reasons:
\begin{itemize}[leftmargin=1em,itemsep=0em,topsep=0em]
\item The original CLIP paper \cite{radford2021learning} offers evidence showing that CLIP’s performance exhibits reliable linear trends across compute scales (Figure 9 of \cite{radford2021learning}, which includes the ResNet50-encoder that our paper works with) and performance on natural distribution shifts (Figure 14 of \cite{radford2021learning}). In particular, the linear trends in this Figure 14 are analogous to the linear trends in our scatter plots.
\item In the OpenCLIP GitHub repository \cite{ilharco_gabriel_2021_5143773} that seeks to match the performance of the original CLIP paper, the authors there also note that their models that were trained on 15M samples or fewer share the same effective robustness trend as OpenAI’s CLIP models (trained on 400M samples). Refer to their \href{https://github.com/mlfoundations/open_clip#why-are-low-accuracy-clip-models-interesting}{Why are low-accuracy CLIP models interesting?} remark for more details.
\item Previous work \cite{miller2021accuracy} has experimented with training on randomly sampled subsets of the training dataset to produce models covering a wide range of accuracies, and found that these models lie on the same effective robustness trend. This includes models trained on only 1\% of the original data, which matches the scale of our experiments (5M-15M samples) relative to the scale of the original CLIP paper (400M samples).
\end{itemize}
The focus of our work is to study the behavior of a diverse set of data sources, and performing full-scale experiments on 6 datasets (and their combinations) would be prohibitively expensive. To provide some perspectives on the amount of resources involved in full-scale experiments: ($i$) One of the larger datasets that are publicly available, LAION-5B, is a substantial effort that involves 14 people actively working on it \cite{laion5b},
($ii$) Training a ResNet50-encoder CLIP on the full dataset of 400M samples took 18 days on 592 V100 GPUs, according to the original paper \cite{radford2021learning}.  This prompts us to leverage publicly available web-crawled datasets, and study the behavior of our trained models through the effective robustness framework, which has been shown to be scale-invariant.

\subsection{Future Research Directions}
Our findings motivate a number of interesting future directions, which we summarize below:

\paragraph{Data sources}
\begin{itemize}[leftmargin=1em,itemsep=0em,topsep=0em]
\item \emph{Comparing more data sources}:
Our textbed of six image-text datasets is only a first step toward mapping the landscape of pre-training datasets for vision-language models.
There are many more web sources from which pre-training datasets could be assembled for interesting comparisons, e.g., web searches, further social media sites (Pinterest, Tumblr, etc.), or captioned videos from YouTube, to name a few.
In addition, there are also more domain-specific data sources such as iNaturalist \cite{van2018inaturalist}, BigEarthNet \cite{Sumbul2019BigEarthNetAL}, and Patch Camelyon \cite{veeling2018rotation} that likely offer complementary robustness profiles. 
An interesting outcome of this research direction would be a ``dataset atlas'' that allows practitioners to quickly identify relevant pre-training datasets for their task of interest.

\item \emph{Synthetic pre-training data}:
The datasets in our testbed contain images and text initially created by humans for other purposes and then assembled into a dataset for model training.
An interesting alternative is to create training data specifically to improve the generalization of image-text models.
There are at least three potential avenues:
(i) Rendering virtual 3D environments \cite{greff2022kubric, fu20213d, mccormac2016scenenet}, 
(ii) text-to-image generative models \cite{saharia2022photorealistic, ramesh2022hierarchical, yu2022scaling, nichol2021glide}, and
(iii) image captioning models \cite{mokady2021clipcap, zhou2020unified, stefanini2021show}. 
All three avenues offer more fine-grained control over the training data, but in all three cases it is not yet well understood how synthetic data affects robustness.

\item \emph{Data efficiency}:
Our focus has been on the impact of datasets on generalization through the lens of effective robustness, which depends only on the training \emph{distribution}, not the number of samples drawn from the distribution \cite{miller2021accuracy}.
A natural complementary question is how the datasets in our testbed compare in terms of their scaling behavior with training set size.
For instance, it may be the case that some datasets offer less effective robustness, but the absolute accuracy numbers may improve more quickly with increasing dataset size than for other data sources.

This direction is closely related to empirical scaling laws for neural networks \cite{prato2021scaling, kaplan2020scaling, brown2020language, djolonga2021robustness}, which have recently gained attention because the laws aim to predict model performance as model size, compute, or amount of training data increases.
We provide initial experiments on data scaling for our testbed with minimal hyperparameter tuning in Appendix \ref{app:single_source} (Figure \ref{fig:data_efficiency}).
Understanding these trends in more detail and testing their reliability (e.g., under hyperparameter variations) may serve as a starting point for future work.
\end{itemize}

\paragraph{Understanding and improving dataset curation}
\begin{itemize}[leftmargin=1em,itemsep=0em,topsep=0em]
\item \emph{Combining data sources at a more fine-grained level}: We have shown both empirically and in our theoretical model that simply taking the union of two datasets dilutes the robustness of the better dataset. 
This suggests that improving robust generalization may require more sophisticated methods to combine datasets, e.g., by selecting individual samples or subsets from each dataset with complementary generalization properties.
We posit that this approach could take inspiration from the subset selection literature \cite{dasgupta2019teaching, paul2021deep, killamsetty2021grad, sorscher2022beyond}.

\item \emph{Data filtering techniques}:
Some of the datasets in our testbed such as LAION and YFCC-15M do not contain ``raw'' unfiltered data from the web.
Instead, they are already the result of an initial filtering process.
In both cases, this filtering process reduces the size of the dataset by a factor of seven to eight, and the impact of this filtering on generalization is currently not well understood.
In addition, the filtering approaches can vary substantially: OpenAI built YFCC-15M from hand-crafted rules while LAION relied on CLIP models for their filtering process.
Experimentally exploring the effect of these filtering processes is likely an important direction for future work since the filtering step removes a large number of potential training examples.
Two concrete questions would be whether using higher-accuracy models in the context of LAION's filtering process leads to better datasets, and comparing the LAION dataset to an unfiltered sample from the same source (Common Crawl).
\end{itemize}

\paragraph{Theory}
\begin{itemize}[leftmargin=1em,itemsep=0em,topsep=0em]
\item \emph{More general theoretical models}: Our theoretical model could be extended so it is more aligned with standard experimental settings, e.g., by increasing the number of classes beyond two, or by including more complex classifiers beyond the linear model we currently examine.
Furthermore, \citet{miller2021accuracy} have shown that non-isotropic covariance shifts in the data distribution lead to a deviation from the linear trend, at least for fixed classifiers independent of the training data.
In future work, we would like to develop similar theoretical analyses involving covariance shifts, but for classifiers with explicit dependence on the training distribution and training set size, similar to what we have studied in this paper.

\item \emph{Low-accuracy regime}: 
The emergence of linear trends relating in-distribution to out-of-distribution accuracy in our theoretical models is consistent with our empirical observations for classifiers that reach at least 10\% accuracy on ImageNet.
On the other hand, in a lower accuracy regime, our theoretical analyses predict that any trained model approaches the in-distribution and out-of-distribution accuracy of random guessing (see Remark 2 in Section \ref{remark:random_guess}). 
Future work should empirically explore models with lower zero-shot accuracies (e.g., by reducing the pre-training dataset size) to see how the empirical trends behave in a lower accuracy regime. 
If the intercept is not at the accuracy of random guessing, this suggests an area of improvement for the theoretical analysis to better fit these new empirical observations in the low-accuracy regime.
\end{itemize}

\paragraph{Further directions}
\begin{itemize}[leftmargin=1em,itemsep=0em,topsep=0em]
\item \emph{Connections to large language models}: 
Our focus in this paper has been on vision-language models.
Another area where large pre-training sets are essential is unimodal language models \cite{devlin2018bert, brown2020language, hoffmann2022training, thoppilan2022lamda, lieber2021jurassic, rae2021scaling}.
Many of the questions we address in this paper also arise in the context of language models.
A first step in this direction would be to assemble a comprehensive robustness testbed for NLP tasks to measure the extent to which the ``accuracy on the line'' phenomenon uncovered by \citet{miller2021accuracy} also holds in NLP.
One recent set of experiments indicates that this may be the case \cite{miller2020effect}, but they do not yet include recent large language models such as GPT-3 \citep{brown2020language}.

\item \emph{Dataset design for fine-tuning}:
We have studied the robustness properties of \emph{zero-shot} models, but the models  can often also be fine-tuned on task-specific data to increase performance on a target task.
It remains an open question how the pre-training dataset affects the generalization properties of models after an extra fine-tuning step.
\end{itemize}
We look forward to addressing these topics in our future work.

\section*{Acknowledgements}
We would like to thank Tiffany Cai, Nicholas Carlini, Tatsunori Hashimoto, Jong Wook Kim, Hongseok Namkoong, Alec Radford, Rohan Taori, and Thomas Wolf for helpful discussions while working on this paper.
We also thank Tiffany Cai and Jack Gallagher for their help with the dataset collection.
HuggingFace has provided generous assistance in the form of compute resources that made many of our experiments possible.
In particular, we would like to thank Leandro von Werra and Victor Sanh for their help with setting up the compute infrastructure and their constructive feedback on the dataset testbed.
This work is supported in part by Open Philanthropy, the Allen Institute for AI, and NSF grants CNS-2002664, IIS-1929955, DMS-2134012, and  CCF-2019844 as a part of NSF Institute for Foundations of Machine Learning (IFML). 

\bibliographystyle{plainnat}
\bibliography{bibliography}

\begin{thebibliography}{89}
\providecommand{\natexlab}[1]{#1}
\providecommand{\url}[1]{\texttt{#1}}
\expandafter\ifx\csname urlstyle\endcsname\relax
  \providecommand{\doi}[1]{doi: #1}\else
  \providecommand{\doi}{doi: \begingroup \urlstyle{rm}\Url}\fi

\bibitem[com()]{commoncrawl}
Common crawl.
\newblock \url{https://commoncrawl.org/}.
\newblock Accessed: 2022-05-18.

\bibitem[lai()]{laion5b}
Laion-5b: A new era of open large-scale multi-modal datasets.
\newblock
  \url{https://laion.ai/laion-5b-a-new-era-of-open-large-scale-multi-modal-datasets/}.
\newblock Accessed: 2022-05-18.

\bibitem[Alayrac et~al.(2022)Alayrac, Donahue, Luc, Miech, Barr, Hasson, Lenc,
  Mensch, Millican, Reynolds, et~al.]{alayrac2022flamingo}
Jean-Baptiste Alayrac, Jeff Donahue, Pauline Luc, Antoine Miech, Iain Barr,
  Yana Hasson, Karel Lenc, Arthur Mensch, Katie Millican, Malcolm Reynolds,
  et~al.
\newblock Flamingo: a visual language model for few-shot learning.
\newblock \emph{arXiv preprint arXiv:2204.14198}, 2022.

\bibitem[Andreassen et~al.(2021)Andreassen, Bahri, Neyshabur, and
  Roelofs]{andreassen2021evolution}
Anders Andreassen, Yasaman Bahri, Behnam Neyshabur, and Rebecca Roelofs.
\newblock The evolution of out-of-distribution robustness throughout
  fine-tuning, 2021.
\newblock \url{https://arxiv.org/abs/2106.15831}.

\bibitem[Arjovsky et~al.(2019)Arjovsky, Bottou, Gulrajani, and
  Lopez-Paz]{arjovsky2019invariant}
Martin Arjovsky, L{\'e}on Bottou, Ishaan Gulrajani, and David Lopez-Paz.
\newblock Invariant risk minimization, 2019.
\newblock \url{https://arxiv.org/abs/1907.02893}.

\bibitem[Barbu et~al.(2019)Barbu, Mayo, Alverio, Luo, Wang, Gutfreund,
  Tenenbaum, and Katz]{barbu2019objectnet}
Andrei Barbu, David Mayo, Julian Alverio, William Luo, Christopher Wang, Dan
  Gutfreund, Josh Tenenbaum, and Boris Katz.
\newblock Objectnet: A large-scale bias-controlled dataset for pushing the
  limits of object recognition models.
\newblock \emph{Advances in neural information processing systems}, 32, 2019.

\bibitem[Bauer and Kohavi(1999)]{bauer1999empirical}
Eric Bauer and Ron Kohavi.
\newblock An empirical comparison of voting classification algorithms: Bagging,
  boosting, and variants.
\newblock \emph{Machine learning}, 1999.
\newblock \url{https://link.springer.com/article/10.1023/A:1007515423169}.

\bibitem[Biggio and Roli(2018)]{biggio2018wild}
Battista Biggio and Fabio Roli.
\newblock Wild patterns: Ten years after the rise of adversarial machine
  learning.
\newblock \emph{Pattern Recognition}, 84:\penalty0 317--331, 2018.

\bibitem[Biggio et~al.(2013)Biggio, Corona, Maiorca, Nelson, {\v{S}}rndi{\'c},
  Laskov, Giacinto, and Roli]{biggio2013evasion}
Battista Biggio, Igino Corona, Davide Maiorca, Blaine Nelson, Nedim
  {\v{S}}rndi{\'c}, Pavel Laskov, Giorgio Giacinto, and Fabio Roli.
\newblock Evasion attacks against machine learning at test time.
\newblock In \emph{Joint European conference on machine learning and knowledge
  discovery in databases}, pages 387--402. Springer, 2013.

\bibitem[Birhane et~al.(2021)Birhane, Prabhu, and
  Kahembwe]{birhane2021multimodal}
Abeba Birhane, Vinay~Uday Prabhu, and Emmanuel Kahembwe.
\newblock Multimodal datasets: misogyny, pornography, and malignant
  stereotypes.
\newblock \emph{arXiv preprint arXiv:2110.01963}, 2021.

\bibitem[Breiman(1996)]{breiman1996bagging}
Leo Breiman.
\newblock Bagging predictors.
\newblock \emph{Machine learning}, 1996.
\newblock \url{https://link.springer.com/article/10.1007/BF00058655}.

\bibitem[Brown et~al.(2020)Brown, Mann, Ryder, Subbiah, Kaplan, Dhariwal,
  Neelakantan, Shyam, Sastry, Askell, et~al.]{brown2020language}
Tom Brown, Benjamin Mann, Nick Ryder, Melanie Subbiah, Jared~D Kaplan, Prafulla
  Dhariwal, Arvind Neelakantan, Pranav Shyam, Girish Sastry, Amanda Askell,
  et~al.
\newblock Language models are few-shot learners.
\newblock \emph{Advances in neural information processing systems},
  33:\penalty0 1877--1901, 2020.

\bibitem[Changpinyo et~al.(2021)Changpinyo, Sharma, Ding, and
  Soricut]{changpinyo2021conceptual}
Soravit Changpinyo, Piyush Sharma, Nan Ding, and Radu Soricut.
\newblock Conceptual 12m: Pushing web-scale image-text pre-training to
  recognize long-tail visual concepts.
\newblock In \emph{Proceedings of the IEEE/CVF Conference on Computer Vision
  and Pattern Recognition}, pages 3558--3568, 2021.

\bibitem[Chen et~al.(2020)Chen, Kornblith, Norouzi, and Hinton]{chen2020simple}
Ting Chen, Simon Kornblith, Mohammad Norouzi, and Geoffrey Hinton.
\newblock A simple framework for contrastive learning of visual
  representations.
\newblock In \emph{International conference on machine learning}, pages
  1597--1607. PMLR, 2020.

\bibitem[Chowdhery et~al.(2022)Chowdhery, Narang, Devlin, Bosma, Mishra,
  Roberts, Barham, Chung, Sutton, Gehrmann, et~al.]{chowdhery2022palm}
Aakanksha Chowdhery, Sharan Narang, Jacob Devlin, Maarten Bosma, Gaurav Mishra,
  Adam Roberts, Paul Barham, Hyung~Won Chung, Charles Sutton, Sebastian
  Gehrmann, et~al.
\newblock Palm: Scaling language modeling with pathways.
\newblock \emph{arXiv preprint arXiv:2204.02311}, 2022.

\bibitem[D'Amour et~al.(2020)D'Amour, Heller, Moldovan, Adlam, Alipanahi,
  Beutel, Chen, Deaton, Eisenstein, Hoffman, et~al.]{d2020underspecification}
Alexander D'Amour, Katherine Heller, Dan Moldovan, Ben Adlam, Babak Alipanahi,
  Alex Beutel, Christina Chen, Jonathan Deaton, Jacob Eisenstein, Matthew~D
  Hoffman, et~al.
\newblock Underspecification presents challenges for credibility in modern
  machine learning, 2020.
\newblock \url{https://arxiv.org/abs/2011.03395}.

\bibitem[Darlow et~al.(2018)Darlow, Crowley, Antoniou, and
  Storkey]{darlow2018cinic}
Luke~N Darlow, Elliot~J Crowley, Antreas Antoniou, and Amos~J Storkey.
\newblock Cinic-10 is not imagenet or cifar-10.
\newblock \emph{arXiv preprint arXiv:1810.03505}, 2018.

\bibitem[Dasgupta et~al.(2019)Dasgupta, Hsu, Poulis, and
  Zhu]{dasgupta2019teaching}
Sanjoy Dasgupta, Daniel Hsu, Stefanos Poulis, and Xiaojin Zhu.
\newblock Teaching a black-box learner.
\newblock In \emph{International Conference on Machine Learning}, pages
  1547--1555. PMLR, 2019.

\bibitem[Deng et~al.(2009)Deng, Dong, Socher, Li, Li, and
  Fei-Fei]{deng2009imagenet}
Jia Deng, Wei Dong, Richard Socher, Li-Jia Li, Kai Li, and Li~Fei-Fei.
\newblock Imagenet: A large-scale hierarchical image database.
\newblock In \emph{Conference on Computer Vision and Pattern Recognition},
  2009.
\newblock \url{https://ieeexplore.ieee.org/document/5206848}.

\bibitem[Desai et~al.(2021)Desai, Kaul, Aysola, and Johnson]{desai2021redcaps}
Karan Desai, Gaurav Kaul, Zubin Aysola, and Justin Johnson.
\newblock Redcaps: Web-curated image-text data created by the people, for the
  people.
\newblock \emph{arXiv preprint arXiv:2111.11431}, 2021.

\bibitem[Devlin et~al.(2018)Devlin, Chang, Lee, and Toutanova]{devlin2018bert}
Jacob Devlin, Ming-Wei Chang, Kenton Lee, and Kristina Toutanova.
\newblock Bert: Pre-training of deep bidirectional transformers for language
  understanding.
\newblock \emph{arXiv preprint arXiv:1810.04805}, 2018.

\bibitem[Dietterich(2000)]{dietterich2000ensemble}
Thomas~G Dietterich.
\newblock Ensemble methods in machine learning.
\newblock In \emph{International workshop on multiple classifier systems},
  2000.
\newblock \url{https://link.springer.com/chapter/10.1007/3-540-45014-9_1}.

\bibitem[Djolonga et~al.(2021)Djolonga, Yung, Tschannen, Romijnders, Beyer,
  Kolesnikov, Puigcerver, Minderer, D'Amour, Moldovan,
  et~al.]{djolonga2021robustness}
Josip Djolonga, Jessica Yung, Michael Tschannen, Rob Romijnders, Lucas Beyer,
  Alexander Kolesnikov, Joan Puigcerver, Matthias Minderer, Alexander D'Amour,
  Dan Moldovan, et~al.
\newblock On robustness and transferability of convolutional neural networks.
\newblock In \emph{Proceedings of the IEEE/CVF Conference on Computer Vision
  and Pattern Recognition}, pages 16458--16468, 2021.

\bibitem[Fang et~al.(2022)Fang, Ilharco, Wortsman, Wan, Shankar, Dave, and
  Schmidt]{fang2022data}
Alex Fang, Gabriel Ilharco, Mitchell Wortsman, Yuhao Wan, Vaishaal Shankar,
  Achal Dave, and Ludwig Schmidt.
\newblock Data determines distributional robustness in contrastive language
  image pre-training (clip).
\newblock \emph{arXiv preprint arXiv:2205.01397}, 2022.

\bibitem[Fort et~al.(2019)Fort, Hu, and Lakshminarayanan]{fort2019deep}
Stanislav Fort, Huiyi Hu, and Balaji Lakshminarayanan.
\newblock Deep ensembles: A loss landscape perspective, 2019.
\newblock \url{https://arxiv.org/abs/1912.02757}.

\bibitem[Freund and Schapire(1997)]{FREUND1997119}
Yoav Freund and Robert~E Schapire.
\newblock A decision-theoretic generalization of on-line learning and an
  application to boosting.
\newblock \emph{Journal of Computer and System Sciences}, 1997.
\newblock
  \url{https://www.sciencedirect.com/science/article/pii/S002200009791504X}.

\bibitem[Fu et~al.(2021)Fu, Jia, Gao, Gong, Zhao, Maybank, and Tao]{fu20213d}
Huan Fu, Rongfei Jia, Lin Gao, Mingming Gong, Binqiang Zhao, Steve Maybank, and
  Dacheng Tao.
\newblock 3d-future: 3d furniture shape with texture.
\newblock \emph{International Journal of Computer Vision}, pages 1--25, 2021.

\bibitem[Gao et~al.(2021)Gao, Geng, Zhang, Ma, Fang, Zhang, Li, and
  Qiao]{gao2021clip}
Peng Gao, Shijie Geng, Renrui Zhang, Teli Ma, Rongyao Fang, Yongfeng Zhang,
  Hongsheng Li, and Yu~Qiao.
\newblock Clip-adapter: Better vision-language models with feature adapters,
  2021.
\newblock \url{https://arxiv.org/abs/2110.04544}.

\bibitem[Gontijo-Lopes et~al.(2021)Gontijo-Lopes, Dauphin, and
  Cubuk]{gontijo2021no}
Raphael Gontijo-Lopes, Yann Dauphin, and Ekin~D Cubuk.
\newblock No one representation to rule them all: Overlapping features of
  training methods, 2021.
\newblock \url{https://arxiv.org/abs/2007.01434}.

\bibitem[Greff et~al.(2022)Greff, Belletti, Beyer, Doersch, Du, Duckworth,
  Fleet, Gnanapragasam, Golemo, Herrmann, et~al.]{greff2022kubric}
Klaus Greff, Francois Belletti, Lucas Beyer, Carl Doersch, Yilun Du, Daniel
  Duckworth, David~J Fleet, Dan Gnanapragasam, Florian Golemo, Charles
  Herrmann, et~al.
\newblock Kubric: A scalable dataset generator.
\newblock In \emph{Proceedings of the IEEE/CVF Conference on Computer Vision
  and Pattern Recognition}, pages 3749--3761, 2022.

\bibitem[Gulrajani and Lopez-Paz(2020)]{gulrajani2020search}
Ishaan Gulrajani and David Lopez-Paz.
\newblock In search of lost domain generalization.
\newblock \emph{International Conference on Learning Representations (ICLR)},
  2020.
\newblock \url{https://arxiv.org/abs/2007.01434}.

\bibitem[He et~al.(2016)He, Zhang, Ren, and Sun]{he2016deep}
Kaiming He, Xiangyu Zhang, Shaoqing Ren, and Jian Sun.
\newblock Deep residual learning for image recognition.
\newblock In \emph{Proceedings of the IEEE conference on computer vision and
  pattern recognition}, pages 770--778, 2016.

\bibitem[He et~al.(2020)He, Fan, Wu, Xie, and Girshick]{he2020momentum}
Kaiming He, Haoqi Fan, Yuxin Wu, Saining Xie, and Ross Girshick.
\newblock Momentum contrast for unsupervised visual representation learning.
\newblock In \emph{Proceedings of the IEEE/CVF conference on computer vision
  and pattern recognition}, pages 9729--9738, 2020.

\bibitem[He et~al.(2022)He, Chen, Xie, Li, Doll{\'a}r, and
  Girshick]{he2022masked}
Kaiming He, Xinlei Chen, Saining Xie, Yanghao Li, Piotr Doll{\'a}r, and Ross
  Girshick.
\newblock Masked autoencoders are scalable vision learners.
\newblock In \emph{Proceedings of the IEEE/CVF Conference on Computer Vision
  and Pattern Recognition}, pages 16000--16009, 2022.

\bibitem[Hendrycks and Dietterich(2019)]{imagenetc}
Dan Hendrycks and Thomas Dietterich.
\newblock Benchmarking neural network robustness to common corruptions and
  perturbations.
\newblock \emph{International Conference on Learning Representations (ICLR)},
  2019.
\newblock \url{https://arxiv.org/abs/1903.12261}.

\bibitem[Hendrycks et~al.(2021)Hendrycks, Basart, Mu, Kadavath, Wang, Dorundo,
  Desai, Zhu, Parajuli, Guo, et~al.]{hendrycks2021many}
Dan Hendrycks, Steven Basart, Norman Mu, Saurav Kadavath, Frank Wang, Evan
  Dorundo, Rahul Desai, Tyler Zhu, Samyak Parajuli, Mike Guo, et~al.
\newblock The many faces of robustness: A critical analysis of
  out-of-distribution generalization.
\newblock In \emph{Proceedings of the IEEE/CVF International Conference on
  Computer Vision}, pages 8340--8349, 2021.

\bibitem[Hoffmann et~al.(2022)Hoffmann, Borgeaud, Mensch, Buchatskaya, Cai,
  Rutherford, Casas, Hendricks, Welbl, Clark, et~al.]{hoffmann2022training}
Jordan Hoffmann, Sebastian Borgeaud, Arthur Mensch, Elena Buchatskaya, Trevor
  Cai, Eliza Rutherford, Diego de~Las Casas, Lisa~Anne Hendricks, Johannes
  Welbl, Aidan Clark, et~al.
\newblock Training compute-optimal large language models.
\newblock \emph{arXiv preprint arXiv:2203.15556}, 2022.

\bibitem[Ilharco et~al.(2021)Ilharco, Wortsman, Carlini, Taori, Dave, Shankar,
  Namkoong, Miller, Hajishirzi, Farhadi, and
  Schmidt]{ilharco_gabriel_2021_5143773}
Gabriel Ilharco, Mitchell Wortsman, Nicholas Carlini, Rohan Taori, Achal Dave,
  Vaishaal Shankar, Hongseok Namkoong, John Miller, Hannaneh Hajishirzi, Ali
  Farhadi, and Ludwig Schmidt.
\newblock Openclip, July 2021.
\newblock URL \url{https://doi.org/10.5281/zenodo.5143773}.
\newblock If you use this software, please cite it as below.

\bibitem[Jia et~al.(2021)Jia, Yang, Xia, Chen, Parekh, Pham, Le, Sung, Li, and
  Duerig]{jia2021scaling}
Chao Jia, Yinfei Yang, Ye~Xia, Yi-Ting Chen, Zarana Parekh, Hieu Pham, Quoc Le,
  Yun-Hsuan Sung, Zhen Li, and Tom Duerig.
\newblock Scaling up visual and vision-language representation learning with
  noisy text supervision.
\newblock In \emph{International Conference on Machine Learning}, pages
  4904--4916. PMLR, 2021.

\bibitem[Kaplan et~al.(2020)Kaplan, McCandlish, Henighan, Brown, Chess, Child,
  Gray, Radford, Wu, and Amodei]{kaplan2020scaling}
Jared Kaplan, Sam McCandlish, Tom Henighan, Tom~B Brown, Benjamin Chess, Rewon
  Child, Scott Gray, Alec Radford, Jeffrey Wu, and Dario Amodei.
\newblock Scaling laws for neural language models.
\newblock \emph{arXiv preprint arXiv:2001.08361}, 2020.

\bibitem[Killamsetty et~al.(2021)Killamsetty, Durga, Ramakrishnan, De, and
  Iyer]{killamsetty2021grad}
Krishnateja Killamsetty, S~Durga, Ganesh Ramakrishnan, Abir De, and Rishabh
  Iyer.
\newblock Grad-match: Gradient matching based data subset selection for
  efficient deep model training.
\newblock In \emph{International Conference on Machine Learning}, pages
  5464--5474. PMLR, 2021.

\bibitem[Koh et~al.(2021)Koh, Sagawa, Marklund, Xie, Zhang, Balsubramani, Hu,
  Yasunaga, Phillips, Gao, et~al.]{koh2021wilds}
Pang~Wei Koh, Shiori Sagawa, Henrik Marklund, Sang~Michael Xie, Marvin Zhang,
  Akshay Balsubramani, Weihua Hu, Michihiro Yasunaga, Richard~Lanas Phillips,
  Irena Gao, et~al.
\newblock Wilds: A benchmark of in-the-wild distribution shifts.
\newblock In \emph{International Conference on Machine Learning}, pages
  5637--5664. PMLR, 2021.

\bibitem[Krizhevsky et~al.(2009)Krizhevsky, Hinton, et~al.]{cifar10}
Alex Krizhevsky, Geoffrey Hinton, et~al.
\newblock Learning multiple layers of features from tiny images.
\newblock 2009.

\bibitem[Krueger et~al.(2021)Krueger, Caballero, Jacobsen, Zhang, Binas, Zhang,
  Le~Priol, and Courville]{kruegernr021out}
David Krueger, Ethan Caballero, Joern-Henrik Jacobsen, Amy Zhang, Jonathan
  Binas, Dinghuai Zhang, Remi Le~Priol, and Aaron Courville.
\newblock Out-of-distribution generalization via risk extrapolation (rex),
  2021.
\newblock \url{https://arxiv.org/abs/2003.00688}.

\bibitem[Lakshminarayanan et~al.(2017)Lakshminarayanan, Pritzel, and
  Blundell]{deepensembles}
Balaji Lakshminarayanan, Alexander Pritzel, and Charles Blundell.
\newblock Simple and scalable predictive uncertainty estimation using deep
  ensembles.
\newblock In \emph{Advances in Neural Information Processing Systems
  (NeurIPS)}, 2017.
\newblock \url{https://arxiv.org/abs/1612.01474}.

\bibitem[Li et~al.(2021)Li, Liang, Zhao, Cui, Ouyang, Shao, Yu, and
  Yan]{li2021supervision}
Yangguang Li, Feng Liang, Lichen Zhao, Yufeng Cui, Wanli Ouyang, Jing Shao,
  Fengwei Yu, and Junjie Yan.
\newblock Supervision exists everywhere: A data efficient contrastive
  language-image pre-training paradigm.
\newblock \emph{arXiv preprint arXiv:2110.05208}, 2021.

\bibitem[Lieber et~al.(2021)Lieber, Sharir, Lenz, and
  Shoham]{lieber2021jurassic}
Opher Lieber, Or~Sharir, Barak Lenz, and Yoav Shoham.
\newblock Jurassic-1: Technical details and evaluation.
\newblock \emph{White Paper. AI21 Labs}, 2021.

\bibitem[Loshchilov and Hutter(2016)]{loshchilov2016sgdr}
Ilya Loshchilov and Frank Hutter.
\newblock Sgdr: Stochastic gradient descent with warm restarts.
\newblock \emph{arXiv preprint arXiv:1608.03983}, 2016.

\bibitem[Loshchilov and Hutter(2017)]{loshchilov2017decoupled}
Ilya Loshchilov and Frank Hutter.
\newblock Decoupled weight decay regularization.
\newblock \emph{arXiv preprint arXiv:1711.05101}, 2017.

\bibitem[McCormac et~al.(2016)McCormac, Handa, Leutenegger, and
  Davison]{mccormac2016scenenet}
John McCormac, Ankur Handa, Stefan Leutenegger, and Andrew~J Davison.
\newblock Scenenet rgb-d: 5m photorealistic images of synthetic indoor
  trajectories with ground truth.
\newblock \emph{arXiv preprint arXiv:1612.05079}, 2016.

\bibitem[Miller et~al.(2020)Miller, Krauth, Recht, and
  Schmidt]{miller2020effect}
John Miller, Karl Krauth, Benjamin Recht, and Ludwig Schmidt.
\newblock The effect of natural distribution shift on question answering
  models.
\newblock In \emph{International Conference on Machine Learning}, pages
  6905--6916. PMLR, 2020.

\bibitem[Miller et~al.(2021)Miller, Taori, Raghunathan, Sagawa, Koh, Shankar,
  Liang, Carmon, and Schmidt]{miller2021accuracy}
John~P Miller, Rohan Taori, Aditi Raghunathan, Shiori Sagawa, Pang~Wei Koh,
  Vaishaal Shankar, Percy Liang, Yair Carmon, and Ludwig Schmidt.
\newblock Accuracy on the line: on the strong correlation between
  out-of-distribution and in-distribution generalization.
\newblock In \emph{International Conference on Machine Learning}, pages
  7721--7735. PMLR, 2021.

\bibitem[Mokady et~al.(2021)Mokady, Hertz, and Bermano]{mokady2021clipcap}
Ron Mokady, Amir Hertz, and Amit~H Bermano.
\newblock Clipcap: Clip prefix for image captioning.
\newblock \emph{arXiv preprint arXiv:2111.09734}, 2021.

\bibitem[Mu et~al.(2021)Mu, Kirillov, Wagner, and Xie]{mu2021slip}
Norman Mu, Alexander Kirillov, David Wagner, and Saining Xie.
\newblock Slip: Self-supervision meets language-image pre-training.
\newblock \emph{arXiv preprint arXiv:2112.12750}, 2021.

\bibitem[Nichol et~al.(2021)Nichol, Dhariwal, Ramesh, Shyam, Mishkin, McGrew,
  Sutskever, and Chen]{nichol2021glide}
Alex Nichol, Prafulla Dhariwal, Aditya Ramesh, Pranav Shyam, Pamela Mishkin,
  Bob McGrew, Ilya Sutskever, and Mark Chen.
\newblock Glide: Towards photorealistic image generation and editing with
  text-guided diffusion models.
\newblock \emph{arXiv preprint arXiv:2112.10741}, 2021.

\bibitem[Nixon et~al.(2020)Nixon, Lakshminarayanan, and Tran]{nixon2020why}
Jeremy Nixon, Balaji Lakshminarayanan, and Dustin Tran.
\newblock Why are bootstrapped deep ensembles not better?
\newblock In \emph{''I Can't Believe It's Not Better!'' NeurIPS 2020 workshop},
  2020.
\newblock URL \url{https://openreview.net/forum?id=dTCir0ceyv0}.

\bibitem[Paul et~al.(2021)Paul, Ganguli, and Dziugaite]{paul2021deep}
Mansheej Paul, Surya Ganguli, and Gintare~Karolina Dziugaite.
\newblock Deep learning on a data diet: Finding important examples early in
  training.
\newblock \emph{Advances in Neural Information Processing Systems},
  34:\penalty0 20596--20607, 2021.

\bibitem[Pham et~al.(2021)Pham, Dai, Ghiasi, Liu, Yu, Luong, Tan, and
  Le]{pham2021combined}
Hieu Pham, Zihang Dai, Golnaz Ghiasi, Hanxiao Liu, Adams~Wei Yu, Minh-Thang
  Luong, Mingxing Tan, and Quoc~V Le.
\newblock Combined scaling for zero-shot transfer learning.
\newblock \emph{arXiv preprint arXiv:2111.10050}, 2021.

\bibitem[Prato et~al.(2021)Prato, Guiroy, Caballero, Rish, and
  Chandar]{prato2021scaling}
Gabriele Prato, Simon Guiroy, Ethan Caballero, Irina Rish, and Sarath Chandar.
\newblock Scaling laws for the few-shot adaptation of pre-trained image
  classifiers.
\newblock \emph{arXiv preprint arXiv:2110.06990}, 2021.

\bibitem[Qui{\~n}onero-Candela et~al.(2008)Qui{\~n}onero-Candela, Sugiyama,
  Schwaighofer, and Lawrence]{quinonero2008dataset}
Joaquin Qui{\~n}onero-Candela, Masashi Sugiyama, Anton Schwaighofer, and Neil~D
  Lawrence.
\newblock \emph{Dataset shift in machine learning}.
\newblock Mit Press, 2008.

\bibitem[Radford et~al.(2021)Radford, Kim, Hallacy, Ramesh, Goh, Agarwal,
  Sastry, Askell, Mishkin, Clark, et~al.]{radford2021learning}
Alec Radford, Jong~Wook Kim, Chris Hallacy, Aditya Ramesh, Gabriel Goh,
  Sandhini Agarwal, Girish Sastry, Amanda Askell, Pamela Mishkin, Jack Clark,
  et~al.
\newblock Learning transferable visual models from natural language
  supervision.
\newblock In \emph{International Conference on Machine Learning}, pages
  8748--8763. PMLR, 2021.

\bibitem[Rae et~al.(2021)Rae, Borgeaud, Cai, Millican, Hoffmann, Song,
  Aslanides, Henderson, Ring, Young, et~al.]{rae2021scaling}
Jack~W Rae, Sebastian Borgeaud, Trevor Cai, Katie Millican, Jordan Hoffmann,
  Francis Song, John Aslanides, Sarah Henderson, Roman Ring, Susannah Young,
  et~al.
\newblock Scaling language models: Methods, analysis \& insights from training
  gopher.
\newblock \emph{arXiv preprint arXiv:2112.11446}, 2021.

\bibitem[Ramesh et~al.(2021)Ramesh, Pavlov, Goh, Gray, Voss, Radford, Chen, and
  Sutskever]{ramesh2021zero}
Aditya Ramesh, Mikhail Pavlov, Gabriel Goh, Scott Gray, Chelsea Voss, Alec
  Radford, Mark Chen, and Ilya Sutskever.
\newblock Zero-shot text-to-image generation.
\newblock In \emph{International Conference on Machine Learning}, pages
  8821--8831. PMLR, 2021.

\bibitem[Ramesh et~al.(2022)Ramesh, Dhariwal, Nichol, Chu, and
  Chen]{ramesh2022hierarchical}
Aditya Ramesh, Prafulla Dhariwal, Alex Nichol, Casey Chu, and Mark Chen.
\newblock Hierarchical text-conditional image generation with clip latents.
\newblock \emph{arXiv preprint arXiv:2204.06125}, 2022.

\bibitem[Recht et~al.(2019)Recht, Roelofs, Schmidt, and
  Shankar]{recht2019imagenet}
Benjamin Recht, Rebecca Roelofs, Ludwig Schmidt, and Vaishaal Shankar.
\newblock Do imagenet classifiers generalize to imagenet?
\newblock In \emph{International Conference on Machine Learning}, pages
  5389--5400. PMLR, 2019.

\bibitem[Sagawa et~al.(2019)Sagawa, Koh, Hashimoto, and
  Liang]{sagawa2019distributionally}
Shiori Sagawa, Pang~Wei Koh, Tatsunori~B Hashimoto, and Percy Liang.
\newblock Distributionally robust neural networks for group shifts: On the
  importance of regularization for worst-case generalization, 2019.
\newblock \url{https://arxiv.org/abs/1911.08731}.

\bibitem[Saharia et~al.(2022)Saharia, Chan, Saxena, Li, Whang, Denton,
  Ghasemipour, Ayan, Mahdavi, Lopes, et~al.]{saharia2022photorealistic}
Chitwan Saharia, William Chan, Saurabh Saxena, Lala Li, Jay Whang, Emily
  Denton, Seyed Kamyar~Seyed Ghasemipour, Burcu~Karagol Ayan, S~Sara Mahdavi,
  Rapha~Gontijo Lopes, et~al.
\newblock Photorealistic text-to-image diffusion models with deep language
  understanding.
\newblock \emph{arXiv preprint arXiv:2205.11487}, 2022.

\bibitem[Schuhmann et~al.(2021)Schuhmann, Vencu, Beaumont, Kaczmarczyk, Mullis,
  Katta, Coombes, Jitsev, and Komatsuzaki]{schuhmann2021laion}
Christoph Schuhmann, Richard Vencu, Romain Beaumont, Robert Kaczmarczyk,
  Clayton Mullis, Aarush Katta, Theo Coombes, Jenia Jitsev, and Aran
  Komatsuzaki.
\newblock Laion-400m: Open dataset of clip-filtered 400 million image-text
  pairs.
\newblock \emph{arXiv preprint arXiv:2111.02114}, 2021.

\bibitem[Shankar et~al.(2020)Shankar, Roelofs, Mania, Fang, Recht, and
  Schmidt]{shankar2020evaluating}
Vaishaal Shankar, Rebecca Roelofs, Horia Mania, Alex Fang, Benjamin Recht, and
  Ludwig Schmidt.
\newblock Evaluating machine accuracy on imagenet.
\newblock In \emph{International Conference on Machine Learning (ICML)}, 2020.
\newblock \url{http://proceedings.mlr.press/v119/shankar20c/shankar20c.pdf}.

\bibitem[Sorscher et~al.(2022)Sorscher, Geirhos, Shekhar, Ganguli, and
  Morcos]{sorscher2022beyond}
Ben Sorscher, Robert Geirhos, Shashank Shekhar, Surya Ganguli, and Ari~S
  Morcos.
\newblock Beyond neural scaling laws: beating power law scaling via data
  pruning.
\newblock \emph{arXiv preprint arXiv:2206.14486}, 2022.

\bibitem[Srinivasan et~al.(2021)Srinivasan, Raman, Chen, Bendersky, and
  Najork]{srinivasan2021wit}
Krishna Srinivasan, Karthik Raman, Jiecao Chen, Michael Bendersky, and Marc
  Najork.
\newblock Wit: Wikipedia-based image text dataset for multimodal multilingual
  machine learning.
\newblock In \emph{Proceedings of the 44th International ACM SIGIR Conference
  on Research and Development in Information Retrieval}, pages 2443--2449,
  2021.

\bibitem[Stefanini et~al.(2021)Stefanini, Cornia, Baraldi, Cascianelli,
  Fiameni, and Cucchiara]{stefanini2021show}
Matteo Stefanini, Marcella Cornia, Lorenzo Baraldi, Silvia Cascianelli,
  Giuseppe Fiameni, and Rita Cucchiara.
\newblock From show to tell: A survey on image captioning.
\newblock \emph{arXiv preprint arXiv:2107.06912}, 2021.

\bibitem[Sumbul et~al.(2019)Sumbul, Charfuelan, Demir, and
  Markl]{Sumbul2019BigEarthNetAL}
Gencer Sumbul, Marcela Charfuelan, Beg{"u}m Demir, and Volker Markl.
\newblock Bigearthnet: A large-scale benchmark archive for remote sensing image
  understanding.
\newblock \emph{CoRR}, abs/1902.06148, 2019.

\bibitem[Szegedy et~al.(2013)Szegedy, Zaremba, Sutskever, Bruna, Erhan,
  Goodfellow, and Fergus]{szegedy2013intriguing}
Christian Szegedy, Wojciech Zaremba, Ilya Sutskever, Joan Bruna, Dumitru Erhan,
  Ian Goodfellow, and Rob Fergus.
\newblock Intriguing properties of neural networks.
\newblock \emph{arXiv preprint arXiv:1312.6199}, 2013.

\bibitem[Taori et~al.(2020)Taori, Dave, Shankar, Carlini, Recht, and
  Schmidt]{taori2020measuring}
Rohan Taori, Achal Dave, Vaishaal Shankar, Nicholas Carlini, Benjamin Recht,
  and Ludwig Schmidt.
\newblock Measuring robustness to natural distribution shifts in image
  classification.
\newblock \emph{Advances in Neural Information Processing Systems},
  33:\penalty0 18583--18599, 2020.

\bibitem[Thomee et~al.(2016)Thomee, Shamma, Friedland, Elizalde, Ni, Poland,
  Borth, and Li]{thomee2016yfcc100m}
Bart Thomee, David~A Shamma, Gerald Friedland, Benjamin Elizalde, Karl Ni,
  Douglas Poland, Damian Borth, and Li-Jia Li.
\newblock Yfcc100m: The new data in multimedia research.
\newblock \emph{Communications of the ACM}, 59\penalty0 (2):\penalty0 64--73,
  2016.

\bibitem[Thoppilan et~al.(2022)Thoppilan, De~Freitas, Hall, Shazeer,
  Kulshreshtha, Cheng, Jin, Bos, Baker, Du, et~al.]{thoppilan2022lamda}
Romal Thoppilan, Daniel De~Freitas, Jamie Hall, Noam Shazeer, Apoorv
  Kulshreshtha, Heng-Tze Cheng, Alicia Jin, Taylor Bos, Leslie Baker, Yu~Du,
  et~al.
\newblock Lamda: Language models for dialog applications.
\newblock \emph{arXiv preprint arXiv:2201.08239}, 2022.

\bibitem[Torralba and Efros(2011)]{torralba2011unbiased}
Antonio Torralba and Alexei~A Efros.
\newblock Unbiased look at dataset bias.
\newblock In \emph{CVPR 2011}, pages 1521--1528. IEEE, 2011.

\bibitem[Van~Horn et~al.(2018)Van~Horn, Mac~Aodha, Song, Cui, Sun, Shepard,
  Adam, Perona, and Belongie]{van2018inaturalist}
Grant Van~Horn, Oisin Mac~Aodha, Yang Song, Yin Cui, Chen Sun, Alex Shepard,
  Hartwig Adam, Pietro Perona, and Serge Belongie.
\newblock The inaturalist species classification and detection dataset.
\newblock In \emph{Proceedings of the IEEE conference on computer vision and
  pattern recognition}, pages 8769--8778, 2018.

\bibitem[Veeling et~al.(2018)Veeling, Linmans, Winkens, Cohen, and
  Welling]{veeling2018rotation}
Bastiaan~S Veeling, Jasper Linmans, Jim Winkens, Taco Cohen, and Max Welling.
\newblock Rotation equivariant cnns for digital pathology.
\newblock In \emph{International Conference on Medical image computing and
  computer-assisted intervention}, pages 210--218. Springer, 2018.

\bibitem[Wang et~al.(2019)Wang, Ge, Lipton, and Xing]{wang2019learning}
Haohan Wang, Songwei Ge, Zachary Lipton, and Eric~P Xing.
\newblock Learning robust global representations by penalizing local predictive
  power.
\newblock \emph{Advances in Neural Information Processing Systems}, 32, 2019.

\bibitem[Wortsman et~al.(2021)Wortsman, Ilharco, Kim, Li, Kornblith, Roelofs,
  Gontijo-Lopes, Hajishirzi, Farhadi, Namkoong, and
  Schmidt]{wortsman2021robust}
Mitchell Wortsman, Gabriel Ilharco, Jong~Wook Kim, Mike Li, Simon Kornblith,
  Rebecca Roelofs, Raphael Gontijo-Lopes, Hannaneh Hajishirzi, Ali Farhadi,
  Hongseok Namkoong, and Ludwig Schmidt.
\newblock Robust fine-tuning of zero-shot models.
\newblock \emph{arXiv preprint arXiv:2109.01903}, 2021.
\newblock \url{https://arxiv.org/abs/2109.01903}.

\bibitem[Wortsman et~al.(2022)Wortsman, Ilharco, Gadre, Roelofs, Gontijo-Lopes,
  Morcos, Namkoong, Farhadi, Carmon, Kornblith, et~al.]{wortsman2022model}
Mitchell Wortsman, Gabriel Ilharco, Samir~Yitzhak Gadre, Rebecca Roelofs,
  Raphael Gontijo-Lopes, Ari~S Morcos, Hongseok Namkoong, Ali Farhadi, Yair
  Carmon, Simon Kornblith, et~al.
\newblock Model soups: averaging weights of multiple fine-tuned models improves
  accuracy without increasing inference time.
\newblock \emph{International Conference on Machine Learning (ICML)}, 2022.
\newblock \url{https://arxiv.org/abs/2203.05482}.

\bibitem[Yao et~al.(2021)Yao, Huang, Hou, Lu, Niu, Xu, Liang, Li, Jiang, and
  Xu]{yao2021filip}
Lewei Yao, Runhui Huang, Lu~Hou, Guansong Lu, Minzhe Niu, Hang Xu, Xiaodan
  Liang, Zhenguo Li, Xin Jiang, and Chunjing Xu.
\newblock Filip: Fine-grained interactive language-image pre-training.
\newblock \emph{arXiv preprint arXiv:2111.07783}, 2021.

\bibitem[Yu et~al.(2022)Yu, Xu, Koh, Luong, Baid, Wang, Vasudevan, Ku, Yang,
  Ayan, et~al.]{yu2022scaling}
Jiahui Yu, Yuanzhong Xu, Jing~Yu Koh, Thang Luong, Gunjan Baid, Zirui Wang,
  Vijay Vasudevan, Alexander Ku, Yinfei Yang, Burcu~Karagol Ayan, et~al.
\newblock Scaling autoregressive models for content-rich text-to-image
  generation.
\newblock \emph{arXiv preprint arXiv:2206.10789}, 2022.

\bibitem[Yuan et~al.(2021)Yuan, Chen, Chen, Codella, Dai, Gao, Hu, Huang, Li,
  Li, et~al.]{yuan2021florence}
Lu~Yuan, Dongdong Chen, Yi-Ling Chen, Noel Codella, Xiyang Dai, Jianfeng Gao,
  Houdong Hu, Xuedong Huang, Boxin Li, Chunyuan Li, et~al.
\newblock Florence: A new foundation model for computer vision.
\newblock \emph{arXiv preprint arXiv:2111.11432}, 2021.

\bibitem[Zhang et~al.(2021)Zhang, Fang, Gao, Zhang, Li, Dai, Qiao, and
  Li]{zhang2021tip}
Renrui Zhang, Rongyao Fang, Peng Gao, Wei Zhang, Kunchang Li, Jifeng Dai,
  Yu~Qiao, and Hongsheng Li.
\newblock Tip-adapter: Training-free clip-adapter for better vision-language
  modeling, 2021.
\newblock \url{https://arxiv.org/abs/2111.03930}.

\bibitem[Zhou et~al.(2022)Zhou, Yang, Loy, and Liu]{zhou2022cocoop}
Kaiyang Zhou, Jingkang Yang, Chen~Change Loy, and Ziwei Liu.
\newblock Conditional prompt learning for vision-language models.
\newblock \emph{Conference on Computer Vision and Pattern Recognition (CVPR)},
  2022.
\newblock \url{https://arxiv.org/abs/2109.01134}.

\bibitem[Zhou et~al.(2020)Zhou, Palangi, Zhang, Hu, Corso, and
  Gao]{zhou2020unified}
Luowei Zhou, Hamid Palangi, Lei Zhang, Houdong Hu, Jason Corso, and Jianfeng
  Gao.
\newblock Unified vision-language pre-training for image captioning and vqa.
\newblock In \emph{Proceedings of the AAAI Conference on Artificial
  Intelligence}, volume~34, pages 13041--13049, 2020.

\end{thebibliography}

\clearpage
\newpage

\onecolumn

\appendix

\etocdepthtag.toc{mtappendix}
\section{Dataset Details}\label{app:data_details}
\subsection{Pre-training Datasets}\label{app:train_data_details}
\begin{table}[h!]
\centering
\begin{tabular}{ |c|c|c| } 
 \hline
 Dataset & Source & Total Size \\ 
 \hline
 YFCC & Flickr & 14,826,000 \\ 
 LAION & Common Crawl & 15,504,742 \\ 
 CC-12M & Unspecified web pages & 9,594,338 \\ 
 RedCaps & Reddit & 11,882,403 \\ 
 WIT & Wikipedia & 5,038,295 \\ 
 ShutterStock & ShutterStock & 15,540,452 \\ 
 \hline
\end{tabular}
\caption{Origin and total number of samples for each of the datasets we used in our experiments.}
\end{table}

To get a better understanding of the diversity of different data sources, we analyze the distributions of caption lengths, image sizes and image aspect ratios for a set of 10,000 samples randomly selected from each source:
\begin{figure}[h]
\begin{center}
\includegraphics[trim=0 0 0 0,clip,width=0.95\linewidth]{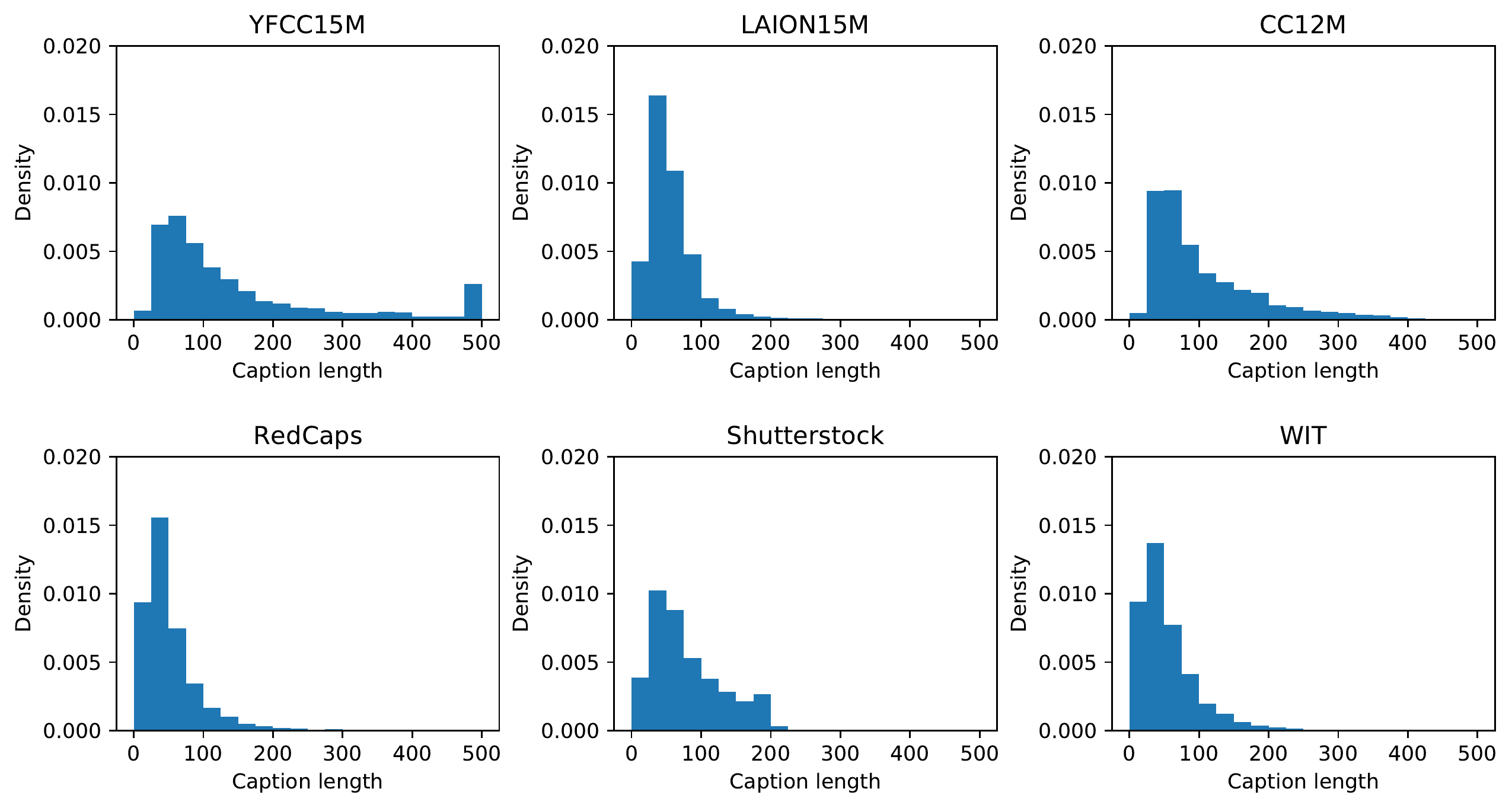}
\end{center}
\caption{\small \textbf{Distributions of caption lengths for each data source.}
}
\end{figure}
\begin{figure}[h]
\begin{center}
\includegraphics[trim=0 0 0 0,clip,width=0.95\linewidth]{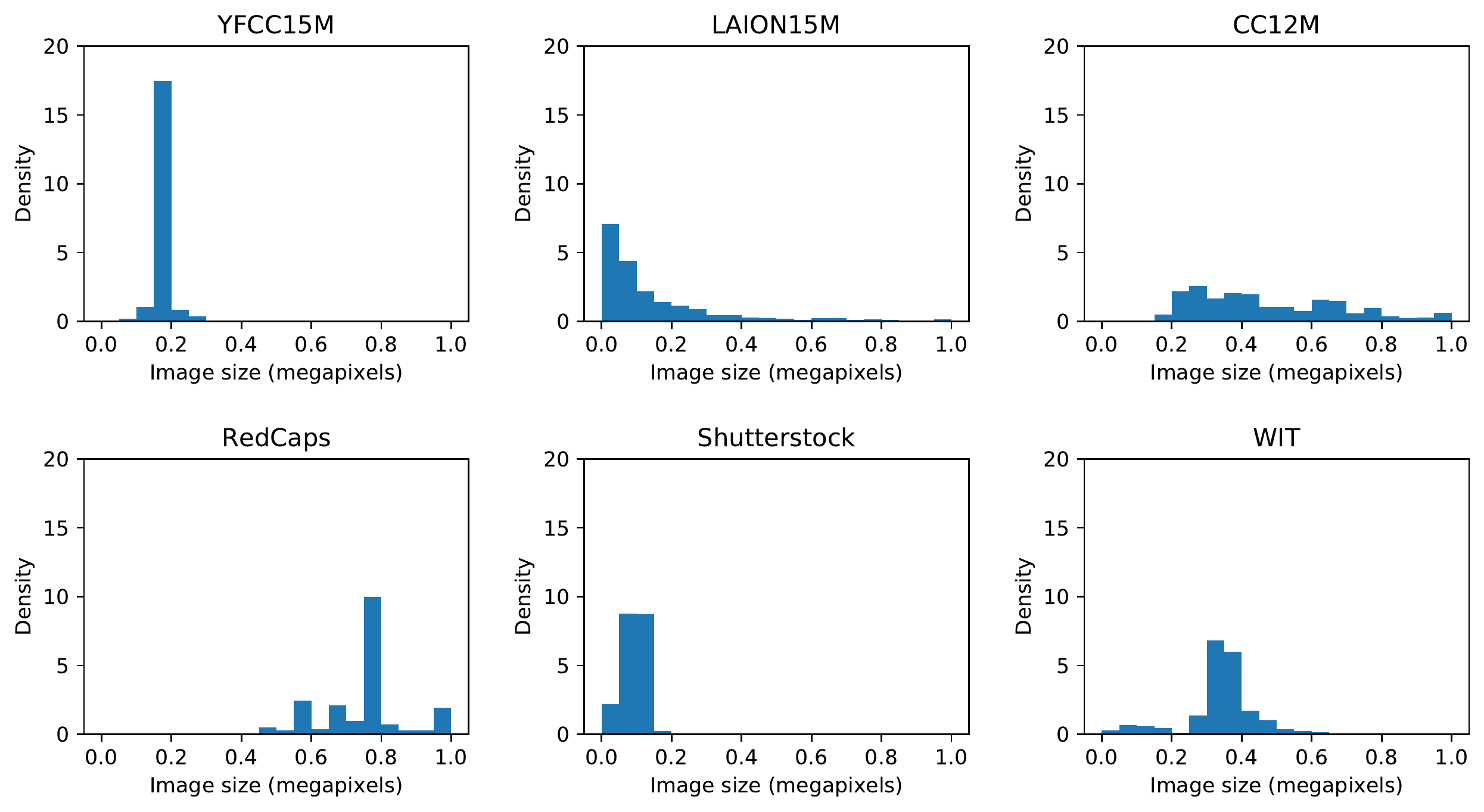}
\end{center}
\caption{\small \textbf{Distributions of image sizes for each data source.}
}
\end{figure}
\begin{figure}[h]
\begin{center}
\includegraphics[trim=0 0 0 0,clip,width=0.95\linewidth]{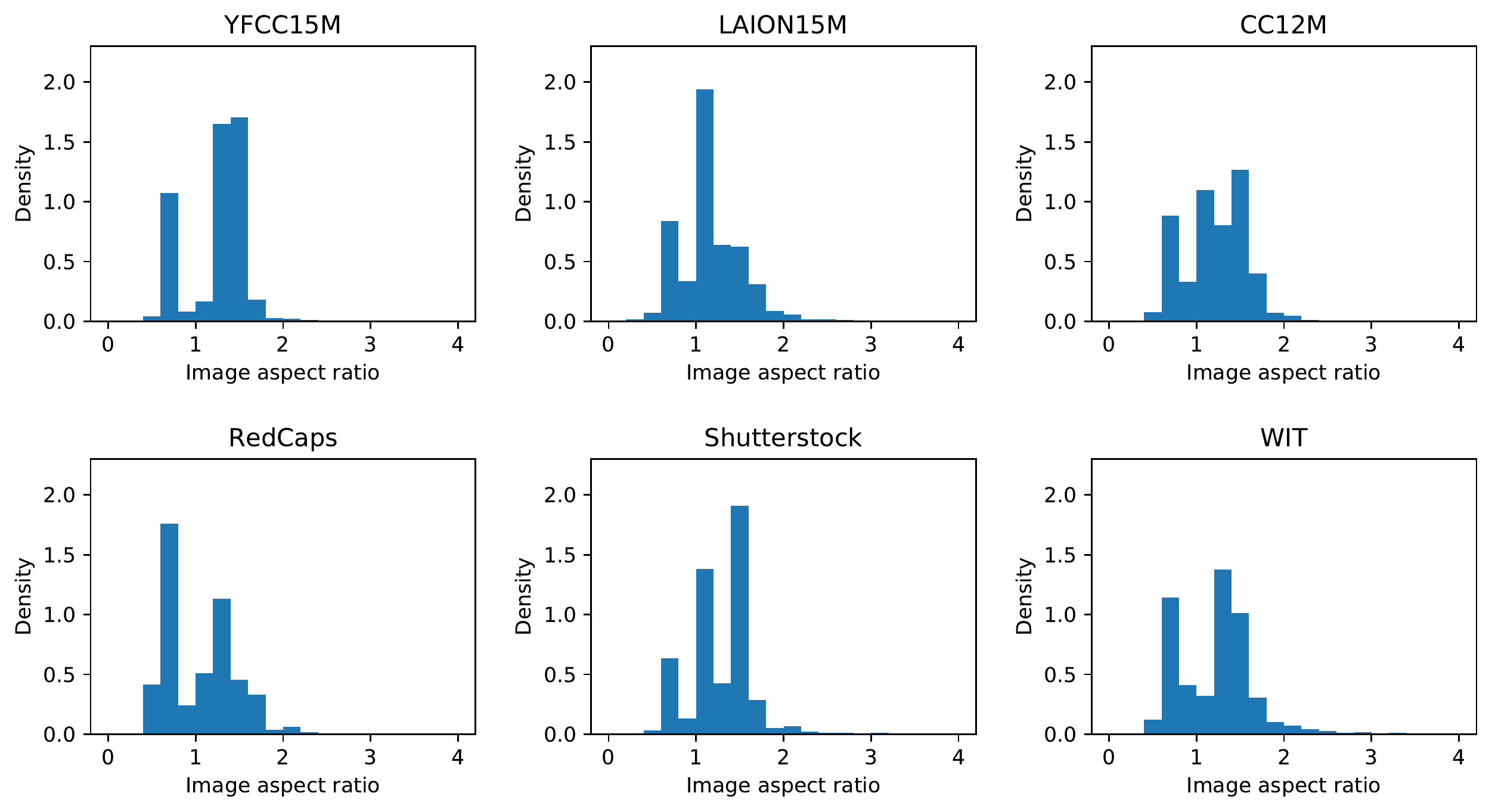}
\end{center}
\caption{\small \textbf{Distributions of image aspect ratios for each data source.}
}
\end{figure}

\FloatBarrier
Below we also show some examples of image-caption pairs randomly selected from each data source:
\begin{figure}[h]
\vspace{-1em}
\begin{center}
\includegraphics[trim=0 0 0 0,clip,width=0.99\linewidth]{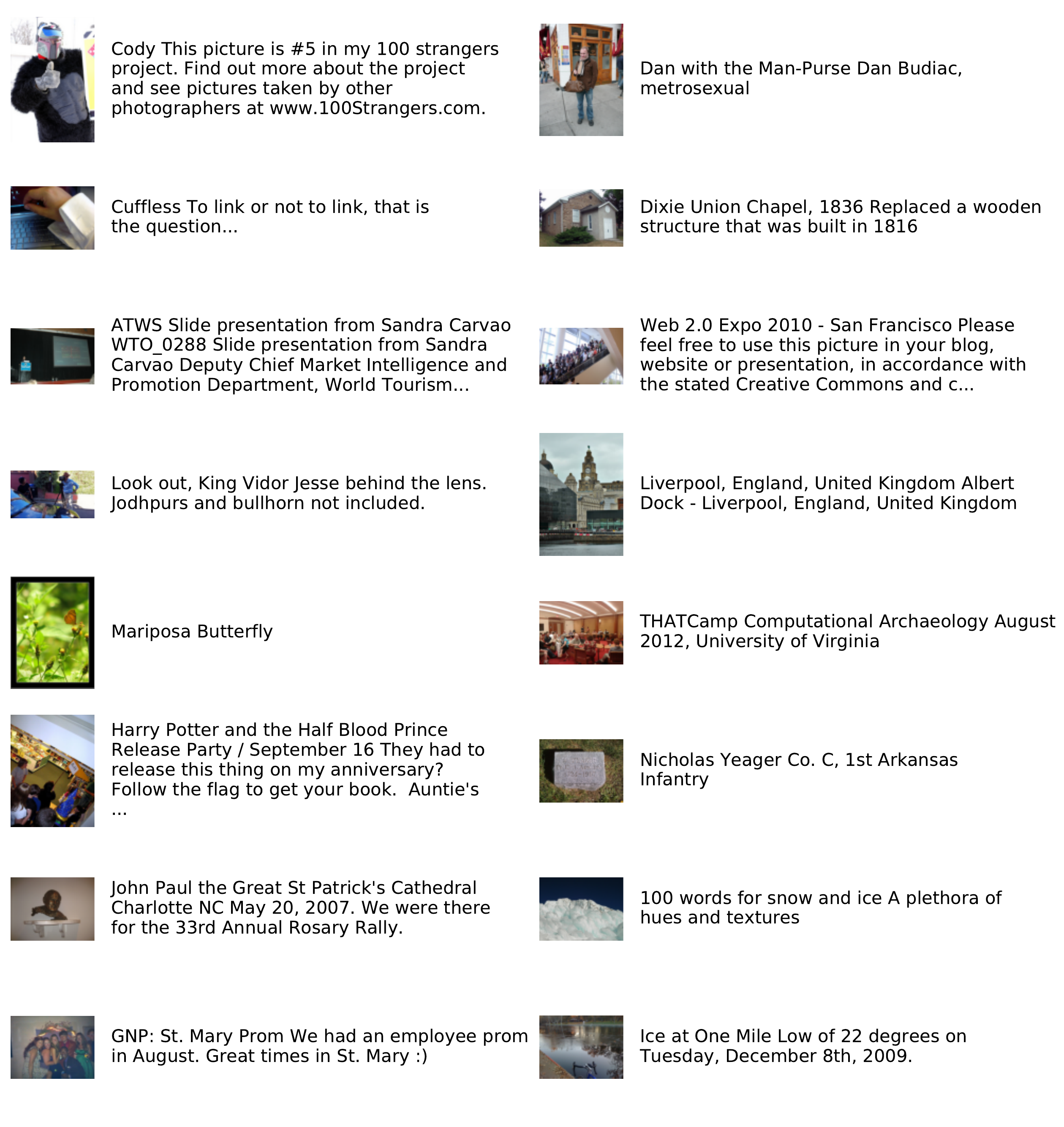}
\end{center}
\vspace{-1em}
\caption{\small \textbf{Random training samples from YFCC.}
}
\vspace{-1em}
\end{figure}
\begin{figure}[h]
\begin{center}
\includegraphics[trim=0 0 0 0,clip,width=\linewidth]{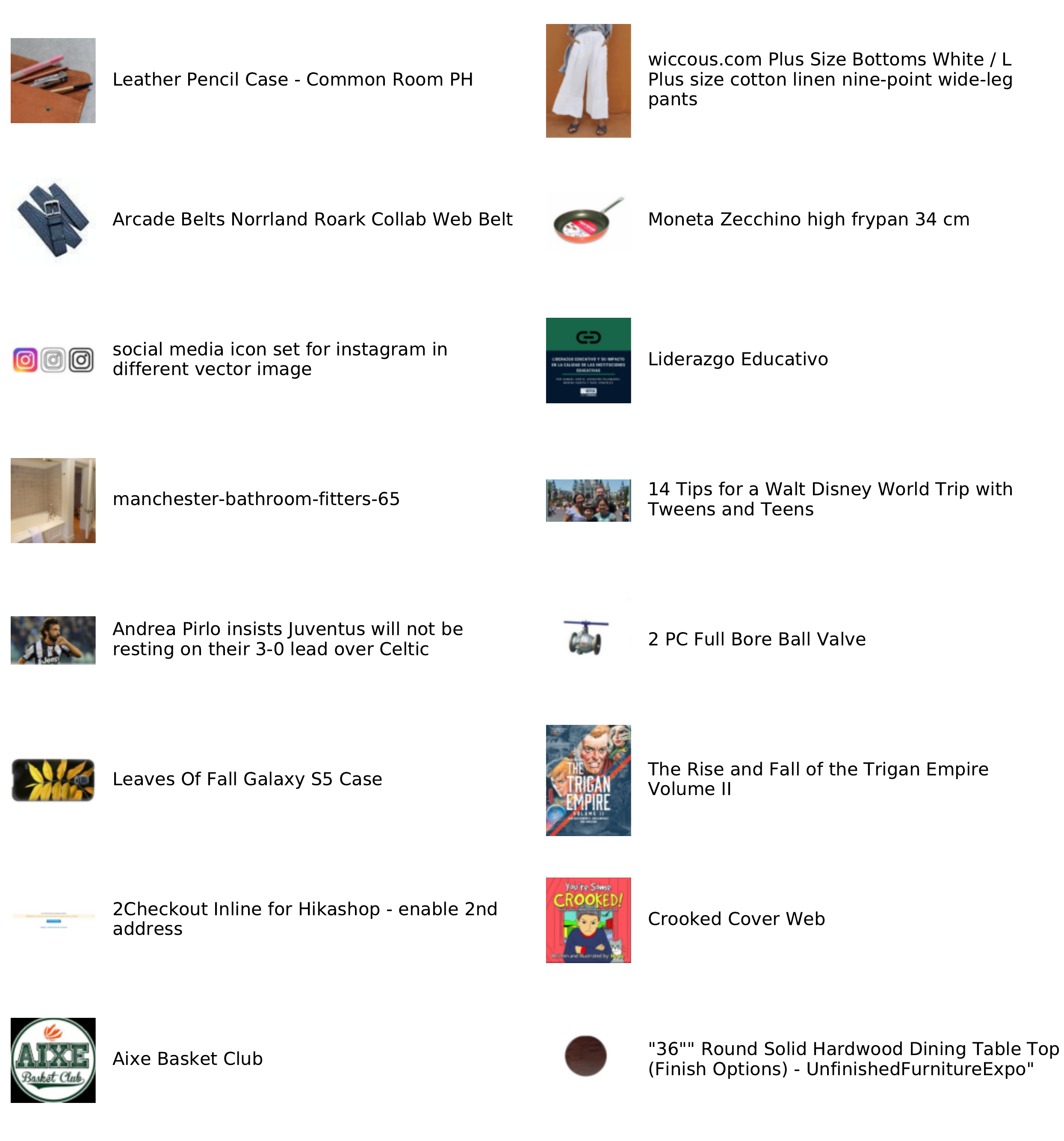}
\end{center}
\caption{\small \textbf{Random training samples from LAION.}
}
\end{figure}
\begin{figure}[h]
\begin{center}
\includegraphics[trim=0 0 0 0,clip,width=\linewidth]{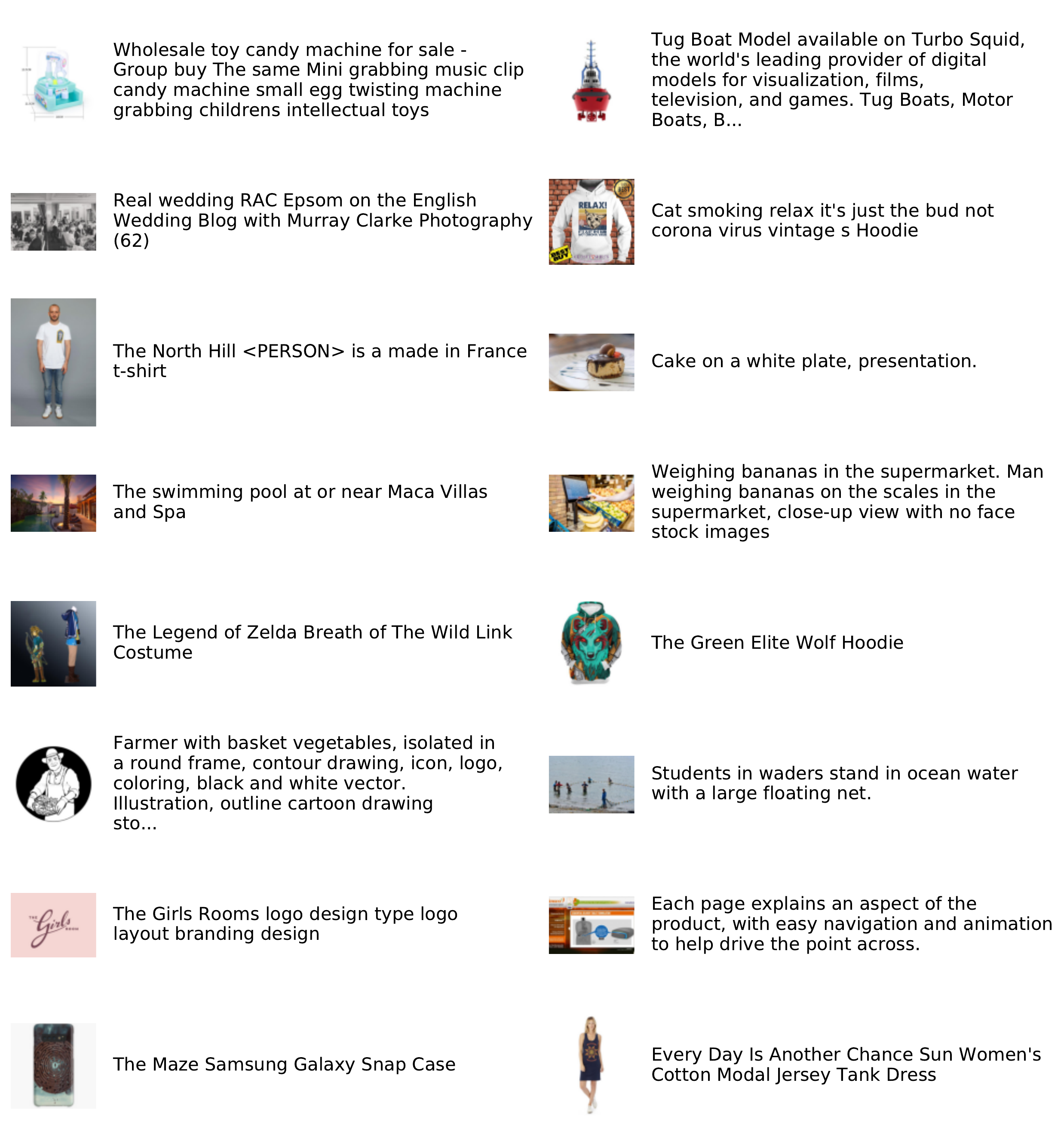}
\end{center}
\caption{\small \textbf{Random training samples from Conceptual Captions (CC-12M).}
}
\end{figure}
\begin{figure}[h]
\begin{center}
\includegraphics[trim=0 0 0 0,clip,width=\linewidth]{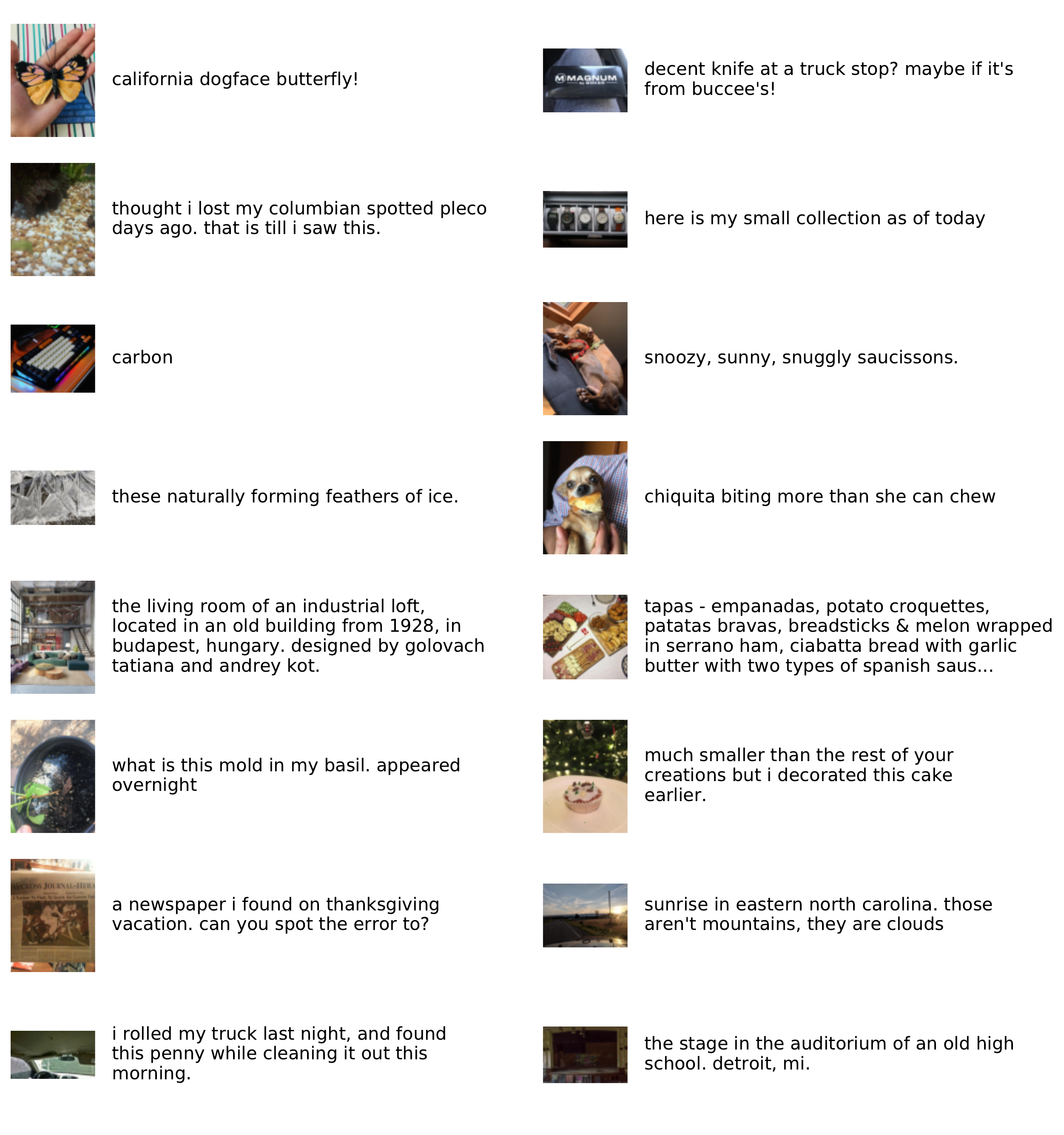}
\end{center}
\caption{\small \textbf{Random training samples from RedCaps.}
}
\end{figure}
\begin{figure}[h]
\begin{center}
\includegraphics[trim=0 0 0 0,clip,width=\linewidth]{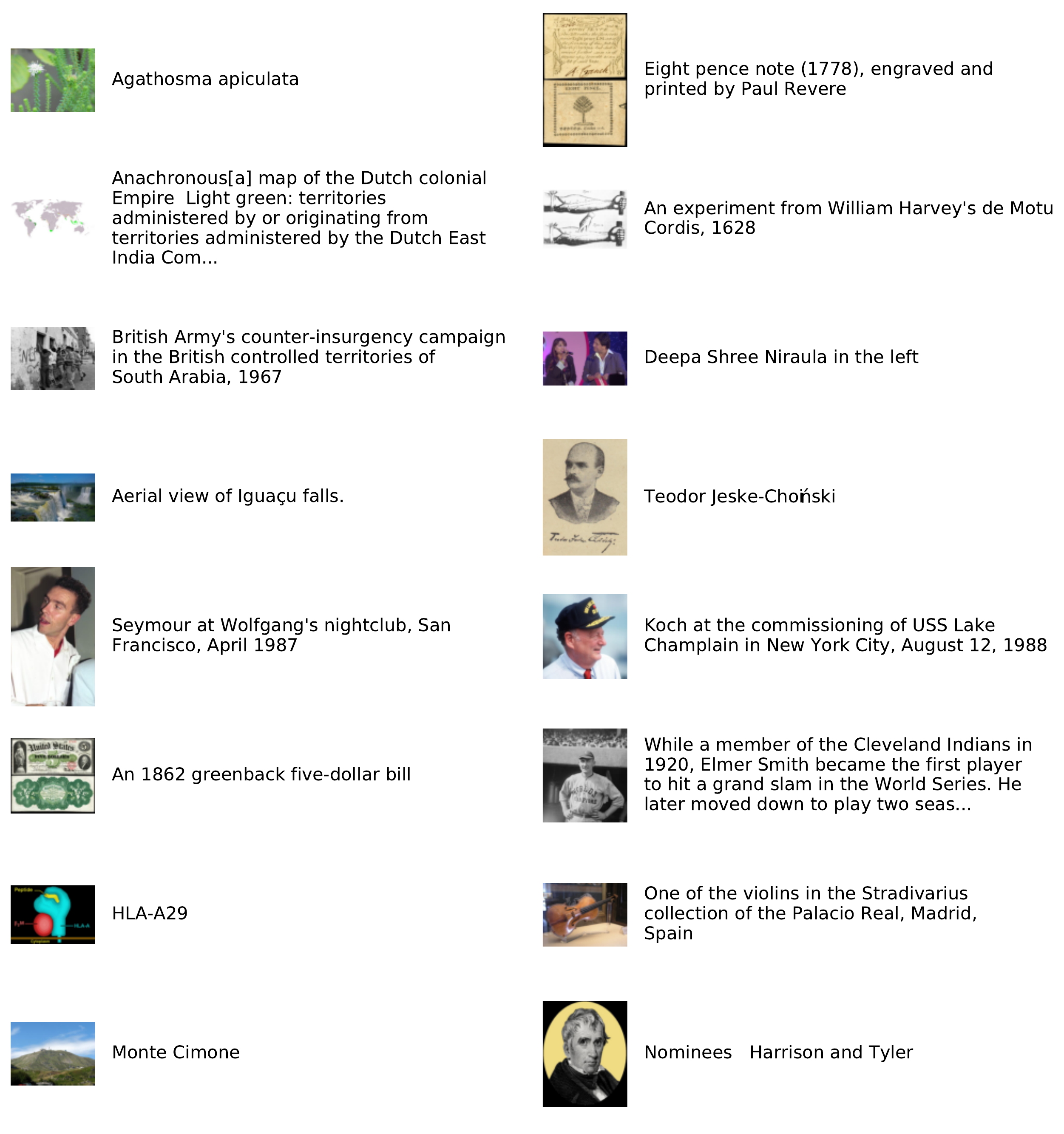}
\end{center}
\caption{\small \textbf{Random training samples from WIT.}
}
\end{figure}
\begin{figure}[h]
\begin{center}
\includegraphics[trim=0 0 0 0,clip,width=\linewidth]{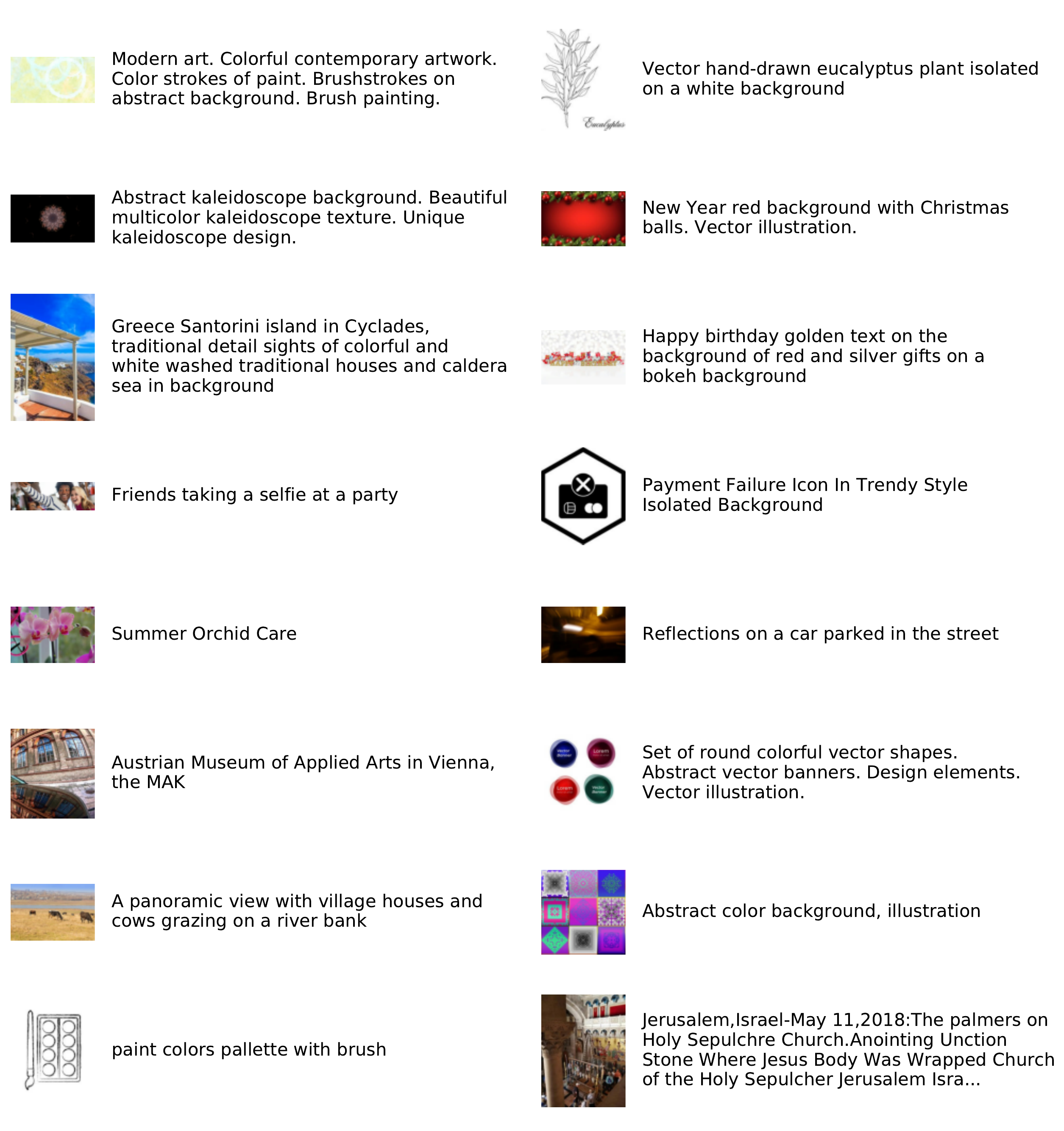}
\end{center}
\caption{\small \textbf{Random training samples from ShutterStock.}
}
\end{figure}

\FloatBarrier
\subsection{Test Distributions}\label{app:test_data_details}
Figure \ref{fig:test_datasets} illustrates the four distribution shifts that we use for evaluating the quality of CLIP features after pre-training on different data sources.
\begin{figure}[h]
\begin{center}
\includegraphics[trim=0 0 0 0,clip,width=\linewidth]{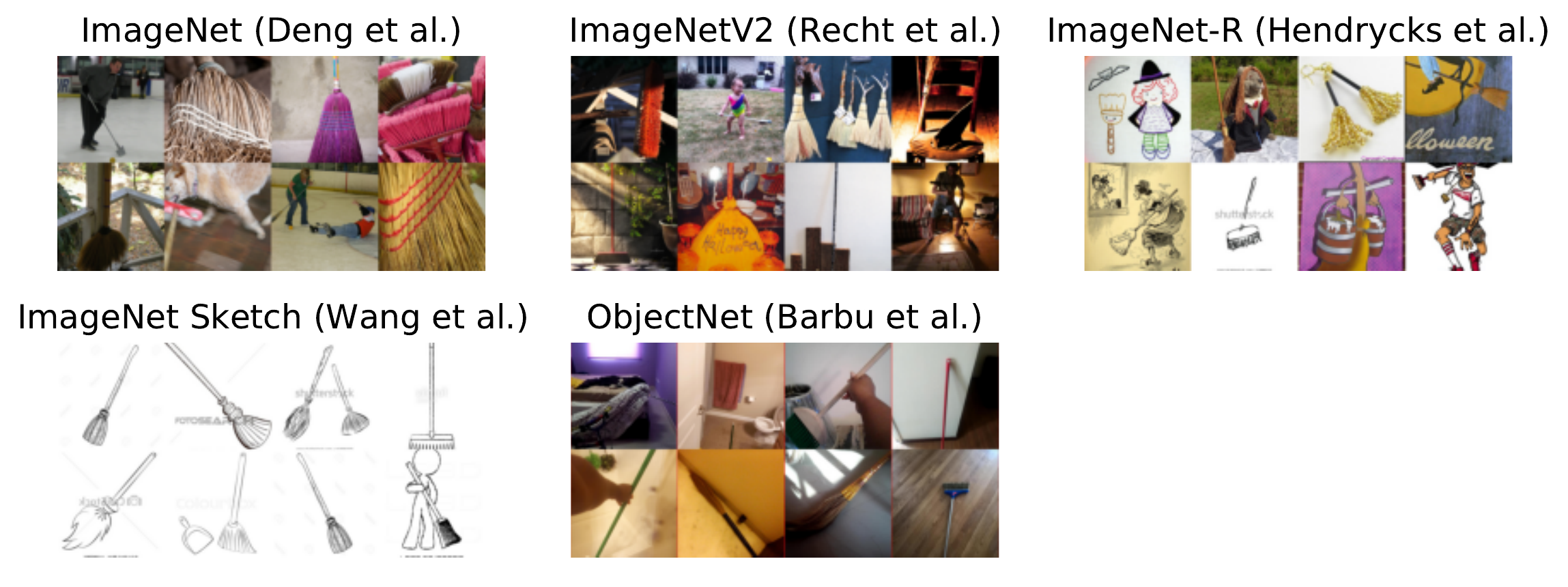}
\end{center}
\caption{\small \textbf{Distribution shifts at test time.} We visualize samples of the class ``broom''  from the reference distribution ImageNet \cite{deng2009imagenet}, and the four distribution shifts derived from ImageNet: ImageNet-V2 \cite{recht2019imagenet}, ImageNet-R \cite{hendrycks2021many}, ImageNet-Sketch \cite{wang2019learning} and ObjectNet \cite{barbu2019objectnet}.
}
\label{fig:test_datasets}
\end{figure}

\section{Training Details}\label{app:training_details}
Our implementation closely follows the training code from OpenCLIP GitHub repository \cite{ilharco_gabriel_2021_5143773}.
When training CLIP from scratch on each of the pre-training datasets, unless otherwise mentioned, we use AdamW optimizer \cite{loshchilov2017decoupled} with default PyTorch parameters $\beta_1 = 0.9, \beta_2 =  0.999, \epsilon=10^{-8}$, (per GPU) batch size 128 and weight decay of 0.1. For learning rate, we start with a learning rate of $10^{-3}$ and apply a cosine-annealing learning rate schedule \cite{loshchilov2016sgdr} with 5,000 steps. We use the same data augmentations as in \cite{radford2021learning}. Models then undergo distributed training on 8 A40 or A100 GPUs for 16 epochs. 

\newpage
\section{Behavior of Individual Data Sources}\label{app:single_source}
\begin{figure}[h]
\begin{center}
\includegraphics[trim=0 0 0 0,clip,width=0.9\linewidth]{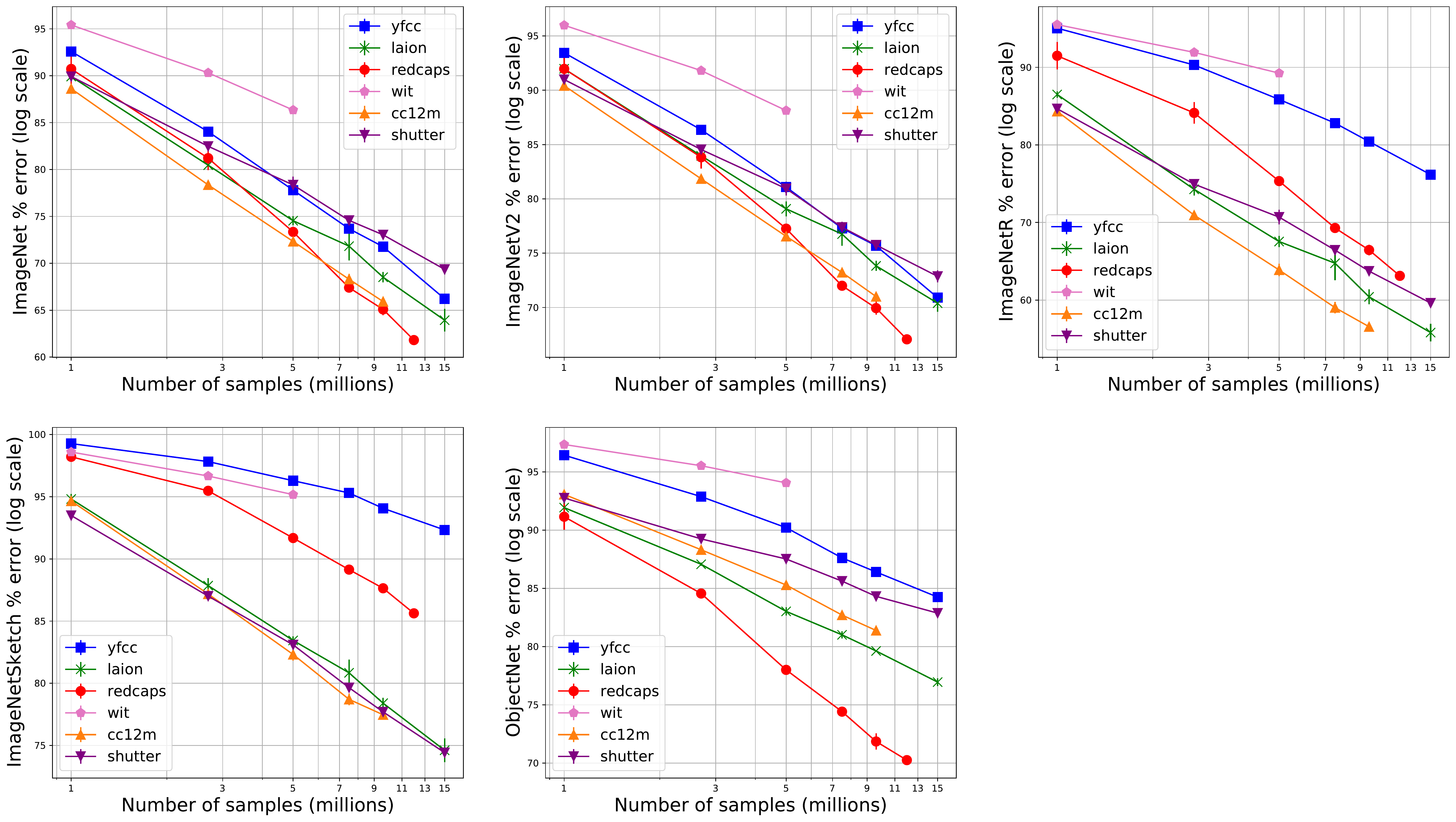}
\end{center}
\caption{\small \textbf{Data efficiency of the six pre-training data sources on different test sets}. For each source, we randomly sample various subsets of data with sizes ranging from 1M to a maximum of 15M samples, and measure the zero-shot classification error of a CLIP model trained on the subset, on ImageNet and the four shifted test sets (i.e., ImageNet-V2, ImageNet-R, ImageNet-Sketch, ObjectNet). Plotted error values are log-transformed and averaged over 3 random seeds. We find that the data efficiency (i.e., how fast the error would decrease with more samples) of the six data sources varies significantly based on the evaluation setting.
}
\label{fig:data_efficiency}
\end{figure}
\newpage
\section{Input Mixing}\label{app:input_mixing}
\subsection{More Experiments with CLIP Pre-training Data Sources}
\begin{figure}[h]
\begin{center}
\includegraphics[trim=0 0 0 0,clip,width=\linewidth]{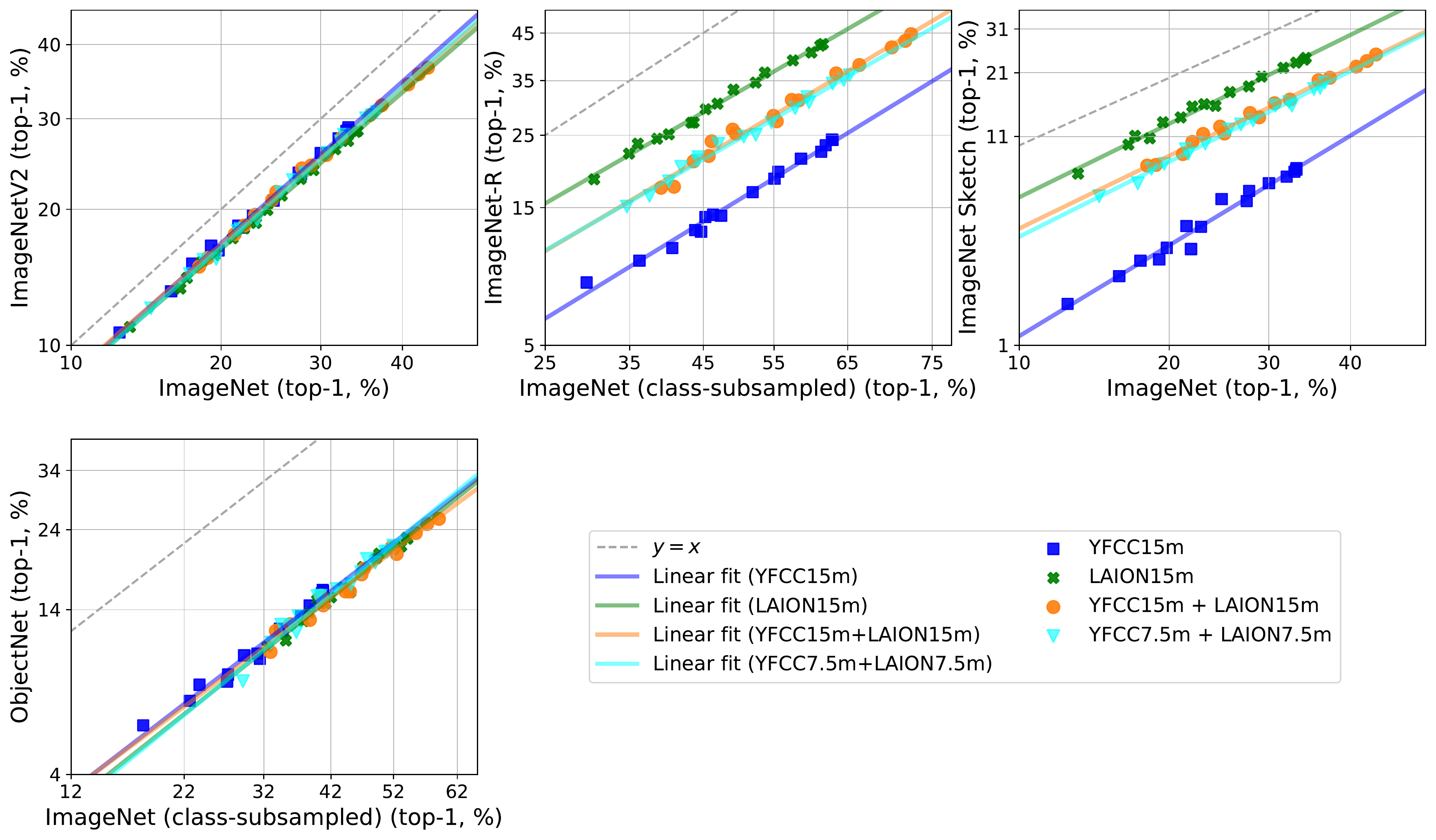}
\end{center}
\caption{\small \textbf{Full plot for Figure \ref{fig:yfcc_laion_input_mix} with all distribution shifts.} Combining YFCC and LAION training data in equal ratios results in a CLIP model with intermediate robustness.
}
\end{figure}

\begin{figure}[h]
\begin{center}
\includegraphics[trim=0 0 0 0,clip,width=\linewidth]{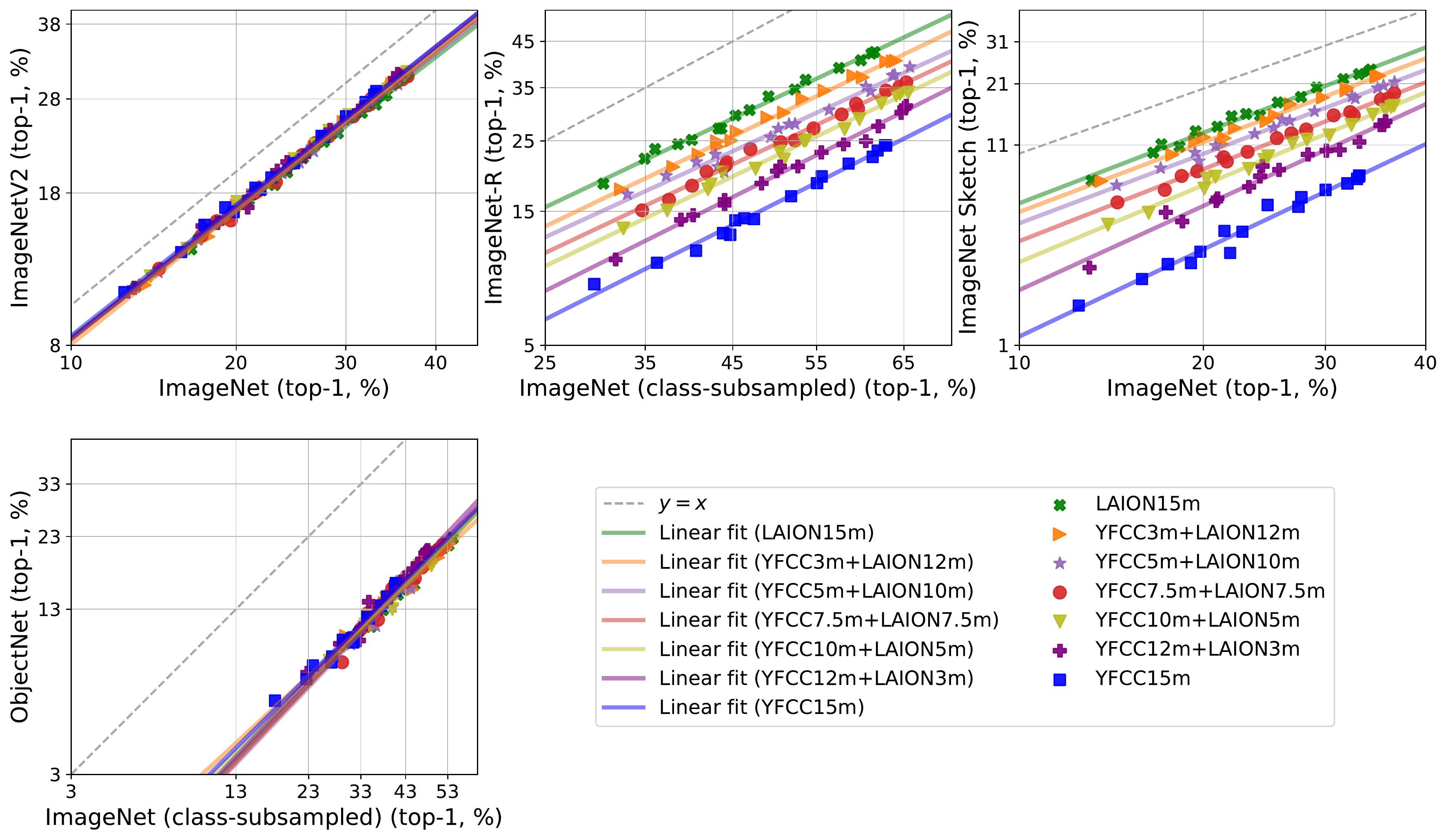}
\end{center}
\caption{\small \textbf{Full plot for Figure \ref{fig:yfcc_laion_data_availability} with all distribution shifts.} Varying the sample contributions of YFCC and LAION to the input data mixture produces a smooth interpolation of the linear trend between the trends of training on YFCC and LAION separately.
}
\end{figure}

\begin{figure}[h]
\begin{center}
\includegraphics[trim=0 0 0 0,clip,width=\linewidth]{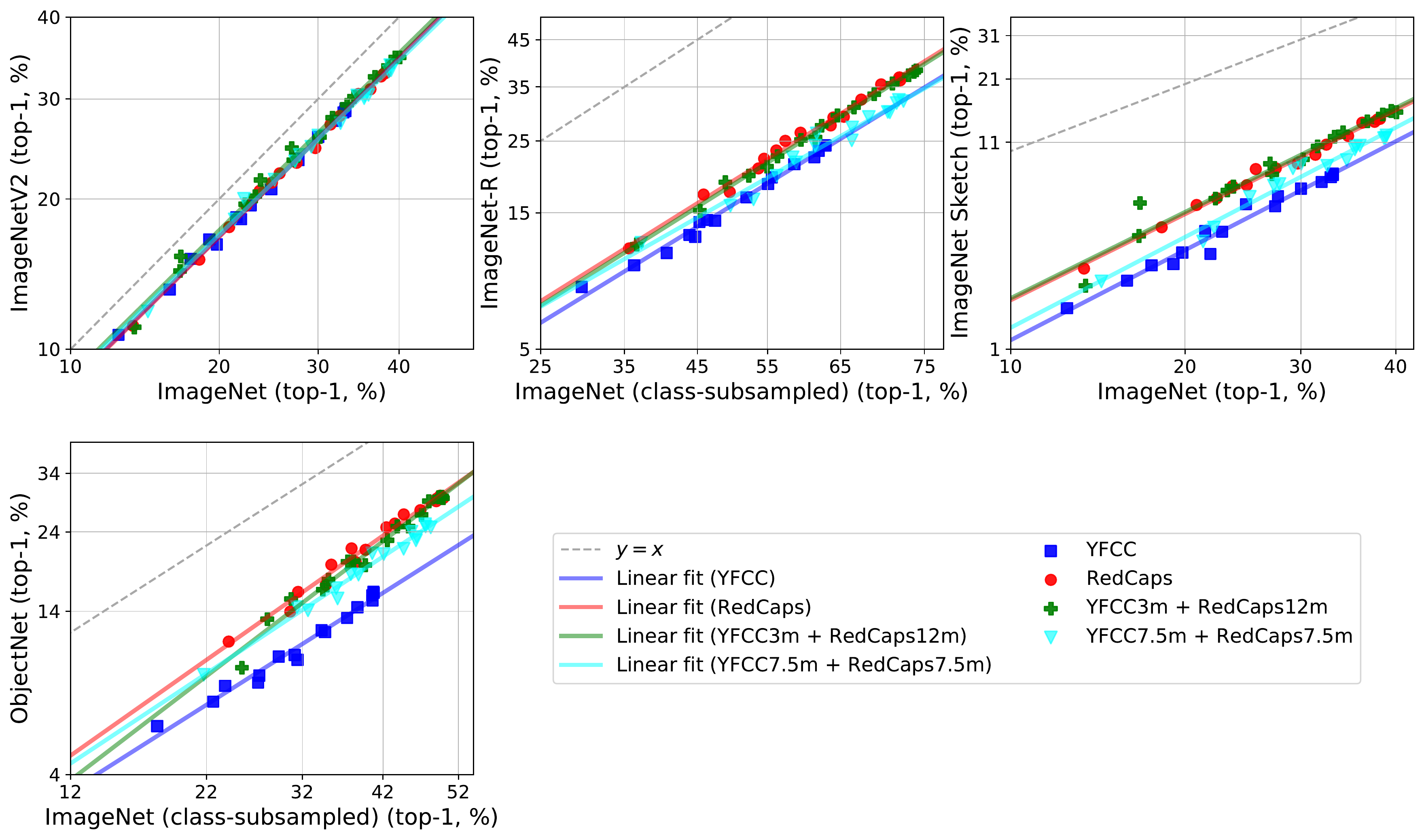}
\end{center}
\caption{\small \textbf{Input mixing results for YFCC and RedCaps data sources.} Similar to previous observations (Figure \ref{fig:yfcc_laion_data_availability}), combining YFCC and RedCaps data in the pre-training dataset with different ratios yields different linear trends that all lie between that of training on YFCC and that of training on RedCaps alone.
}
\end{figure}

\begin{figure}[h]
\begin{center}
\includegraphics[trim=0 0 0 0,clip,width=\linewidth]{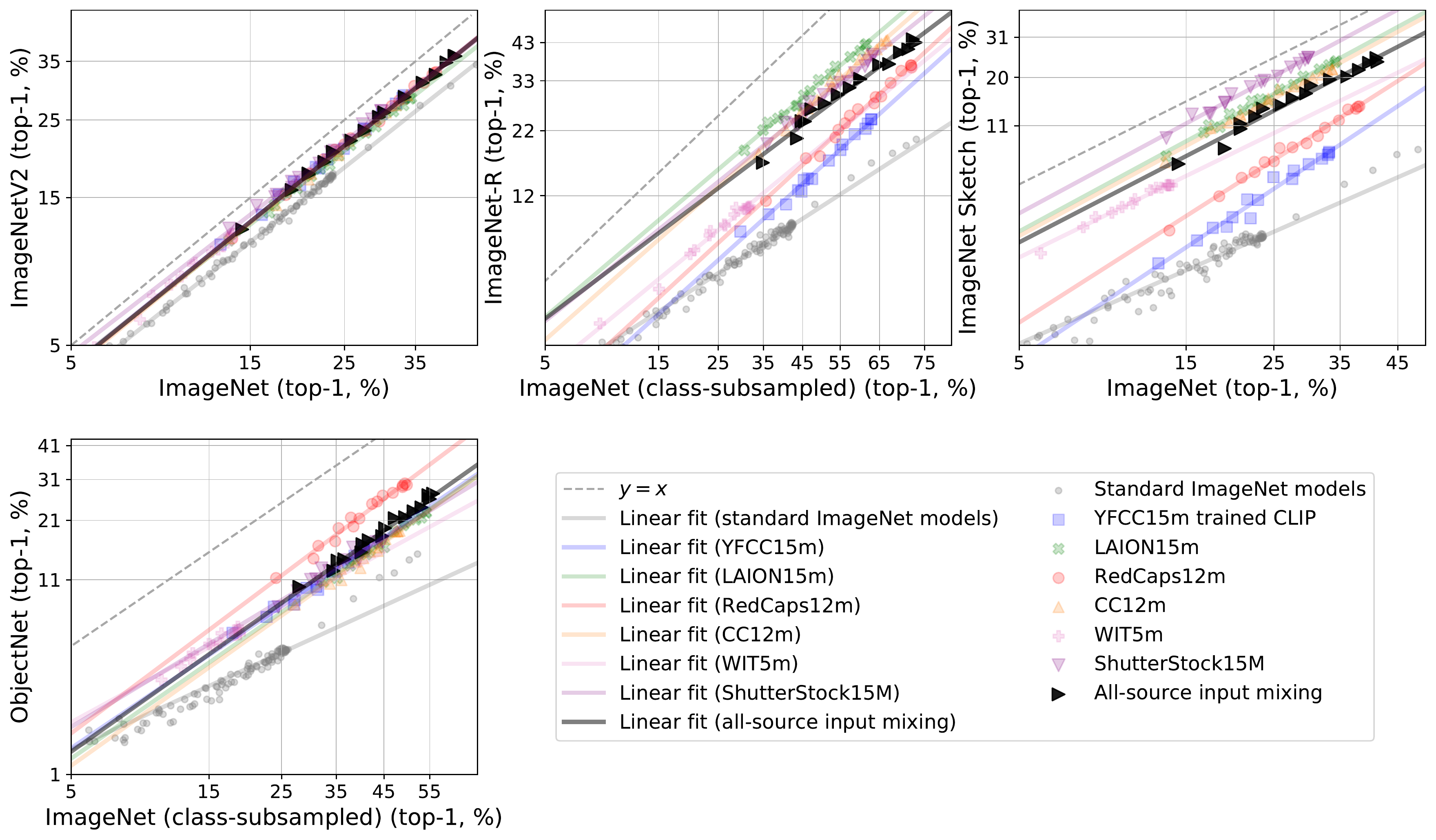}
\end{center}
\caption{\small \textbf{Input mixing results for all six data sources.} We combine data from all six sources in the testbed with equal ratios (i.e., taking 2.7M samples from each), and find that the resulting robustness of CLIP trained on this data mixture (black line), is less than that of training only on the best-performing data source for each distribution shift setting. 
}
\label{ref:all_source_input_mix}
\end{figure}
\FloatBarrier
\subsection{Experiments on CIFAR-10 \& CINIC-10}
We also investigate the phenomenon that mixing data sources resulting in diluted robustness (Section \ref{sec:input_mixing}) in smaller-scale, uni-modal classification settings. Here, we experiment with mixing CIFAR-10 \cite{cifar10} and CINIC-10 \cite{darlow2018cinic} sources, each having 50K samples in total. CINIC-10 is itself a mixture of CIFAR-10 images and images selected and downsampled from the ImageNet database (for the same 10 classes). We use three architectures---ResNet-18, ResNet-34 and ResNet-50 \cite{he2016deep}---and vary the number of epochs of training to obtain models of different accuracies. Models are evaluated on both CIFAR-10 and CINIC-10 standard test sets, and their performances are plotted along the axes of a scatter plot. Similar to previous input mixing results for CLIP, we observe in Figure \ref{fig:cifar_cinic_input_mix} that ResNets trained on a 50K-sample dataset made up of \textit{both} CIFAR-10 and CINIC-10 data, produce a linear trend that lies in between the trends of training models separately on just 50K CIFAR-10 images and just 50K CINIC-10 images.

\begin{figure}[h]
\begin{center}
\includegraphics[trim=0 0 0 0,clip,width=0.8\linewidth]{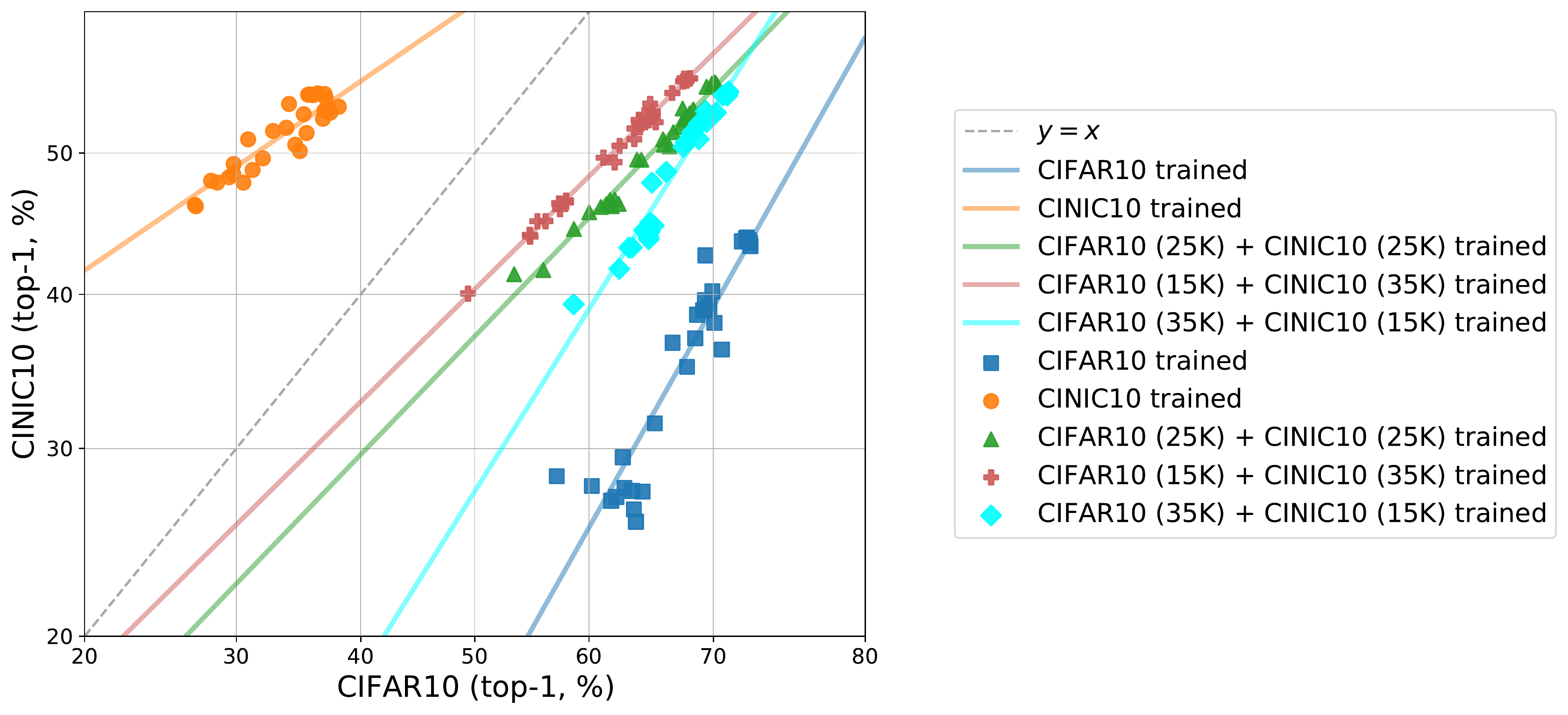}
\end{center}
\caption{\small \textbf{Mixing inputs from CIFAR-10 and CINIC-10 distributions also produces models with intermediate robustness.} Similar to our findings from the multimodal setting with CLIP pre-training, we also observe that for standard image classification tasks like CIFAR-10 and CINIC-10, combining data samples from these two distributions with varying ratios ends up diluting the robustness of the original sources. The training set size is fixed at 50K samples for all linear trends displayed in this plot.
}
\label{fig:cifar_cinic_input_mix}
\end{figure}

\clearpage
\section{Output Mixing}\label{app:output_mixing}
\subsection{More Experiments with CLIP Pre-training Data Sources}
\begin{figure}[h]
\begin{center}
\includegraphics[trim=0 0 0 0,clip,width=\linewidth]{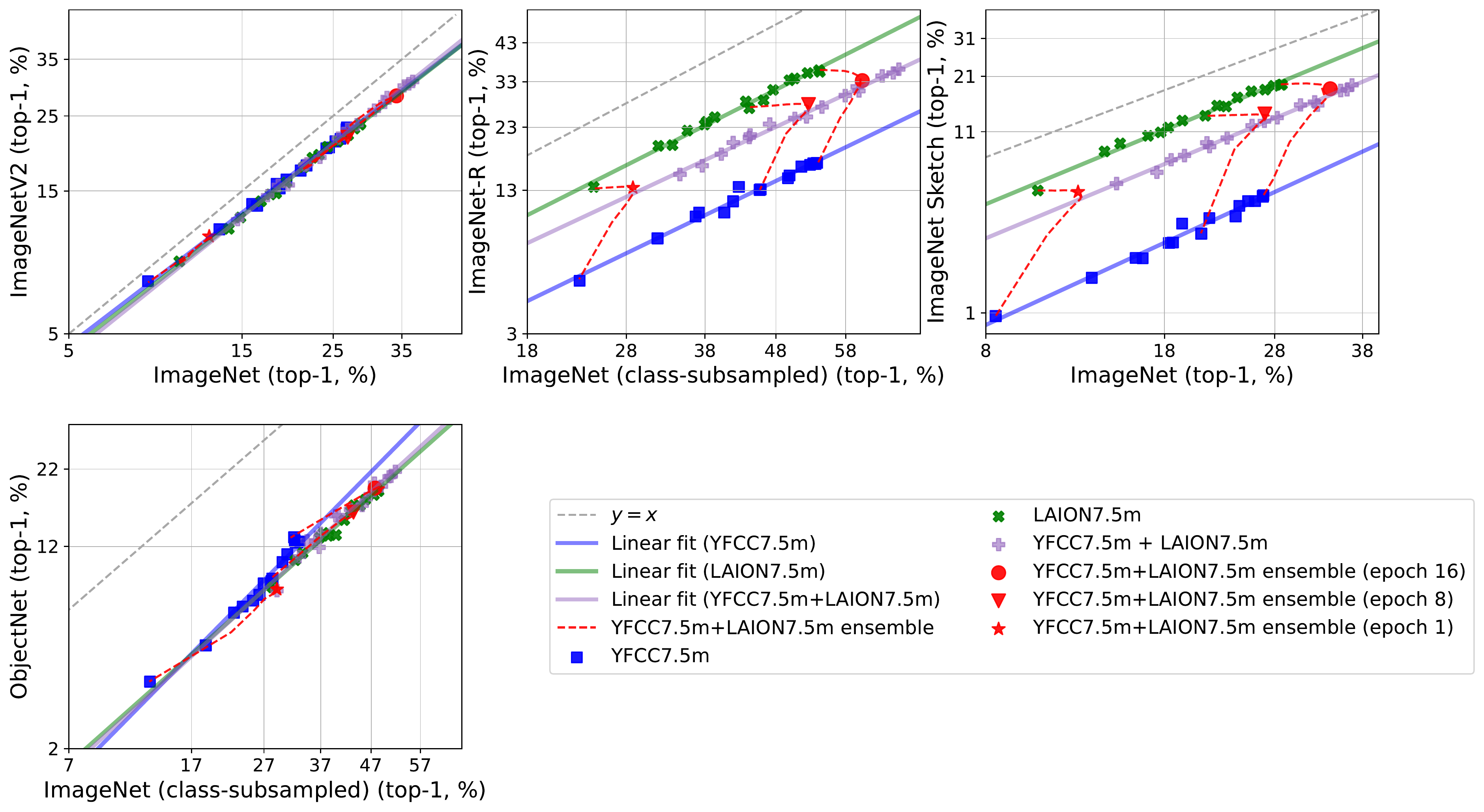}
\end{center}
\caption{\small \textbf{Full plot for Figure \ref{fig:yfcc_laion_output_mix} with all distribution shifts.} Ensemble outputs of two CLIP models trained on YFCC and LAION separately share the same linear trend as a \textit{single} model trained on the combined data mixture (with equal sample contribution from each source).
}
\end{figure}

\begin{figure}[h]
\begin{center}
\includegraphics[trim=0 0 0 0,clip,width=\linewidth]{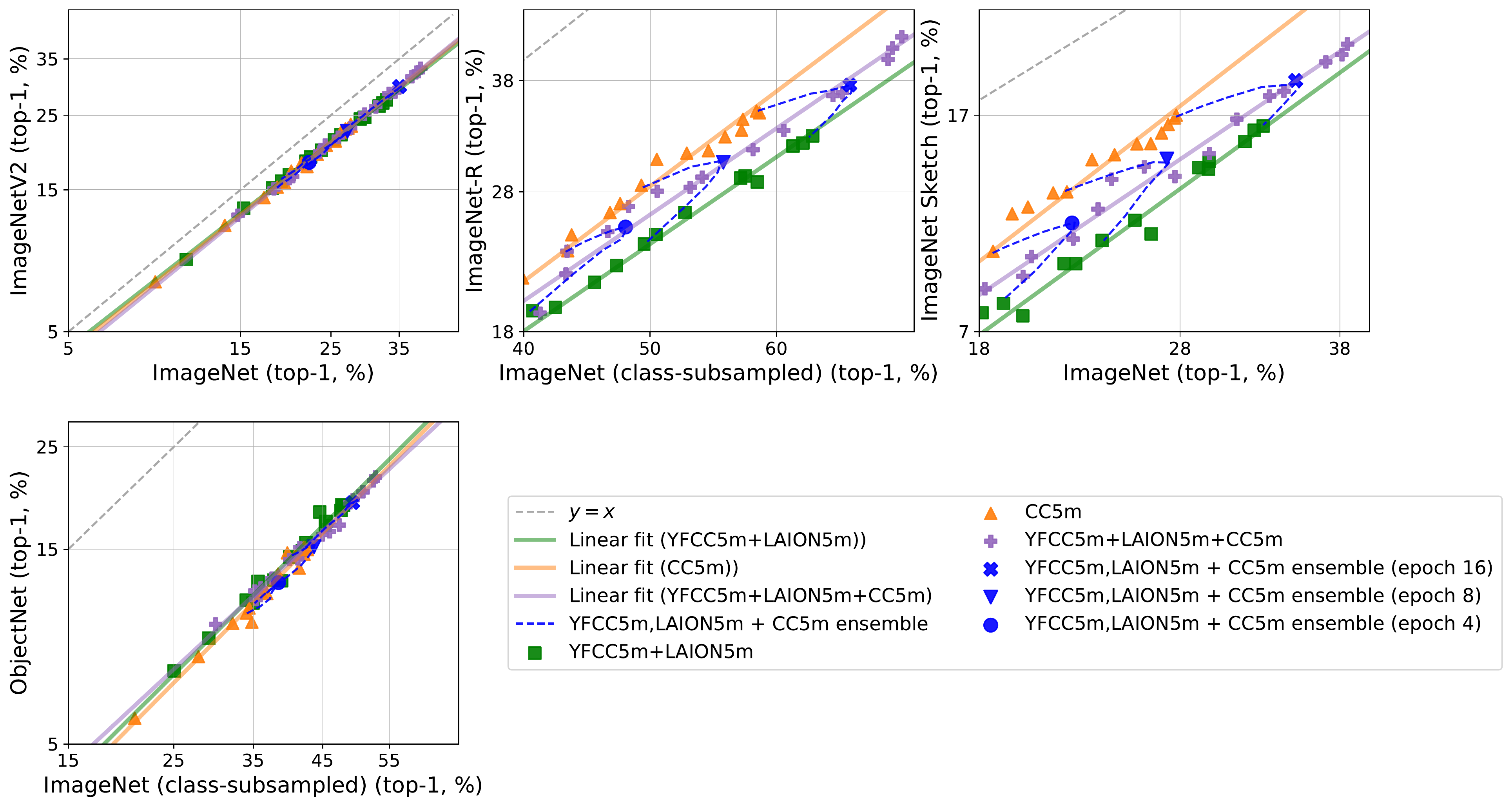}
\end{center}
\caption{\small \textbf{Full plot for Figure \ref{fig:yfcc_laion_cc_output_mix} with all distribution shifts.} Given an existing pre-training dataset that could be a mixture (e.g., YFCC-5M + LAION-5M, green line) and a new data source (e.g., CC-5M, orange line), we could use the ensemble outputs (blue markers) of two CLIP models that have been trained separately on these two data distributions, to estimate the linear trend for models that would be trained on \textit{all} the data (purple line).
}
\label{fig:yfcc_laion_cc_output_mix_full}
\end{figure}

\begin{figure}[h]
\begin{center}
\includegraphics[trim=0 0 0 0,clip,width=\linewidth]{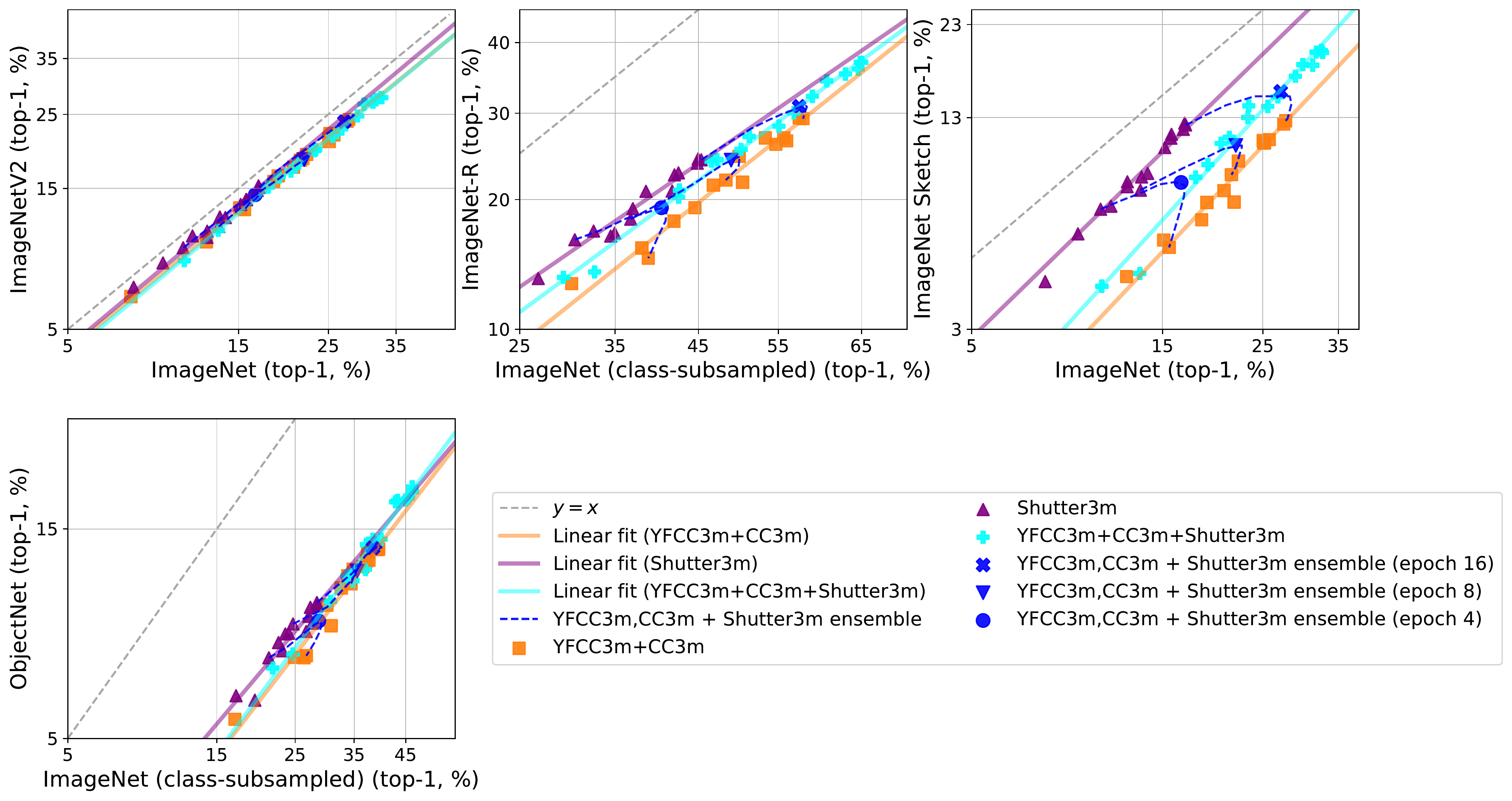}
\end{center}
\caption{ \small \textbf{Output mixing results for two CLIP models trained on YFCC-3M + CC-3M mixture and ShutterStock-3M respectively}. We repeat the experiment in Figure \ref{fig:yfcc_laion_cc_output_mix_full} for a different set of data sources (YFCC, ShutterStock, Conceptual Captions), taking 3M samples from each. The same output mixing phenomenon applies: the ensemble outputs of CLIPs trained on different data sources and dataset sizes (purple and orange lines), taken from the same epoch, lie on the linear trend of training a single model on the combined dataset made up of these three sources (cyan line). The two models' logit predictions are ensembled with equal weights (blue markers).
}
\end{figure}

\begin{figure}[h]
\begin{center}
\includegraphics[trim=0 0 0 0,clip,width=\linewidth]{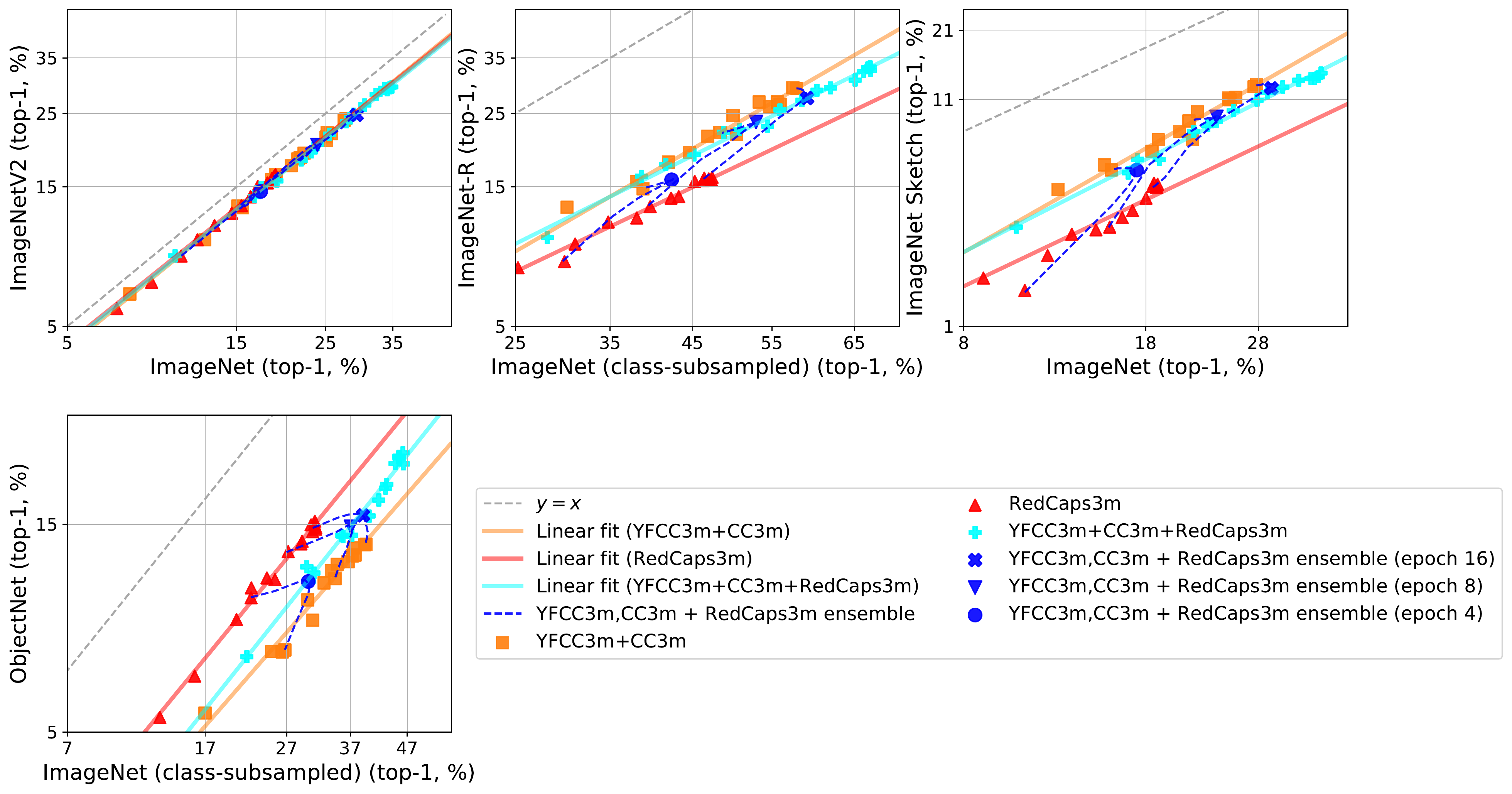}
\end{center}
\caption{ \small \textbf{Output mixing results for two CLIP models trained on YFCC-3M + CC-3M mixture and RedCaps-3M respectively}. Ensemble outputs of CLIPs trained on different data sources and dataset sizes (red and orange lines), taken from the same stage of training (i.e., epoch), lie on the linear trend of training a single model on the combined dataset made up of these three sources (cyan line), when the two models' logit predictions are ensembled with equal weights (blue markers).
}
\end{figure}


\begin{figure}[h]
\begin{center}
\includegraphics[trim=0 0 0 0,clip,width=\linewidth]{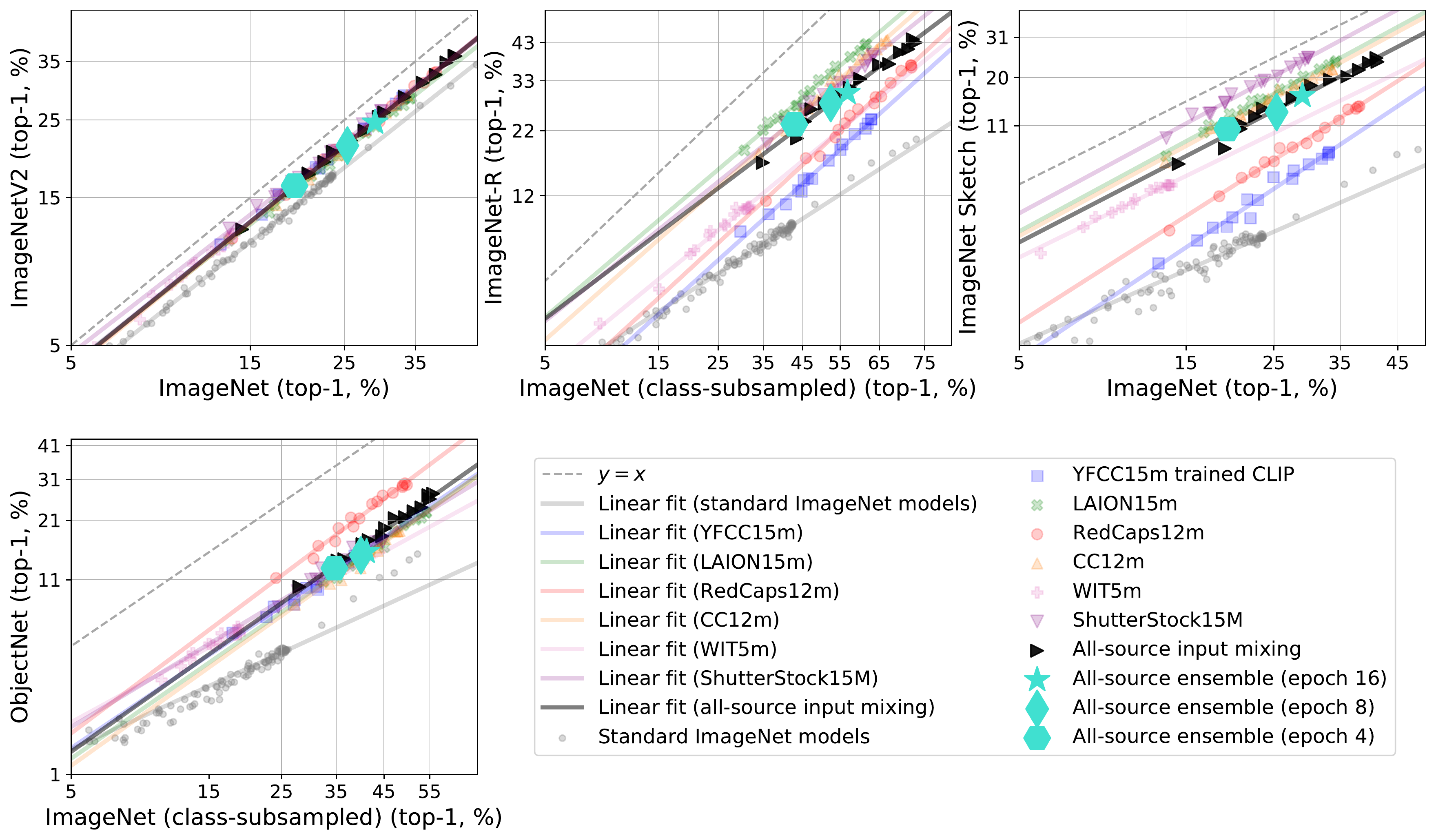}
\end{center}
\caption{ \small \textbf{Ensemble outputs of CLIPs trained separately on each of the data sources of interest share the same linear trend as a single CLIP model trained on the 6-source data mixture.} Following the input mixing setup in Figure \ref{ref:all_source_input_mix}, when we ensemble the logit predictions of six CLIP models, each trained on 2.7M samples randomly selected from a \textit{single} data source, with equal weights, we find that the ensemble outputs are also predictive of the linear trend of training CLIP models on a \textit{single} data mixture made up of 2.7M samples from each source.
}
\end{figure}

\FloatBarrier
\subsection{Experiments on CIFAR-10 \& CINIC-10}
\begin{figure}[h]
\begin{center}
\includegraphics[trim=0 0 0 0,clip,width=0.8\linewidth]{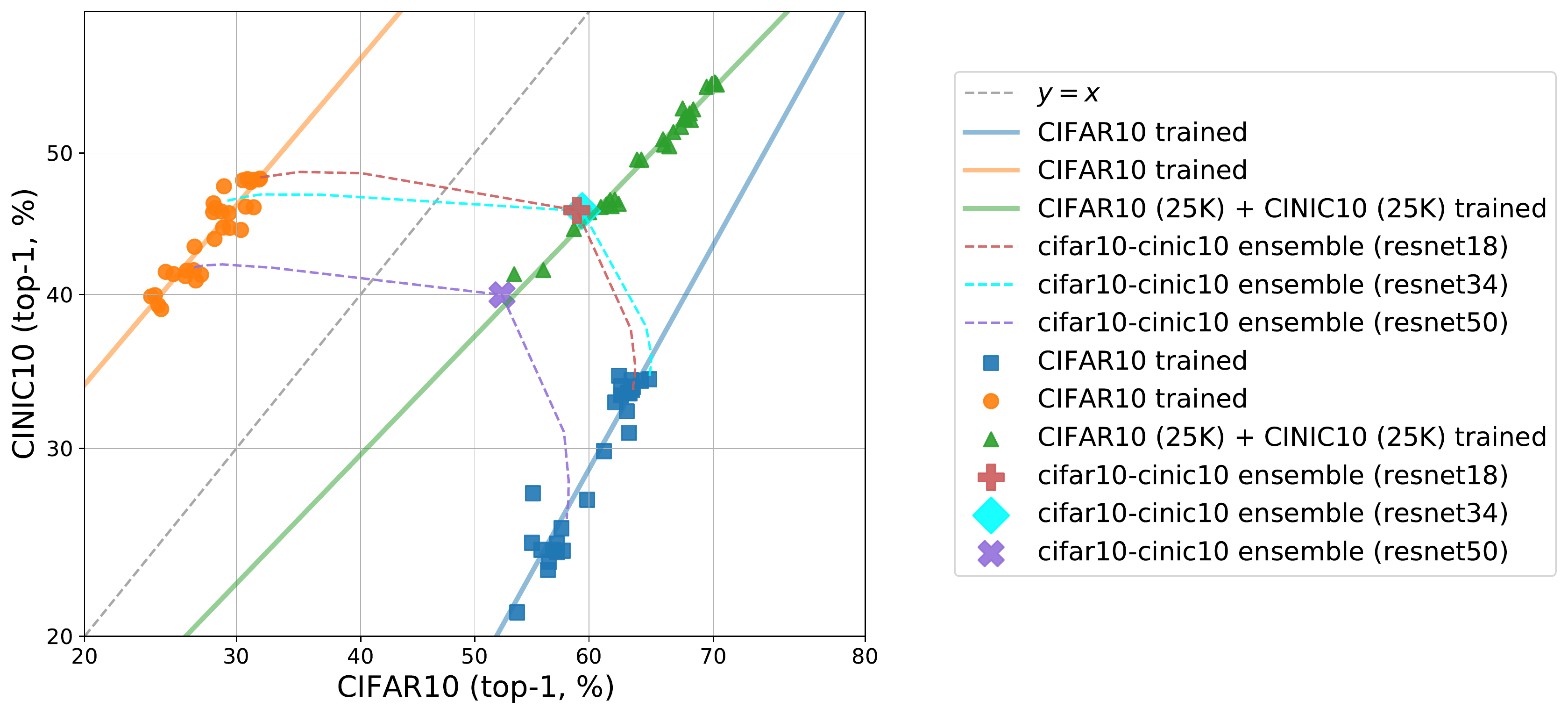}
\end{center}
\caption{\small \textbf{Ensembling outputs of two models trained separately on CIFAR10 and CINIC10 lie on the same linear trend as training from scratch on the combined data mixture (where each source contributed equally).} We combine the logit predictions of CINIC10-trained and CIFAR10-trained models that have the same architecture (e.g., ResNet-18, ResNet-34 and ResNet-50 in this case) with varying ensemble weights between 0 and 1 (dashed lines). Similar to our findings from the multimodal setting with CLIP, we also observe that when the predictions are combined with equal weights (markers on the dashed lines), the resulting test accuracies on the two corresponding test sets lie on the linear trend produced by training ResNets on a CIFAR10 + CINIC10 data mixture with equal number of samples from each source.
}
\end{figure}

\clearpage
\section{Proofs of the Analyses} 
\label{sec:proof}
We provide proofs of main theoretical claims in Section \ref{sec:theoretical_analysis}. 

\subsection{Proof of Theorem~\ref{thm:line}}
\label{sec:line_proof}

\begin{asmp}
    \label{asmp:moment}
    Under the hypotheses of Theorem~\ref{thm:line}, 
suppose there exists a positive constant $c$ such that the third moments are bounded by $ {\mathbb  E}_{(X,Y)\sim P_{\theta_1,\rho_1}}[(X_iY-\theta_{1,i})^3] \leq c {\mathbb  E}_{(X,Y)\sim P_{\theta_1,\rho_1}}[(X_iY-\theta_{1,i})^2]^{3/2}$, $ {\mathbb  E}_{(X,Y)\sim P_{\theta_2,\rho_2}}[(X_iY-\theta_{1,i})^3] \leq c {\mathbb  E}_{(X,Y)\sim P_{\theta_2,\rho_2}}[(X_iY-\theta_{1,i})^2]^{3/2}$, and $ {\mathbb  E}[(\hat\theta_{n,i}-\theta_{i})^3] \leq c {\mathbb  E} [(\hat\theta_{n,i}-\theta_{i})^2]^{3/2}$ for all $i\in[d]$. 
\end{asmp}
Under this assumption, we show that 
\begin{eqnarray}
     \Phi^{-1}({\rm Acc}_{\theta_1,\rho_1})  &=&  \cos(\theta_1,\theta)\rho_1\frac{\rho}{\xi} \sqrt{\frac{n}{d}} \,+\, O\Big(\,\frac{\exp({\frac{\rho_1^2\rho^2 n}{2\xi^2 d}})}{\sqrt{n}}\,\Big) \;, \text{ and }\\
     \Phi^{-1}({\rm Acc}_{\theta_2,\rho_2}) 
     &=&
     \cos(\theta_2,\theta)\rho_2\frac{\rho}{\xi} \sqrt{\frac{n}{d}} \, + \, O\Big(\, \frac{\exp({\frac{\rho_2^2\rho^2 n}{2\xi^2 d}})}{\sqrt{n}}\,\Big)  \;,
\end{eqnarray}
as it will make the first and second claims straightforward. 
For $(X_1,Y_1)\sim P_{\theta_1,\rho_1}$,  the first error event
is $\{{\rm sign}(\langle X_1,\hat \theta_{n,\xi} \rangle ) \neq Y_1\}= \{ \langle X_1, \hat \theta_{n,\xi} \rangle Y_1 \leq 0\}= \{ \langle  \theta+(\xi\|\theta\|/\rho \sqrt{n}) z,\theta_1+(\|\theta_1\|/\rho_1)z_1 \rangle  \leq 0 \}$, where we used the fact that 
 $\hat\theta_{n,\xi} = \theta + (\xi\|\theta\|/\rho\sqrt{n}) z$ and 
 $XY \overset{d}{=} \theta_1+(\|\theta_1\|/\rho_1)z_1$. 
Since the third moments are bounded, applying Berry-Esseen theorem, we get that the probability of error is bounded by $\Phi(-(\langle \theta,\theta_1\rangle \rho_1\rho\sqrt{n}/(\xi\|\theta_1\|\|\theta\|\sqrt{d}))) +O(1/\sqrt{d})$. This gives ${\rm Acc_{\theta_1,\rho_1}} = \Phi (\langle \theta,\theta_1\rangle \rho_1\rho\sqrt{n}/(\xi\|\theta_1\|\|\theta\| \sqrt{d}) )  +O(1/\sqrt{d}) $, and consequently
\begin{eqnarray}
    \Phi^{-1}({\rm Acc_{\theta_1,\rho_1}}) &=& \cos(\theta_1,\theta)\,\rho_1\,    \frac{\rho}{\xi}\sqrt{\frac{n}{d}} + O\Big(\,\frac{e^{\frac{\cos(\theta_1,\theta)^2\rho_1^2\rho^2 n}{2\xi^2 d}}}{\sqrt{n}}\,\Big)\;.
\end{eqnarray}
This proves the desired claim. 

\subsection{Proof of Theorem~\ref{thm:mix}}
\label{sec:mix_proof}

Recall that ${\rm Slope}(\hat\theta_{n}({\cal D}_{n,\theta,\rho})) = c \langle \theta_2,  \theta \rangle/\langle \theta_1, \theta \rangle$ for a positive constant $c>0$ that does \textit{not} depend on the training data. Although the slope only depends on the training data and algorithm through $\theta$, we write all the parameters including the sample size $n$ and the training SNR $\rho$ to make it explicit that the results hold for all variations of the sample size and the training algorithm within the class that we assume. 
It is sufficient to show that this is a monotonic function over $\theta$ when we linearly traverse from $\tilde\theta_1$ to $\tilde\theta_2$, i.e. $\theta(\alpha) = \alpha \tilde\theta_1+(1-\alpha)\tilde\theta_2 $ for $\alpha \in[0,1]$. 
Note that  $f(\alpha)= c \langle \theta_2,  \theta(\alpha) \rangle/\langle \theta_1, \theta(\alpha) \rangle = c_1 + c_2/\langle \theta_1, \theta(\alpha) \rangle$ for some $c_1$ and $c_2$ that do not depend on $\alpha$. 
The monotonicity  follows from the fact that the derivative is 
$$\frac{\partial f(\alpha )}{\partial \alpha} \;=\; - c_2 \frac{\langle \theta_1, \tilde\theta_1 - \tilde\theta_2 \rangle}{\langle \theta_1, \theta(\alpha) \rangle^2}\;,$$ 
whose sign does not change for any $\alpha$. 

\subsection{Proof of Theorem~\ref{thm:filter}}
\label{sec:filter_proof} 

    The train data distribution satisfies 
    $x_iy_i\sim {\cal N}(\theta_{\rm train},(\|\theta_{\rm train}\|/\rho)^2{\bf I})$. 
    Note that filtering does not change the distribution in $d-1$ dimensional subspace orthogonal to $\hat\theta_{\rm pretrained}$, due to rotation  invariance of a spherical Gaussian distribution. This implies that ${\mathbb E}[{\cal P}_\perp (\hat\theta_{\rm filtered}) ] ={\cal P}_\perp ( \theta_{\rm train}) $, where ${\cal P}_\perp$ denotes the projection operator to the $d-1$ dimensional subspace orthogonal to $\hat\theta_{\rm pretrained}$. On the other hand, on the direction of $\hat\theta_{\rm pretrained}$, the filtering increases the correlation in expectation: $|{\mathbb E}[{\cal P}_\parallel (\hat\theta_{\rm filtered}) ] - {\cal P}_\parallel ( \hat \theta_{\rm pretrained}) | \leq |{\cal P}_\parallel ( \theta_{\rm train} ) - {\cal P}_\parallel ( \hat \theta_{\rm pretrained}) |$, where ${\cal P}_\parallel$ denotes the projection operator to the one dimensional subspace spanned by $\hat\theta_{\rm pretrained}$. This implies that ${\mathbb E}[\hat\theta_{\rm filtered}]= \theta_{\rm train} + c \hat\theta_{\rm pretrained}$ for some positive $c$. It follows that ${\mathbb E}[\hat\theta_{\rm filtered}]$ is a convex interpolation between two vectors, each in the direction of $\theta_{\rm trian}$ and $\hat\theta_{\rm pretrained}$, respectively. We can apply Theorem~\ref{thm:mix} which gives that 
    $$ {\rm Slope}(\hat\theta_{\rm unfiltered}) < {\rm Slope}({\mathbb E}[\hat\theta_{\rm filtered}]) \leq {\rm Slope}(\hat\theta_{\rm pretrained}) \;,$$
    when ${\rm Slope}(\hat\theta_{\rm unfiltered}) <   {\rm Slope}(\hat\theta_{\rm pretrained}) $.

\end{document}